\pdfoutput=1
\documentclass{article}

\usepackage{amssymb,amsmath,amsthm, mathtools}

\usepackage{algorithm}
\usepackage{algorithmic}
\usepackage[table]{xcolor}
\usepackage{graphicx}
\usepackage{dblfloatfix}

\usepackage{float}
\usepackage[makeroom]{cancel}
\usepackage{enumitem}

\theoremstyle{plain}
\usepackage{makecell}
\usepackage{bm}
\usepackage{setspace}

\makeatletter
\newtheorem*{rep@theorem}{\rep@title}
\newcommand{\newreptheorem}[2]{%
\newenvironment{rep#1}[1]{%
 \def\rep@title{#2 \ref{##1}}%
 \begin{rep@theorem}}%
 {\end{rep@theorem}}}
\makeatother

\newreptheorem{theorem}{Theorem}
\newreptheorem{lemma}{Lemma}
\newreptheorem{corollary}{Corollary}
\newcommand{\xopt}{x^*}
\newcommand{\X}{\mathcalorigin{X}}
\newcommand{\diam}{\mathcalorigin{D}_{\X}}
\newcommand{\fopt}{f(\xopt)}

\newcommand{\F}[2]{F_{#1}(#2)}
\newcommand{\gradF}[2]{\nabla F_{#1}(#2)}

\newcommand{\gradFstoc}[2]{\tilde{\nabla} F_{#1}(#2, \xi_{\mathcal{S}_{t,k}})}

\newcommand{\G}[2]{G_{#1}(#2)}
\newcommand{\gradG}[2]{\nabla G_{#1}(#2)}
\newcommand{\g}[2][]{g \ifx\\#1\\\else _{#1}\fi (#2)}
\newcommand{\gradg}[2][]{\nabla g \ifx\\#1\\\else _{#1}\fi (#2)}
\newcommand{\avggradg}[2][]{\tilde{\nabla}  g \ifx\\#1\\\else _{#1}\fi (#2)}
 
\newcommand{\Astoc}[1][]{A(\xi \ifx\\#1\\\else _{#1}\fi )}
\newcommand{\Astoci}[1][]{A(\xi \ifx\\#1\\\else _{#1}\fi )}
\newcommand{\bstoc}[0]{b (\xi)}

\newcommand{\ifo}{\#(ifo) }
\newcommand{\lmo}{\#(lmo) }
\newcommand{\sfo}{\#(sfo) }

\let\norm\undefined 
\DeclarePairedDelimiter\norm{\lVert}{\rVert}

\newcommand{\normsqr}[1]{\norm{#1}^2}
\newcommand{\dotprod}[2]{\langle #1 ,\: #2  \rangle}
\DeclareMathOperator{\Tr}{Tr}
\newcommand{\Expk}[1]{\mathbb{E}_{t, k}\left[#1\right]}
\newcommand{\Exp}[2]{\mathbb{E}\left[#1 \ifx\\#2\\\else \vert #2\fi \right]}
\newcommand{\bigO}[1]{\mathcalorigin{O}\left(#1\right)}

\newcommand{\defeq}{\vcentcolon=}
\newcommand{\reals}[1]{\mathbb{R}^#1}
\newcommand{\R}{\mathbb{R}}
\renewcommand{\H}{\mathcalorigin{H}}

\newcommand{\lagr}{\mathcalorigin{L}}

\DeclareMathOperator*{\argmin}{arg\,min}
\DeclareMathOperator{\supp}{supp}
\DeclareMathOperator{\dist}{dist}

\allowdisplaybreaks

\def\nn{{ \nonumber }}

\DeclareMathAlphabet{\mathcalorigin}{OMS}{cmsy}{m}{n}

\newtheorem{theorem}{Theorem}
\numberwithin{theorem}{section}

\newtheorem{lemma}{Lemma}
\numberwithin{lemma}{section}

\numberwithin{remark}{section}

\newtheorem{corollary}{Corollary}
\numberwithin{corollary}{section}

\numberwithin{definition}{section}

\newtheorem{assumption}{Assumption}
\numberwithin{assumption}{section}

\renewenvironment{proof}{{\bfseries Proof}}{}

\usepackage{microtype}
\usepackage{graphicx}
\usepackage{subfigure}
\usepackage{booktabs} 
\usepackage[tableposition=below]{caption}

\usepackage{hyperref}



\usepackage[accepted]{icml2020}

\icmltitlerunning{Conditional gradient methods for stochastically constrained convex minimization}

\begin{document}

\twocolumn[
\icmltitle{Conditional gradient methods for \\ stochastically constrained convex minimization}



\icmlsetsymbol{equal}{*}

\begin{icmlauthorlist}
\icmlauthor{Maria-Luiza Vladarean}{epfl}
\icmlauthor{Ahmet Alacaoglu}{epfl}
\icmlauthor{Ya-Ping Hsieh}{epfl}
\icmlauthor{Volkan Cevher}{epfl}
\end{icmlauthorlist}

\icmlaffiliation{epfl}{\'Ecole Polytechnique F\'ed\'erale de Lausanne, Switzerland}

\icmlcorrespondingauthor{Maria-Luiza Vladarean}{maria-luiza.vladarean@epfl.ch}

\icmlkeywords{conditional gradient, frank-wolfe, convex optimization, first-order methods, stochastic constraints, almost sure constraints, sdp, semidefinite programming}

\vskip 0.3in
]



\printAffiliationsAndNotice{}  

\begin{abstract}
We propose two novel conditional gradient-based methods for solving structured stochastic convex optimization problems with a large number of linear constraints. Instances of this template naturally arise from SDP-relaxations of combinatorial problems, which involve a number of constraints that is polynomial in the problem dimension. The most important feature of our framework is that only a subset of the constraints is processed at each iteration, thus gaining a computational advantage over prior works that require full passes. Our algorithms rely on variance reduction and smoothing used in conjunction with conditional gradient steps, and are accompanied by rigorous convergence guarantees. Preliminary numerical experiments are provided for illustrating the practical performance of the methods.
\end{abstract}

\section{Introduction}
\label{sec:intro}

We study the following optimization template:
\begin{align}
    \label{eq:prob-form}
    &\min\limits_{x \in \X} \; f(x) \defeq \mathbb{E} \left[ f(x, \xi) \right] \nn \\
            &\Astoc x \in \bstoc \text{ almost surely},
\end{align}
where $ f(x, \xi): \mathbb{R}^{d} \to \mathbb{R}$ are random convex functions with $L_f$-Lipschitz gradient, $\X $ is a convex and compact set of $\R^d$, $\Astoc$ is an $m \times d$ matrix-valued random variable, and $\bstoc$ is a closed and projectable random convex set in $\R^m$.

Stochastically constrained convex optimization problems have recently gained interest in the machine learning community, as they provide a convenient and powerful framework for handling instances subject to a large, or even infinite number of constraints. For example, convex feasibility and optimal control problems have variables lying in a possibly infinite intersection of stochastic, projectable constraint sets, and hence are tackled through this lens by \citet{patrascu2017nonasymptotic}. \citet{xu2018primal} also studies the minimization of a stochastic objective controlled by a very large number of stochastic functional constraints, with application to stochastic linear programming. Finally, put forth by \citet{fercoq2019almost}, extensions to situations where the number of constraints is unknown (e.g.\ online settings) can be modeled by a template highly similar to \eqref{eq:prob-form}, thus addressing important applications such as online portofolio optimization. 


In this paper, we are interested in a class of applications which can benefit from being cast under template~\eqref{eq:prob-form}, namely semidefinite programs (SDPs) with a large number of linear constraints, such as arise in combinatorial optimization. A prominent example in machine learning is the $k$-means clustering problem, whose SDP relaxation comprises $\bigO{d^2}$ linear contraints where $d$ is the number of data samples \cite{peng2007approximating}. Maximum a posteriori estimation \cite{huang2014scalable}, quadratic assignment \cite{burer2005local}, k-nearest neighbor classification \cite{weinberger2009distance} and Sparsest cut \cite{arv} are other relevant SDP instances with linear constraints of order $\bigO{d^2}$ or $\bigO{d^3}$. Coupled with large input dimensions, such SDPs become problematic for most existing methods, due to the high cost of processing the constraints in-full during optimization. 

In contrast, casting such SDPs into \eqref{eq:prob-form} suggests a simple solution: treat the linear constraints stochastically by only accessing a random subset at each iteration, then solve \eqref{eq:prob-form} using cheap gradient methods. However, the bottleneck in executing this idea is that existing methods require the constraint $\mathcal{X}$ to posses an efficient projection oracle, whereas projecting onto the semidefinite cone amounts to full singular value decompositions, an operation that is prohibitively expensive even when the problem dimension is moderate. We hence ask:
\begin{quote}
\emph{Does a scalable method exist for solving \eqref{eq:prob-form} when the set $\mathcal{X}$ does not have an efficient projection oracle?}
\end{quote}

The present work resolves the above challenge in the positive. To this end, we borrow tools from the conditional gradient methods (CGM) \cite{originalfw, revisitingfw}, which rely on the generally cheaper \emph{linear minimization oracles} (lmo), rather than their projection counterparts. In particular, as the Lanczos method enables an efficient lmo computation for the spectrahedron \cite{arora2005fast}, CGMs have already been proposed for solving SDPs \cite{revisitingfw, garber2016sublinear, yurtsever2018conditional, locatello2019stochastic}. However, none of these methods can handle the constraints stochastically.

In a nutshell, our approach relies on \emph{homotopy smoothing} of the stochastic constraints in conjunction with CGM steps and a carefully chosen variance reduction procedure. Our analysis gives rise to two fully stochastic algorithms for solving problem~\eqref{eq:prob-form} without projections onto $\X$. The first of the methods, H-SFW1, relies on a single sample (or fixed batch size) for computing the variance-reduced gradient and converges at a cost of $\mathcal{O}(\epsilon^{-6})$ lmo calls and $\mathcal{O}(\epsilon^{-6})$ stochastic first-order oracle (sfo) calls. The second, H-SPIDERFW, uses batches of increasing size under the SPIDER variance reduction scheme~\cite{fang2018spider} and attains a theoretical complexity of $\mathcal{O}(\epsilon^{-2})$ lmo calls and $\mathcal{O}(\epsilon^{-4})$ sfo calls. The difference in convergence rates emphasizes the trade-off between between the computational cost per-iteration and the number of iterations required to reach the constrained optimum.

\section{Related Work}
\label{sec:related-work}
The present work lies at the intersection of several lines of research, whose relevant literature we describe in the following sections. 

\paragraph{Proximal Methods for Almost Sure Constraints.}
Problems of similar formulation to~\eqref{eq:prob-form} have been addressed in prior literature under the assumption of an efficient projection oracle over $\X$. Works such as ~\cite{patrascu2017nonasymptotic, xu2018primal, fercoq2019almost} solve these problems via stochastic proximal methods and attain a complexity of $\mathcal{O}(\epsilon^{-2})$ sfo calls, which is known to be optimal even for unconstrained stochastic optimization. In particular,  \citet{patrascu2017nonasymptotic} study convex constrained optimization, where the constraints are expressed as a (possibly infinite) intersection of stochastic, closed, convex and projectable sets $X_{\xi}$. Problem~\eqref{eq:prob-form} can be partly cast to this template, with $A(\xi)X \in b(\xi)$ being the homologues of $X_{\xi}$. However, our additional set $\X$ does not allow for efficient projections, making this framework inapplicable. 

\citet{xu2018primal} solves a convex constrained optimization problem over a convex set $\mathcal{X}$, subject to a large number of convex functional constraints $f_j, \, j = 1 \ldots M$. The functions $f_j$ are sampled uniformly at random during optimization, which corresponds to a finitely sampled instance of problem~\eqref{eq:prob-form} for affine $f_j$. However, we meet again with the limiting condition that projections onto $\mathcal{X}$ are computationally expensive in our setting.

Finally, \citet{fercoq2019almost} study convex problems subject to a possibly infinite number of almost sure linear inclusion constraints, a template which closely resembles ours. The limitation, however, lies in their inclusion of a proximal-friendly component in the objective used to perform stochastic proximal gradient steps. This assumption does not hold for our problem formulation.

\paragraph{Conditional Gradient Methods for Constrained Optimization.}
CGM was first proposed in the seminal work of~\citet{originalfw} and its academic interest has witnessed a resurgence in the past decade. 
The advantage of CGMs lies in the low per-iteration cost of the lmo, alongside their ability to produce sparse solutions. 
In comparison to projection-based approaches, the lmo is cheaper to compute for several important domains, amongst which the spectrahedron, polytopes emerging from combinatorial optimization, and $\ell_p$ norm-induced balls~\cite{garber2016projection}. 
Consequently, CG-type methods have been studied under varying assumptions in~\cite{hazan2008sparse, clarkson2010coresets, hazan2012projection,revisitingfw, lan2013complexity, balasubramanian2018zeroth}, and have been incorporated as cheaper subsolvers into algorithms which originally relied on projection oracles~\cite{lan2016conditional, liu2019nonergodic}.

CGMs have been further extended to the setting of convex composite minimization via the Augmented Lagrangian framework in~\cite{gidel2018frank, silveti2019generalized, yurtsever2019conditional}. Most relevant to our work, CGM-based quadratic penalty methods have been studied for convex problems with constraints of the form $Ax - b \in \mathcalorigin{K}$, where $\mathcalorigin{K}$ is a closed, convex set ~\cite{yurtsever2018conditional, locatello2019stochastic}. We compare our methods against the latter two in Section~\ref{sec:comparison}.

\paragraph{Variance Reduction.}
Stochastic variance reduction (VR) methods have gained popularity in recent years following their initial study by~\cite{roux2012stochastic, johnson2013accelerating,mahdavi2013mixed}. The VR technique relies on averaging schemes to reduce the variance inherent to stochastic gradients, with several different flavors having emerged in the past decade: SAG~\cite{schmidt2017minimizing}, SVRG~\cite{johnson2013accelerating}, SAGA~\cite{defazio2014saga}, SVRRG++~\cite{allen2016improved}, SARAH~\cite{nguyen2017sarah} and SPIDER~\cite{fang2018spider}. Such methods outperform the classical SGD under the finite sum model, a fact which led to their widespread use in large-scale applications and their further inclusion into other stochastic optimization algorithms (see for example~\cite{xiao2014proximal, svrf}).

Relevant to our setting, VR has been studied in the context of CGMs for convex minimization by \cite{mokhtari, svrf, locatello2019stochastic, spiderfw, zhang2019one}. 
The sfo complexity of these methods varies depending on the VR scheme, with the best guarantee being of order $\mathcal{O}(\epsilon^{-2})$~\cite{zhang2019one, spiderfw}. For a thorough comparison of the complexities, we refer the reader to Section~6 of \cite{spiderfw}.

\section{Preliminaries}
\label{sec:prelim}

\paragraph{Notation.}
We use $\| \cdot \|$ to express the Euclidean norm and $\dotprod{\cdot}{\cdot}$ to denote the corresponding inner product. The distance between a point $x$ and a set $\X$ is defined as $\dist(x, \X) \defeq \inf_{y \in \X} \norm{y - x}$. The indicator function of a set $\X$ is given by $\delta_{\X}(x) = 0, \text{ if } x\in \mathcal{X}$, and $\delta_{\mathcal{X}}(x)=+\infty$ otherwise. We denote by $\diam \defeq \max_{(x, y) \in \X \times \X} \norm{x - y}$ the diameter of a compact set~$\X$.

For the probabilistic setting, we denote by $\xi$ an element of our sample space and by $P(\xi)$ its probability measure. Unless stated otherwise, expectations will be taken with respect to $\xi$. We use $[n]$ to denote $\{1, 2, \dotsc n\}$.

Given a function $f: \R^d \to \R$ and $L > 0$, we say that $f$ is $L$-smooth if $\nabla f$ is Lipschitz continuous, which is defined as $\|\nabla f(x) - \nabla f(y)\| \leq L \|{x - y}\|,  \forall x, y \in \R^d$.

Following the same setup as in~\cite{fercoq2019almost}, the space of random variables used in this work is
\begin{equation*}
\H = \left\{ y(\xi)_{\xi} \in \reals{m} \mid \; \xi \in \mathbb{R}^n, \: \Exp{\normsqr{y(\xi)_{\xi}}}{} < + \infty \right\},
\end{equation*}
where the associated scalar product is given by \mbox{$\dotprod{x}{z} \defeq \Exp{x(\xi)^Tz(\xi)}{} = \int x(\xi)^Tz(\xi) dP(\xi)$}.

\paragraph{Smoothing.}

\citet{nesterov2005smooth} proposes a technique for obtaining smooth approximations parametrized by $\beta$, of a nonsmooth and convex function $g$. The resulting smoothed approximations take the following form:
\begin{equation*}
g_{\beta}(x) = \max_y \dotprod{y}{x} - g^*(y) - \frac{\beta}{2} \normsqr{y},
\end{equation*}
where $g^*(y) = \sup_{z}\dotprod{z}{y} - g(z)$ is the Fenchel conjugate of~$g$. Note that $g_{\beta}$ is convex and $\frac{1}{\beta}$-smooth.
The present work focuses on the case when $g(\cdot, \xi) = \delta_{b(\xi)}(\cdot)$. Smoothing the indicator function is studied in the context of proximal methods by~\citet{tran2018smooth, fercoq2019almost} and for deterministic CGM by~\citet{yurtsever2018conditional}. Of particular note is that when $g(x) = \delta_{\X}(x)$, the smoothed function becomes $g_{\beta}(x) = \frac{1}{2\beta} \dist(x, \X)^2$.

\paragraph{Optimality Conditions.}
We denote by $\xopt$ a solution to problem~\eqref{eq:prob-form} and say that $x$ is an $\epsilon$-solution for~\eqref{eq:prob-form} if it satisfies
\begin{equation}
\mathbb{E}\left[ \lvert f(x, \xi) - \fopt \rvert \right] \leq \epsilon, ~~ \sqrt{\mathbb{E} \left[ \dist(\Astoc x, \bstoc)^2 \right] } \leq \epsilon.\label{eq: optim}
\end{equation}
\paragraph{Oracles.}
Our complexity results are given relative to the following oracles:
\begin{itemize}
\item \textbf{Stochastic first order oracle (sfo):}
For a stochastic function $\mathbb{E}\left[ f(\cdot, \xi)\right]$ with $\xi \sim P$, the sfo returns a pair $(f(x, \xi), \nabla f(x, \xi))$ where $\xi$ is an i.i.d. sample from $P$~\cite{nemirovsky1983problem}.

\item \textbf{Incremental first order oracle (ifo):}
For finite-sum problems, the ifo takes an index $i \in [n]$ and returns a pair $(f_i(x), \nabla f_i(x))$.

\item \textbf{Linear minimization oracle (lmo):} The linear minimization oracle of set $\X$ is given by $\text{lmo}_{\X}(y) = \argmin_{x \in \X} \dotprod{x}{y}$  and  is assumed to be efficient to compute throughout this paper. This is the main projection-free oracle model for CGM-type methods.
\end{itemize}

\section{Algorithms \& Convergence}
\label{sec:algs-and-conv}

We now describe our proposed methods for solving~\eqref{eq:prob-form}, H-1SFW and H-SPIDER-FW, and provide their theoretical convergence guarantees. 

\subsection{Challenges and High-Level Ideas}
\label{subsec:high-level.ideas}
Problem~\eqref{eq:prob-form} can be rewritten equivalently as:
\begin{equation}
 \label{eq:obj-2}
        \min_{x \in \X}  F(x) \defeq \mathbb{E}  \left[f(x, \xi) +  \delta_{b(\xi)}(A(\xi)x)\right].
\end{equation}
Note that, in this form, our objective is non-smooth due to the indicator function. In order to leverage the conditional gradient framework, we \emph{smooth} $\delta_{b(\xi)}(A(\xi)x)$ through the technique described in Section~\ref{sec:prelim}, thus obtaining a surrogate objective $F_{\beta}$. For notational simplicity, we refer to the smoothed stochastic indicator as:
\begin{align}
 &g_{\beta}(\Astoc x) = \frac{1}{2\beta} \dist(\Astoc x, \bstoc) ^2.\label{eq: stoc_grad}
\end{align}

The minimization problem in terms of the smoothed objective thus becomes:
\begin{equation}
 \label{eq:approxim}
    \min_{x \in \X}  F_{\beta}(x) \defeq \mathbb{E} \left[  f(x, \xi) + g_{\beta} (\Astoc x)\right],
\end{equation}
with $\underset{\beta \rightarrow 0}{\lim} F_{\beta}(x) = F(x)$. A natural idea is to optimize smooth approximations $F_{\beta}$ which are progressively more accurate representations of $F$. To this end, we apply \emph{conditional gradient} steps in conjunction with decreasing the smoothness parameter $\beta$, practically emulating a homotopy transformation. As the iterations unfold our algorithms in fact approach the optimum of the original objective $F(x)$, as stated theoretically in  Sections~\ref{subsec:conv-mokhtari}~and~\ref{subsec:conv-spiderfw}.

However, the aforementioned idea faces a technical challenge: decreasing the smoothing parameter $\beta$ impacts the variance of the stochastic gradients $\nabla_x g_{\beta}(A(\xi)x)$, which increases proportionally. This issue has previously been signaled in the work of \cite{fercoq2019almost}, where the authors address a similar setting using stochastic proximal gradient steps. Here, the problem is further aggravated by the use of lmo calls over $\mathcal{X}$, as it is well-known that CGMs are sensitive to non-vanishing gradient noise \cite{mokhtari}.

Our solution is to simply perform VR on the stochastic gradients and theoretically establish a rate for $\beta \rightarrow 0$ in order to counteract the exploding variance. Precisely, we show how two different VR schemes can be successfully used within the homotopy framework:
\begin{itemize}
    \item H-1SFW uses one stochastic sample to update a gradient estimator at every iteration, following the technique introduced in~\cite{mokhtari}. Depending on computational resources, the single-sample model can be extended to a fixed batch size with the same convergence guarantees. 
    \item H-SPIDER-FW uses stochastic minibatches of increasing size to compute the gradient estimator, using the technique proposed in~\cite{fang2018spider}.
\end{itemize} 

The theoretical results characterizing our algorithms are presented in sections~ref{sec:h1sfw}~and~\ref{sec:hspfw}. First, we state the rate at which the $\beta$-dependent stochastic gradient noise vanishes under each VR scheme in lemmas~\ref{lem:mokhtari} and ~\ref{lem:spiderfw-finite-sum-var}. The main convergence theorems~\ref{thm:mokhtari}~and~\ref{thm:spiderfw} then describe the performance of our algorithms in terms of the quantity $\mathbb{E}\left[ S_{\beta_k}(x_k, \xi)\right] \defeq \mathbb{E}\left[ F_{\beta_k}(x_k, \xi) - \fopt\right]$, called the \emph{smoothed gap}. Finally, in corollaries~\ref{corollary:mokhtari-residual-and-feasibility}~and~\ref{cor:spiderfw} we translate the aforementioned results into guarantees over the objective residual and constraint feasibility. All proofs are deferred to the appendix due to lack of space.

\subsection{Technical Assumptions}
\begin{assumption}
The stochastic functions $f(\cdot, \xi)$ are convex and $L_f$-smooth. This further implies that $f(x)$ is $L_f$-smooth. 
\end{assumption} 

\begin{assumption}
\label{ass:fstoc_finite_var}
The stochastic gradients $\nabla f(x, \xi)$ are unbiased and have a uniform variance bound $\sigma_f^2$. Formally,
\vspace{-5mm}
\begin{align}
        &\mathbb{E} \left[ \nabla f(x, \xi) \right]  = \nabla f(x) \nn \\
        & \mathbb{E} \left[ \normsqr{\nabla f(x, \xi) - \nabla f(x)} \right] \leq \sigma_f^2 < +\infty. \label{eq: var_bound2}
\end{align}

\end{assumption}

\begin{assumption}
The domain $\mathcal{X}$ is convex and compact, with diameter $\diam$.
\end{assumption}

\begin{assumption}
Slater's condition holds for problem~\eqref{eq:obj-2}. Specifically, letting $G: \H \rightarrow \R \cup \{\infty\}, \; G(Ax) \defeq \mathbb{E}\left[ \delta_{\bstoc}(\Astoc x)\right]$, with the linear operator $A: \R^d \rightarrow \H$ defined as $(Ax)(\xi) \defeq A(\xi)x,\; \forall x$, we require that
\begin{equation*}
    0 \in \mathrm{sri}\left( \mathrm{dom}(G) - A\,\mathrm{dom}(f)\right),
\end{equation*}
where $\mathrm{sri}$ is the strong relative interior of the set~\cite{bauschke2011convex}.
\end{assumption}

\begin{assumption}
The spectral norm of the stochastic linear operator $A(\xi)$ is uniformly bounded by a constant $L_A$:
\vspace{-1mm}
\begin{equation}
    L_A \defeq \sup_{\xi} \norm{A(\xi)}^2 < +\infty. \nn
\end{equation}
This assumption is also made in ~\cite{fercoq2019almost}.
\end{assumption}

\subsection{H(omotopy)-1SFW}
\label{sec:h1sfw}
We now describe our first algorithm which relies on the VR scheme proposed in \cite{mokhtari}, and whose advantage lies in a simple update rule and single-loop structure. 

\subsubsection{Gradient Estimator Model}
\label{subsub:mokhtari-grad-estim}
We denote the gradient estimator by $d_k$, and remark that it is biased with respect to the true gradient $\nabla F_{\beta}(x_k)$ and exhibits a vanishing variance. This scheme achieves VR while conveniently considering only one stochastic constraint at a time.
The estimator update rule is given by
\begin{equation*}
    d_k = (1 - \rho_k) d_{k-1} + \rho_k \nabla F_{\beta_k}(x_k, \xi_k),
\end{equation*}
where $\nabla F_{\beta_k}(x_k, \xi_k) = \nabla f(x_k, \xi_k) + \nabla g_{\beta_k}(A(\xi_k)X_k)$, and $\rho_k$ is a decaying convex combination parameter. The proposed method is provided via pseudocode in Algorithm~\ref{alg:homotopy-mokhtari}.

\begin{algorithm}[tb]
\begin{spacing}{1.3}
\begin{algorithmic}
\STATE \textbf{Input:} $x_1 \in \X, \beta_0 > 0, P(\xi)$
\FOR{$k = 1, 2, \dots, $}
\STATE Set $\rho_k$, $\beta_k$ and $\gamma_k$; sample $\xi_k \sim P(\xi)$
\STATE $d_k = (1-\rho_k)d_{k-1}+ \rho_k \nabla_x F_{\beta_k}(x_k, \xi_k)$
\STATE $w_k = \text{lmo}_{\X}(d_k)$
\STATE $x_{k+1} = x_k + \gamma_k (w_k - x_k)$.
\ENDFOR
\end{algorithmic}
\end{spacing}
\caption{H-1SFW}
\label{alg:homotopy-mokhtari}
\end{algorithm}

\subsubsection{Convergence Results}
\label{subsec:conv-mokhtari}

Before stating the results, we remark that Lemma~\ref{lem:mokhtari} is the counterpart of Lemma~1~in~\cite{mokhtari} and its proof follows a similar route, up to bounding $\beta$-dependent quantities. It is worth noting that in our case, handling the stochastic linear inclusion constraints results in a rate surcharge factor of $\bigO{k^{1/3}}$. 

\vspace{3mm}
\begin{lemma}
\label{lem:mokhtari}
Let $\rho_k = \frac{3}{(k+5)^{2/3}}, ~~ \gamma_k = \frac{2}{k + 1}, ~~ \beta_k = \frac{\beta_0}{(k+1)^{1/6}}, \, \beta_0 > 0$ in Algorithm~\ref{alg:homotopy-mokhtari}. Then, for all $k$,
\begin{equation*}
\mathbb{E} \left[ \| \nabla F_{\beta_k}(x_k) - d_k \|^2 \right] \leq \frac{C_1}{(k + 5)^{1/3}},
\end{equation*}
where \scalebox{0.92}{$\!\begin{aligned}[t]
   C_1=\max \Bigg\{ 6&^{1/3}\|\nabla F_{\beta_0}(x_0)-d_0\|^2, \\ 
   &2\left[  18\sigma_f^2 +  112L_f^2\diam^2 +  \frac{522L_A^2\diam^2}{\beta_0^2} \right]\Bigg\}
  \end{aligned}$}  
\end{lemma}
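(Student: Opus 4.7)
The plan is to mimic the single-sample variance reduction analysis of \cite{mokhtari}, modified to track the time-varying smoothed objective $F_{\beta_k}$. Set $e_k \defeq \mathbb{E}[\|\nabla F_{\beta_k}(x_k) - d_k\|^2]$. From the estimator update, I would decompose
\[ \nabla F_{\beta_k}(x_k) - d_k = (1-\rho_k)\bigl[\nabla F_{\beta_k}(x_k) - d_{k-1}\bigr] + \rho_k\bigl[\nabla F_{\beta_k}(x_k) - \nabla F_{\beta_k}(x_k,\xi_k)\bigr], \]
add and subtract $\nabla F_{\beta_{k-1}}(x_{k-1})$ inside the first bracket, take squared norms, condition on the $\sigma$-algebra generated by $(\xi_1,\dots,\xi_{k-1})$ to annihilate the cross term with the unbiased stochastic piece (Assumption~4.2), and apply Young's inequality with a parameter $\eta_k \sim \rho_k$ to decouple the previous-error contribution from the drift. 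This yields a master recursion of the form
\[ e_k \leq (1-\rho_k/2)\, e_{k-1} + \frac{c}{\rho_k}\, D_k + \rho_k^2\, V_k, \]
where $V_k$ bounds the per-sample stochastic variance and $D_k$ bounds the squared drift of the true smoothed gradient between consecutive iterations.

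For $V_k$, the explicit form $\nabla g_{\beta_k}(A(\xi)x) = \tfrac{1}{\beta_k} A(\xi)^\top(A(\xi)x - \Pi_{b(\xi)}(A(\xi)x))$ together with Assumptions~4.2 and~4.5 and compactness of $\X$ gives $V_k \leq 2\sigma_f^2 + c' L_A^2 \diam^2 / \beta_k^2$. For $D_k$, I would split
\[ \nabla F_{\beta_k}(x_k) - \nabla F_{\beta_{k-1}}(x_{k-1}) = \bigl[\nabla F_{\beta_k}(x_k) - \nabla F_{\beta_k}(x_{k-1})\bigr] + \bigl[\nabla F_{\beta_k}(x_{k-1}) - \nabla F_{\beta_{k-1}}(x_{k-1})\bigr]. \]
The first piece is controlled by $(L_f + L_A/\beta_k)\|x_k - x_{k-1}\|$, and the Frank--Wolfe step ensures $\|x_k - x_{k-1}\|\leq \gamma_{k-1}\diam$. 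The second piece is a pure $\beta$-drift handled via the identity $\nabla g_{\beta_k}(y) - \nabla g_{\beta_{k-1}}(y) = (1/\beta_k - 1/\beta_{k-1})(y - \Pi_{b(\xi)}(y))$, combined with a uniform bound on $\dist(A(\xi)x, b(\xi))$ that follows from compactness of $\X$ together with Slater's condition (Assumption~4.4).

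Substituting $\rho_k = 3/(k+5)^{2/3}$, $\gamma_k = 2/(k+1)$, and $\beta_k = \beta_0/(k+1)^{1/6}$ makes both $D_k/\rho_k$ and $\rho_k^2 V_k$ of order $(k+5)^{-4/3}$, with leading constants proportional to $\sigma_f^2$, $L_f^2\diam^2$, and $L_A^2\diam^2/\beta_0^2$. The proof then closes by induction: assuming $e_{k-1} \leq C_1/(k+4)^{1/3}$, one checks $(1-\rho_k/2)(k+4)^{-1/3} + \bigO{(k+5)^{-4/3}} \leq C_1/(k+5)^{1/3}$ with $C_1$ chosen as in the statement. The main obstacle is the $\beta$-drift bookkeeping: the naive smoothness constant $L_f + L_A/\beta_k$ of $F_{\beta_k}$ explodes as $\beta_k \to 0$, and the announced $\bigO{k^{1/3}}$ surcharge relative to Mokhtari's rate is precisely the cost of the schedule $\beta_k \propto (k+1)^{-1/6}$, which is the fastest decay that still keeps the drift and variance terms compatible with the induction. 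Pinning down the coefficients $18$, $112$, and $522$ in $C_1$ is then routine arithmetic in the Young constants.
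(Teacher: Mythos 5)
Your overall strategy is exactly the paper's: the same decomposition of $\nabla F_{\beta_k}(x_k)-d_k$ into the stochastic-noise piece, the drift piece $\nabla F_{\beta_k}(x_k)-\nabla F_{\beta_{k-1}}(x_{k-1})$, and the previous-error piece; the same conditioning to kill the cross terms; the same Young step with parameter $\sim\rho_k$ to reach the master recursion $e_k \leq (1-\rho_k/2)e_{k-1} + \bigO{D_k/\rho_k} + \rho_k^2 V_k$; and the same closing via a recursion/induction lemma. The one substantive imprecision in the setup is the source of the uniform bound on $\dist(A(\xi)x,b(\xi))$: what the paper actually uses is $\norm{A(\xi)x-\Pi_{b(\xi)}(A(\xi)x)}\leq\norm{A(\xi)x-A(\xi)\xopt}\leq \sqrt{L_A}\,\diam$, i.e.\ almost-sure feasibility of the optimizer $\xopt$ plus Assumption 4.5, not Slater's condition (which enters only later, for translating the smoothed gap into suboptimality/feasibility). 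That is cosmetic.

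There is, however, a concrete arithmetic error in the rate bookkeeping: after substituting the schedules, the residual terms are \emph{not} of order $(k+5)^{-4/3}$. The variance term satisfies $\rho_k^2 V_k \supseteq \rho_k^2\,L_A^2\diam^2/\beta_k^2 = \Theta\bigl((k+5)^{-4/3}\cdot k^{1/3}\bigr) = \Theta\bigl((k+5)^{-1}\bigr)$, and the dominant part of the drift, $\gamma_{k-1}^2 L_A^2\diam^2/\beta_{k-1}^2 = \Theta(k^{-5/3})$, gives $D_k/\rho_k = \Theta\bigl(k^{-5/3}(k+5)^{2/3}\bigr) = \Theta\bigl((k+5)^{-1}\bigr)$ as well. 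Your stated order $(k+5)^{-4/3}$ is precisely Mokhtari's $\beta$-free order and is inconsistent with your own (correct) observation that the schedule $\beta_k\propto(k+1)^{-1/6}$ costs a $k^{1/3}$ surcharge; if the residuals really were $\bigO{(k+5)^{-4/3}}$, the recursion with $\rho_k\sim(k+5)^{-2/3}$ would yield the strictly better rate $\bigO{(k+5)^{-2/3}}$. With the corrected order $(k+5)^{-1}$ the argument goes through as in the paper: the recursion $e_k\leq(1-\tfrac{3}{2(k+5)^{2/3}})e_{k-1}+\tfrac{b}{k+5}$ closes at rate $(k+5)^{-(1-2/3)}=(k+5)^{-1/3}$ with $C_1=\max\{6^{1/3}e_0,\,b/(c-1)\}=\max\{6^{1/3}e_0,\,2b\}$, which is where the factor $2$ in front of $18\sigma_f^2+112L_f^2\diam^2+522L_A^2\diam^2/\beta_0^2$ comes from. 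So the slip does not invalidate the approach, but as written the induction you describe is targeting the right rate for the wrong reason.
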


\vspace{3mm}

\begin{theorem}
\label{thm:mokhtari}
Consider Algorithm~\ref{alg:homotopy-mokhtari} with parameters $\rho_k = \frac{3}{(k+5)^{2/3}}, ~~ \gamma_k = \frac{2}{k + 1}, ~~ \beta_k = \frac{\beta_0}{(k+1)^{1/6}}, \, \beta_0 > 0$ (identical to Lemma~\ref{lem:mokhtari}). Then, for all $k$,
\begin{equation*}
\mathbb{E}\left[ S_{\beta_{k}}(x_{k+1}) \right] \leq \frac{C_2}{k^{1/6}},
\end{equation*}
where \scalebox{0.92}{$C_2 = \max \left\{ S_0(x_1), \;b=2 \diam\sqrt{C_1} + 2\diam^2\left( L_f + \frac{L_A}{\beta_0} \right)\right\}$} and $C_1$ is defined in Lemma~\ref{lem:mokhtari}.
\end{theorem}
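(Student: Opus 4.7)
My plan is to prove the bound by induction on $k$, combining the standard Frank--Wolfe descent analysis with the variance bound of Lemma~\ref{lem:mokhtari} and a careful handling of the smoothing transition.

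First, I apply the descent lemma for $F_{\beta_k}$, which is $L_{\beta_k} \defeq L_f + L_A/\beta_k$-smooth (the $g_{\beta_k}$ piece is $1/\beta_k$-smooth, and composition with $A(\xi)$ contributes a factor of $L_A = \sup_\xi \norm{A(\xi)}^2$). Combining the resulting inequality with the LMO optimality $\dotprod{d_k}{w_k} \leq \dotprod{d_k}{\xopt}$, convexity of $F_{\beta_k}$, and the feasibility of $\xopt$ (which ensures $F_{\beta_k}(\xopt) = \fopt$ and hence $\dotprod{\gradF{\beta_k}{x_k}}{\xopt - x_k} \leq -S_{\beta_k}(x_k)$), I arrive at the recursion
\begin{equation*}
S_{\beta_k}(x_{k+1}) \leq (1-\gamma_k)\, S_{\beta_k}(x_k) + 2\gamma_k \diam \norm{d_k - \gradF{\beta_k}{x_k}} + \frac{\gamma_k^2 L_{\beta_k} \diam^2}{2}.
\end{equation*}
Taking expectation, the noise term is controlled via Jensen applied to Lemma~\ref{lem:mokhtari}, giving $\mathbb{E}\norm{d_k - \gradF{\beta_k}{x_k}} \leq \sqrt{C_1}/(k+5)^{1/6}$.

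To close the induction, I bridge the smoothing levels through
\begin{equation*}
S_{\beta_k}(x) - S_{\beta_{k-1}}(x) \;=\; \tfrac{1}{2}\!\left(\tfrac{1}{\beta_k} - \tfrac{1}{\beta_{k-1}}\right)\mathbb{E}_\xi\!\left[\dist(A(\xi)x, b(\xi))^2\right] \;\leq\; \tfrac{L_A \diam^2}{2}\!\left(\tfrac{1}{\beta_k} - \tfrac{1}{\beta_{k-1}}\right),
\end{equation*}
where the last inequality uses the a.s.\ feasibility of $\xopt$ to get $\dist(A(\xi)x_k, b(\xi)) \leq \norm{A(\xi)(x_k-\xopt)} \leq \sqrt{L_A}\,\diam$. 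The base case $\mathbb{E}[S_{\beta_0}(x_1)] \leq C_2$ is then immediate from the choice of $C_2$.

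The inductive step rests on a sharp algebraic estimate: with $\gamma_k = 2/(k+1)$, the weighted AM--GM inequality $k^{1/6}(k-1)^{5/6} \leq \tfrac{k}{6} + \tfrac{5(k-1)}{6} = k - \tfrac{5}{6}$ yields
\begin{equation*}
\frac{1}{k^{1/6}} - \frac{(k-1)^{5/6}}{k+1} \;\geq\; \frac{11}{6\, k^{1/6}(k+1)},
\end{equation*}
which is exactly the slack needed so that the $2\diam\sqrt{C_1}$ piece of $C_2$ dominates the variance contribution. The $2\diam^2(L_f + L_A/\beta_0)$ piece is then calibrated to absorb the remaining ambient-smoothness term $\gamma_k^2 L_{\beta_k} \diam^2/2$ (summable since $L_{\beta_k}\gamma_k^2 \sim k^{-11/6}$) together with the smoothing-transition contribution. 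The main technical obstacle lies in this last coordination step: the three error sources --- variance noise, growing smoothness $L_{\beta_k} \sim k^{1/6}$, and the $\beta$-transition $1/\beta_k - 1/\beta_{k-1} \sim k^{-5/6}$ --- all decay at different polynomial rates, so the schedules $\gamma_k, \beta_k$ and the precise definition of $C_2$ must be tuned jointly to ensure that every contribution fits under the target $1/k^{1/6}$ rate.
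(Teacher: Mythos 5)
Your overall scaffolding (descent lemma for the $(L_f+L_A/\beta_k)$-smooth surrogate, LMO optimality, Jensen on Lemma~\ref{lem:mokhtari}, then the recursion lemma) matches the paper, but there is a genuine gap in how you bridge the smoothing levels, and it is fatal to the rate. You replace $S_{\beta_k}(x_k)$ by $S_{\beta_{k-1}}(x_k)$ at the cost of the additive term $\tfrac{L_A\diam^2}{2}\bigl(\tfrac{1}{\beta_k}-\tfrac{1}{\beta_{k-1}}\bigr)$, which with $\beta_k=\beta_0(k+1)^{-1/6}$ is of order $k^{-5/6}$. In a recursion of the form $\phi_k\le(1-\tfrac{2}{k+1})\phi_{k-1}+b_k$, the damping from step $j$ to step $k$ is roughly $(j/k)^2$, so an additive source $b_j\sim j^{-5/6}$ accumulates to $k^{-2}\sum_j j^{7/6}\sim k^{1/6}$: the resulting bound \emph{grows}, and no tuning of the constants in $C_2$ can rescue it. To get the target $k^{-1/6}$ rate via Lemma~\ref{lemma: recursion} (with $\alpha=1$, $c=2$) every additive term must be $O(k^{-7/6})$, which your transition term is not.

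The paper's proof avoids this precisely by \emph{not} bounding $\dist(A(\xi)x_k,b(\xi))^2$ by $L_A\diam^2$ in the transition. Instead of plain convexity $\dotprod{\nabla F_{\beta_k}(x_k)}{\xopt-x_k}\le -S_{\beta_k}(x_k)$ (which is what you use, and which discards useful negativity), it applies the sharper smoothing inequality of Technical observation~\ref{list:technical-obs-trandinh-relationto-g}, $g(z_1)\ge g_{\beta}(z_2)+\dotprod{\nabla g_{\beta}(z_2)}{z_1-z_2}+\tfrac{\beta}{2}\normsqr{\lambda^*_{\beta}(z_2)}$, which produces the extra term $-\tfrac{\gamma_k\beta_k}{2}\mathbb{E}\bigl[\normsqr{\lambda^*_{\beta_k}(A(\xi)x_k)}\bigr]$ in the descent inequality. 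The positive transition term is kept symbolically as $\tfrac{(1-\gamma_k)(\beta_{k-1}-\beta_k)}{2}\mathbb{E}\bigl[\normsqr{\lambda^*_{\beta_k}(A(\xi)x_k)}\bigr]$ and cancelled against it: one checks $(1-\gamma_k)(\beta_{k-1}-\beta_k)-\gamma_k\beta_k<0$ for the chosen schedules, after which the combined $\lambda^*$ contribution can simply be dropped. This exact cancellation is the missing idea in your argument. Two smaller points: your noise term carries a factor $2\gamma_k\diam$ where the paper's decomposition (splitting $\dotprod{\nabla F_{\beta_k}(x_k)-d_k}{\cdot}$ across $w_k-\xopt$ and $\xopt-x_k$ and recombining with $\dotprod{d_k}{\xopt-x_k}$) yields only $\gamma_k\diam$, so even absent the main issue you would prove the statement with $4\diam\sqrt{C_1}$ rather than the stated $2\diam\sqrt{C_1}$; and your AM--GM induction step requires $C_2$ strictly larger than $b$ at $k=1$, which the stated $C_2=\max\{S_0(x_1),b\}$ does not guarantee.
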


\vspace{3mm}

\begin{corollary}
\label{corollary:mokhtari-residual-and-feasibility}
The expected convergence in terms of objective suboptimality and feasibility of Algorithm~\ref{alg:homotopy-mokhtari} is, respectively, 
\begin{align*}
    &\mathbb{E}\left[ \lVert f(x_k, \xi) - \fopt \rVert \right] &&\hspace{-7mm}\in \bigO{k^{-1/6}} \\
    &\sqrt{\mathbb{E} \left[ \dist(\Astoc x_k, \bstoc)^2 \right] } &&\hspace{-7mm}\in \bigO{k^{-1/6}}.
\end{align*} 
Consequently, the oracle complexity is $\sfo \in \bigO{\epsilon^{-6}}$ and $\lmo \in \bigO{\epsilon^{-6}}$.
\end{corollary}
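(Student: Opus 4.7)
\textbf{Proof proposal for Corollary~\ref{corollary:mokhtari-residual-and-feasibility}.}

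The plan is to convert the smoothed-gap bound of Theorem~\ref{thm:mokhtari} into separate bounds on the objective residual and the feasibility violation, then count oracle calls. The key observation is that, by the definition~\eqref{eq: stoc_grad}, the smoothed gap decomposes as
\begin{equation*}
S_{\beta_k}(x_{k+1}) = \bigl(f(x_{k+1}) - \fopt\bigr) + \frac{1}{2\beta_k}\,\mathbb{E}\bigl[\dist(A(\xi)x_{k+1}, b(\xi))^2\bigr],
\end{equation*}
so the penalty term already controls the feasibility residual up to the factor $1/(2\beta_k)$. Setting $d_k^2 := \mathbb{E}[\dist(A(\xi)x_{k+1}, b(\xi))^2]$ and $r_k := f(x_{k+1}) - \fopt$, Theorem~\ref{thm:mokhtari} yields $r_k + \frac{d_k^2}{2\beta_k} \leq C_2/k^{1/6}$ in expectation.

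To disentangle $r_k$ from $d_k$, I would invoke Slater's condition (Assumption~4) to produce a bounded Lagrange multiplier $y^\star \in \H$ for problem~\eqref{eq:obj-2}, giving the standard lower bound
\begin{equation*}
f(x) - \fopt \;\geq\; -\|y^\star\|\,\sqrt{\mathbb{E}\bigl[\dist(A(\xi)x, b(\xi))^2\bigr]}\qquad \forall x \in \X,
\end{equation*}
which follows from strong duality applied to the saddle-point formulation of~\eqref{eq:obj-2} in the Hilbert space $\H$ (this is a direct extension of the standard scalar-constraint argument; the existence of $y^\star$ with finite norm under $0 \in \mathrm{sri}(\mathrm{dom}(G) - A\,\mathrm{dom}(f))$ is guaranteed by~\cite{bauschke2011convex}). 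Substituting into the smoothed-gap inequality produces the quadratic estimate $\frac{d_k^2}{2\beta_k} - \|y^\star\| d_k - C_2/k^{1/6} \leq 0$, whose positive root satisfies
\begin{equation*}
d_k \;\leq\; 2\beta_k\|y^\star\| + \sqrt{2\beta_k\,C_2/k^{1/6}}.
\end{equation*}

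With $\beta_k = \beta_0/(k+1)^{1/6}$, both terms are $\bigO{k^{-1/6}}$, yielding the claimed feasibility rate. Plugging this back into $r_k \geq -\|y^\star\| d_k$ gives $r_k \geq -\bigO{k^{-1/6}}$, while $r_k \leq C_2/k^{1/6}$ is immediate from the Theorem (dropping the nonnegative penalty term), so $|r_k| \in \bigO{k^{-1/6}}$, proving the objective bound. Finally, each iteration of Algorithm~\ref{alg:homotopy-mokhtari} invokes one sfo call (to form $\nabla F_{\beta_k}(x_k,\xi_k)$) and one lmo call; inverting $k^{-1/6} \leq \epsilon$ gives $k = \bigO{\epsilon^{-6}}$, hence both oracle complexities are $\bigO{\epsilon^{-6}}$.

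The main obstacle is the Lagrangian lower bound: it requires arguing that Slater's condition in the Hilbert space $\H$ produces a bounded $y^\star$ and that the resulting duality inequality survives taking expectations of the square of the distance. The quadratic-root manipulation and bookkeeping of the $\bigO{k^{-1/6}}$ rates are routine once this duality tool is in hand.
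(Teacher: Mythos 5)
Your proposal is correct and follows essentially the same route as the paper: the paper's proof is a one-line invocation of Lemma~3.1 of Fercoq et al.\ (restated as Lemma~\ref{lemma:smoothed-gap} in the appendix), whose content --- existence of a bounded saddle-point multiplier $\lambda^*$ under Slater's condition, the Lagrangian lower bound on $F(x)-F(x^*)$, and the resulting bound $\int \dist(A(\xi)x,b(\xi))^2\,dP(\xi) \leq 4\beta^2\|\lambda^*\|^2 + 4\beta S_\beta(x)$ --- is exactly what you re-derive via your quadratic-root manipulation. Combined with Theorem~\ref{thm:mokhtari}, the choice $\beta_k = \beta_0/(k+1)^{1/6}$, and the one-sample/one-lmo-per-iteration count, this yields the stated rates and complexities, matching the paper.
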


\subsection{H(omotopy)-SPIDER-FW}
\label{sec:hspfw}
Our second algorithm presents a more complex VR scheme, which improves on the complexity of H-1SFW. The method relies on the SPIDER estimator originally proposed under the framework of Normalized Gradient Descent in~\citep{fang2018spider} and further studied for CGMs in ~\cite{spiderfw}. Different from Section~\ref{subsec:conv-mokhtari}, the results that follow distinguish two scenarios: the first is customary to VR methods such as SVRG~\cite{johnson2013accelerating} or SARAH~\cite{nguyen2017sarah} and assumes a finite-sum form of $f$; the second, different from most other VR schemes, caters to objectives of the form $f(x) = \mathbb{E}\left[ f(x, \xi)\right]$ where $\xi \sim P(\xi)$, and can handle a potentially infinite number of stochastic functions of~\eqref{eq:prob-form}.

\subsubsection{Gradient Estimator Model}
We denote the SPIDER gradient estimator by $v_{t,k}$, and remark that it is also biased relative to $\nabla F_{\beta_k}(x_k)$ and exhibits a vanishing variance. This scheme achieves VR through the use of increasing-size minibatches. The estimator update rule is given by
\begin{align}
\label{eq:spider_vr}
 v_{t, k} = v_{t, k-1} &- \tilde{\nabla} F_{\beta_{t, k-1}}(x_{t, k-1}, \xi_{\mathcal{S}_{t,k}}) \nn \\
                     &\hspace{8mm}+ \tilde{\nabla} F_{\beta_{t, k}}(x_{t, k}, \xi_{\mathcal{S}_{t,k}}),
\end{align}
where $\tilde{\nabla} F_{\beta_{t, k}}(x_{t, k}, \xi_{\mathcal{S}_{t,k}}) = \tilde{\nabla} f(x_k, \xi_{\mathcal{S}_{t,k}}) + \tilde{\nabla} g_{\beta_{t, k}}(A(\xi_{\mathcal{S}_{t,k}})x_{t,k})$ defines the averaged gradient over a minibatch of size $\lvert \mathcal{S}_{t, k} \rvert$. 

The double indexing used in~\eqref{eq:spider_vr} hints at the double-loop structure of the algorithm, a format similar to most VR-based methods. The method is structured similarly to SPIDER-FW from~\cite{spiderfw}, and proceeds in two steps: the outer loop computes an accurate gradient estimator and sets the batch size for the inner iterations. The inner-loop then iteratively `refreshes' this gradient according to~\eqref{eq:spider_vr} and performs homotopy steps on $\beta$ using a theoretically-determined schedule. The proposed method is provided via pseudocode in Algorithm~\ref{alg:homotopy-spider-fw}.

\begin{algorithm}[tb]
\begin{spacing}{1.3}
\begin{algorithmic}
\STATE \textbf{Input:} $\bar{x}_1 \in \X, \beta_0 > 0, P(\xi)$
\FOR{$t = 1, 2, \dots, T$}
	\STATE $x_{t, 1} = \bar{x}_t$
	\STATE Compute $\gamma_{t, 1}, \beta_{t, 1},  K_t$; sample $\xi_{\mathcalorigin{Q}_t} \overset{\text{\tiny i.i.d}}{\sim} P(\xi)$
	\STATE $v_{t, 1} = \tilde{\nabla} F_{\beta_{t, 1}}(x_{t, 1}, \xi_{\mathcalorigin{Q}_t})$
	\STATE  $w_{t, 1} \in \text{lmo}_{\X}(v_{t, 1})$
	\STATE  $x_{t, 2} = x_{t, 1} + \gamma_{t, 1}(w_{t, 1} - x_{t, 1})$
	\FOR {$k = 2, \dots, K_t$}
		\STATE Compute $\gamma_{t, k}, \beta_{t, k}$;  sample $\xi_{\mathcalorigin{S}_{t, k}} \overset{\text{\tiny i.i.d}}{\sim} P(\xi)$ 
		\STATE $\!\begin{aligned}[t] v_{t, k} = v_{t, k - 1} &- \tilde{\nabla} F_{\beta_{t, k - 1}}(x_{t, k - 1}, \xi_{\mathcalorigin{S}_{t, k}}) \\
		                                                                             &\hspace{7mm}+ \tilde{\nabla} F_{\beta_{t, k}}(x_{t, k }, \xi_{\mathcalorigin{S}_{t, k}})\end{aligned}$
		\STATE $w_{t, k} \in \text{lmo}_{\X}(v_{t, k})$
		\STATE $x_{t, k + 1} = x_{t, k} + \gamma_{t, k}(w_{t, k} - x_{t, k})$
	\ENDFOR
	\STATE Set $\bar{x}_{t+1} = x_{t, K_t + 1}$
\ENDFOR
\end{algorithmic}
\end{spacing}
\caption{H-SPIDER-FW}
\label{alg:homotopy-spider-fw}
\end{algorithm}

\subsubsection{Convergence Results}
\label{subsec:conv-spiderfw}
Again, we remark that Lemma~\ref{lem:spiderfw-finite-sum-var} is the counterpart of Lemma 4, Appendix C  in~\cite{spiderfw}. However in this case, our proof takes a different, more tedious route, as the latter result does not accommodate homotopy steps. In comparison, the bound we obtain depends linearly on the total iteration count, whereas the lemma of~\cite{spiderfw} depends only on the outer loop counter $K_t$.

\begin{lemma}[Estimator variance for finite-sum problems]
\label{lem:spiderfw-finite-sum-var}
Consider Algorithm~\ref{alg:homotopy-spider-fw}, and let $\xi$ be finitely sampled from set $[n]$, $\xi_{\mathcalorigin{Q}_t} = [n]$ and $\xi_{\mathcalorigin{S}_{t, k}}, \text{ such that } |\mathcalorigin{S}_{t, k}| = K_t= 2^{t-1}$. Also, let $\gamma_{t, k} = \frac{2}{K_t + k}, ~\beta_{t, k} = \frac{\beta_0}{\sqrt{K_t + k}}, \, \beta_0 > 0$. Then, for a fixed $t$ and for all $k \leq K_t$,
    \begin{equation*}
        \Exp{\normsqr{\gradF{\beta_{t, k}}{x_{t, k}} - v_{t, k}}}{} \leq \frac{C_1}{K_t + k}, 
    \end{equation*}
where $C_1 = 2\diam^2 \left(8L_f^2 + \frac{98L_A^2}{\beta_0^2}\right)$.
\end{lemma}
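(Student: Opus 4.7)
My plan is to define the error $e_{t,k} \defeq v_{t,k} - \gradF{\beta_{t,k}}{x_{t,k}}$ and rewrite the SPIDER recursion~\eqref{eq:spider_vr} as $e_{t,k} = e_{t,k-1} + Z_{t,k}$, where
\[
Z_{t,k} \defeq \gradFstoc{\beta_{t,k}}{x_{t,k}} - \gradFstoc{\beta_{t,k-1}}{x_{t,k-1}} - \bigl[\gradF{\beta_{t,k}}{x_{t,k}} - \gradF{\beta_{t,k-1}}{x_{t,k-1}}\bigr].
\]
Conditioned on the natural filtration $\mathcalorigin{F}_{t,k-1}$ the iterates $x_{t,k-1},x_{t,k}$ and smoothing parameters $\beta_{t,k-1},\beta_{t,k}$ are all deterministic while the minibatch $\xi_{\mathcalorigin{S}_{t,k}}$ is an independent fresh draw, so $\Exp{Z_{t,k}}{\mathcalorigin{F}_{t,k-1}}=0$. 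Conditional orthogonality combined with the standard ``mean-of-variances'' inequality for i.i.d.\ averages of size $|\mathcalorigin{S}_{t,k}|=K_t$ then yields
\[
\Exp{\normsqr{e_{t,k}}}{} \leq \Exp{\normsqr{e_{t,k-1}}}{} + \frac{1}{K_t}\Exp{\normsqr{\nabla F_{\beta_{t,k}}(x_{t,k},\xi)-\nabla F_{\beta_{t,k-1}}(x_{t,k-1},\xi)}}{}.
\]
In the finite-sum regime $\xi_{\mathcalorigin{Q}_t}=[n]$ turns $v_{t,1}$ into the exact gradient, so $\normsqr{e_{t,1}}=0$ and the recursion telescopes to a pure sum of per-step variances from $j=2$ up to $k$.

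The one-sample squared difference above is controlled by the triangle inequality on three pieces: (i) the $f$-part, of norm at most $L_f\|x_{t,k}-x_{t,k-1}\|$ by $L_f$-smoothness of $f(\cdot,\xi)$; (ii) the constraint part with $\beta$ frozen at $\beta_{t,k}$, of norm at most $(L_A/\beta_{t,k})\|x_{t,k}-x_{t,k-1}\|$ using $\|A(\xi)\|^2\leq L_A$ and the $(1/\beta_{t,k})$-smoothness of $g_{\beta_{t,k}}$; and (iii) the genuinely new ``$\beta$-drift'' contribution $\nabla g_{\beta_{t,k}}(A(\xi)x_{t,k-1}) - \nabla g_{\beta_{t,k-1}}(A(\xi)x_{t,k-1})$. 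For (iii) I would use the closed form from Section~\ref{sec:prelim}, $\nabla g_{\beta}(z)=\beta^{-1}(z - P_{\bstoc}(z))$: the projection is the same at both parameters, so the piece equals $|\beta_{t,k}^{-1}-\beta_{t,k-1}^{-1}|\cdot\|A(\xi)^T(A(\xi)x_{t,k-1}-P_{\bstoc}(A(\xi)x_{t,k-1}))\|$. Slater's condition supplies a feasible $x^\star\in\X$ with $A(\xi)x^\star\in\bstoc$ almost surely, hence $\dist(A(\xi)x_{t,k-1},\bstoc)\leq\|A(\xi)(x_{t,k-1}-x^\star)\|\leq\sqrt{L_A}\diam$, and the outer $A(\xi)^T$ introduces an additional $\sqrt{L_A}$.

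Plugging in $\|x_{t,k}-x_{t,k-1}\|\leq\gamma_{t,k-1}\diam\leq 2\diam/(K_t+k-1)$, $\beta_{t,k}^{-2}=(K_t+k)/\beta_0^2$, and the mean-value estimate $|\beta_{t,k}^{-1}-\beta_{t,k-1}^{-1}|\leq 1/(2\beta_0\sqrt{K_t+k-1})$, piece~(i) squared is of order $1/(K_t+k)^2$ while pieces~(ii) and~(iii) squared are of order $1/(K_t+k)$ with a common prefactor $L_A^2\diam^2/\beta_0^2$. Dividing by $K_t$ and summing over $j=2,\dots,k$, the standing hypothesis $k\leq K_t$ gives $K_t\leq K_t+j\leq 2K_t$ and hence
\[
\sum_{j=2}^{k}\frac{1}{K_t(K_t+j)} \;\leq\; \frac{\ln 2}{K_t} \;\leq\; \frac{2\ln 2}{K_t+k},
\]
while the $1/(K_t+j)^2$ summand is even smaller. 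Routine bookkeeping of the triangle-inequality constants then closes the bound with the advertised $C_1=2\diam^2\!\left(8L_f^2+\tfrac{98L_A^2}{\beta_0^2}\right)$.

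The main technical obstacle is the $\beta$-drift piece~(iii): the vanilla SPIDER-FW analysis of~\cite{spiderfw} relies solely on $x$-smoothness of $F_\beta$, so homotopy introduces a truly new error source that must be reconciled with the variance-reduction mechanism. Matching its decay against the shrinking minibatch-variance contribution is precisely what forces the coupling $\beta_{t,k}\propto(K_t+k)^{-1/2}$, $\gamma_{t,k}=2/(K_t+k)$, and $|\mathcalorigin{S}_{t,k}|=K_t$; with any slower decay of $\beta_{t,k}$ the telescoped error would scale as $O(\log k / K_t)$ instead of the target $O(1/(K_t+k))$, which is why this proof takes the ``more tedious route'' noted by the authors.
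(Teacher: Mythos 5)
Your proposal is correct and follows essentially the same route as the paper's proof (supporting Lemma B.4 plus the zero initial error from $\xi_{\mathcalorigin{Q}_t}=[n]$): the same martingale-difference recursion with the $1/K_t$ minibatch factor, the same three-way split of the per-step gradient difference into the $f$-part, the frozen-$\beta$ movement, and the $\beta$-drift, and the same estimates $\|x_{t,k}-x_{t,k-1}\|\leq\gamma_{t,k-1}\diam$, $\dist(A(\xi)x,\bstoc)\leq\sqrt{L_A}\diam$, and $|\beta_{t,k}^{-1}-\beta_{t,k-1}^{-1}|\leq 1/(2\beta_0\sqrt{K_t+k-1})$. The only cosmetic deviations are your logarithmic bound on the telescoped harmonic sum (the paper bounds each term by $2/(K_t+k)$ and uses $k-1\leq K_t$) and your appeal to Slater's condition for a feasible point where the paper simply uses $\xopt$.
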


\vspace{3mm}
\begin{lemma}[Estimator variance for general expectation problems]
\label{lem:spiderfw-expectation-var}
Consider Algorithm~\ref{alg:homotopy-spider-fw} and let $\xi \sim P(\xi)$ and $\xi_{\mathcalorigin{Q}_t} \text{ such that } \lvert  \mathcalorigin{Q}_t \rvert = \lceil \frac{2K_t}{\beta_{t, 1}^2}\rceil$. Also, let $\xi_{\mathcalorigin{S}_{t, k}}, \text{ such that } |\mathcalorigin{S}_{t, k}| = K_t= 2^{t-1}$, $\gamma_{t, k} = \frac{2}{K_t + k}, ~\beta_{t, k} = \frac{\beta_0}{\sqrt{K_t + k}}, \, \beta_0 > 0$. Then, for a fixed $t$ and for all $k \leq K_t$,
    \begin{equation*}
        \Exp{\normsqr{\gradF{\beta_{t, k}}{x_{t, k}} - v_{t, k}}}{} \leq \frac{C_2}{K_t + k}, 
    \end{equation*}
where $C_2 =  16L_f^2\diam^2 + 2L_A^2\diam^2 \left( \frac{98}{\beta_0^2} + 1\right) + 2\beta_0^2\sigma_f^2$.
\end{lemma}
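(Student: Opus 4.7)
The plan is to prove the bound by induction on $k\in\{1,\dots,K_t\}$, mirroring the SPIDER-style variance recursion behind Lemma~\ref{lem:spiderfw-finite-sum-var}, but accounting for two phenomena absent from the finite-sum case: the anchor $v_{t,1}$ is a stochastic mini-batch average rather than an exact gradient, and the per-sample $f$-gradient has nontrivial variance $\sigma_f^2$ under Assumption~\ref{ass:fstoc_finite_var}. Both effects are ultimately absorbed into the extra $2\beta_0^2\sigma_f^2$ summand in $C_2$.

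For the base case $k=1$, I would use the i.i.d.\ structure of $\xi_{\mathcalorigin{Q}_t}$ to write
\begin{equation*}
\Exp{\normsqr{\gradF{\beta_{t,1}}{x_{t,1}} - v_{t,1}}}{} \leq \frac{1}{\lvert \mathcalorigin{Q}_t\rvert}\,\Exp{\normsqr{\nabla F_{\beta_{t,1}}(x_{t,1},\xi) - \gradF{\beta_{t,1}}{x_{t,1}}}}{},
\end{equation*}
and split the single-sample variance into its $f$-part (bounded by $\sigma_f^2$) and a smoothed-indicator part. For the latter, the closed form $\nabla g_\beta(u) = \beta^{-1}(u - \mathrm{proj}_{b(\xi)} u)$ combined with $\norm{A(\xi)}^2\leq L_A$ and the uniform bound $\norm{A(\xi)x - \mathrm{proj}_{b(\xi)}A(\xi)x}\leq \sqrt{L_A}\,\diam$ (obtained by comparing $A(\xi)x$ to $A(\xi)\xopt\in b(\xi)$) yields a second-moment bound of order $L_A^2\diam^2/\beta_{t,1}^2$. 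Substituting $\lvert\mathcalorigin{Q}_t\rvert\geq 2K_t/\beta_{t,1}^2$ and $\beta_{t,1}^2 = \beta_0^2/(K_t+1)$ collapses the RHS to order $(L_A^2\diam^2 + \sigma_f^2\beta_0^2/(K_t+1))/K_t$, which is comfortably within $C_2/(K_t+1)$ for the stated constants.

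For the inductive step I would use the SPIDER identity. Conditioning on the inner-loop history up through step $k-1$, the cross term in $\normsqr{v_{t,k} - \gradF{\beta_{t,k}}{x_{t,k}}}$ vanishes by unbiasedness of the two mini-batch gradients in~\eqref{eq:spider_vr}, so
\begin{equation*}
\Exp{\normsqr{\gradF{\beta_{t,k}}{x_{t,k}} - v_{t,k}}}{\mathcalorigin{F}_{t,k-1}} = \normsqr{\gradF{\beta_{t,k-1}}{x_{t,k-1}} - v_{t,k-1}} + \frac{V_{t,k}}{\lvert \mathcalorigin{S}_{t,k}\rvert},
\end{equation*}
with $V_{t,k}$ bounding the single-sample second moment of $\nabla F_{\beta_{t,k}}(x_{t,k},\xi) - \nabla F_{\beta_{t,k-1}}(x_{t,k-1},\xi)$. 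I would split $V_{t,k}$ into an $f$-piece controlled by $L_f^2\gamma_{t,k-1}^2\diam^2$ via Assumption~\ref{ass:fstoc_finite_var} and $\norm{x_{t,k} - x_{t,k-1}}\leq \gamma_{t,k-1}\diam$, plus a $g_\beta$-piece in which I further decompose $A(\xi)^\top[\nabla g_{\beta_{t,k}}(A(\xi)x_{t,k}) - \nabla g_{\beta_{t,k-1}}(A(\xi)x_{t,k-1})]$ into an iterate-change term at fixed smoothing $\beta_{t,k}$, handled by the $L_A/\beta_{t,k}$-smoothness of $g_{\beta_{t,k}}\!\circ\!A(\xi)$, and a smoothing-change term at fixed iterate $x_{t,k-1}$, handled via the closed form of $\nabla g_\beta$ together with the scalar bound $\lvert \beta_{t,k}^{-1} - \beta_{t,k-1}^{-1}\rvert \leq 1/(2\beta_0\sqrt{K_t+k-1})$.

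The main obstacle is the bookkeeping needed to sum the per-step increments from $j=2$ to $k$ while keeping the total inside $C_2/(K_t+k)$ with a $K_t$-independent $C_2$. Using $\gamma_{t,j-1}^2 = 4/(K_t+j-1)^2$, $\beta_{t,j-1}^{-2} = (K_t+j-1)/\beta_0^2$, $\lvert\mathcalorigin{S}_{t,j}\rvert = K_t$, and $\sum_{j=2}^{k} 1/(K_t+j-1)\leq \log 2$ for $k\leq K_t$, the $g_\beta$-contribution telescopes to $O(L_A^2\diam^2/(\beta_0^2 K_t))$ while the $f$-contribution sums to $O(L_f^2\diam^2/K_t^2)$; multiplying by the benign factor $(K_t+k)/K_t\leq 2$ shows that these, together with the base case, are absorbed into $C_2 = 16L_f^2\diam^2 + 2L_A^2\diam^2(98/\beta_0^2 + 1) + 2\beta_0^2\sigma_f^2$. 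The genuine technical difficulty is therefore twofold: keeping the new $\beta$-change correction aligned with the iterate-change term so that the $1/(K_t+k)$ decay survives summation, and calibrating $\lvert\mathcalorigin{Q}_t\rvert = \lceil 2K_t/\beta_{t,1}^2\rceil$ precisely enough at the anchor step to cancel the $1/\beta_{t,1}^2$ blow-up of the stochastic smoothed gradient without letting a $K_t$ slip into the final constant.
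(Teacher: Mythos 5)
Your proposal is correct and follows essentially the same route as the paper: the paper first proves a supporting lemma establishing the one-step SPIDER recursion (cross term killed by unbiasedness, per-step increment split into an $f$-piece bounded via $L_f$-smoothness of the samples and a $g_\beta$-piece decomposed into an iterate-change term and a smoothing-change term controlled by $\lvert \beta_{t,k}^{-1}-\beta_{t,k-1}^{-1}\rvert \leq 1/(2\beta_0\sqrt{K_t+k-1})$), telescopes it to leave the anchor $\mathbb{E}_{t,1}\bigl[\normsqr{\nabla F_{\beta_{t,1}}(x_{t,1})-v_{t,1}}\bigr]$ as an additive term, and then in Lemma~\ref{lem:spiderfw-expectation-var} bounds that anchor by the single-sample variance $2\sigma_f^2 + 2L_A^2\diam^2/\beta_{t,1}^2$ divided by $\lvert\mathcalorigin{Q}_t\rvert$, exactly as you do. The only differences are cosmetic (induction versus explicit telescoping, a harmonic-sum bound of $\log 2$ versus the paper's termwise bound $1/(K_t+i-1)\leq 2/(K_t+k)$, and the $L_f$-smoothness step is Assumption~4.1 rather than Assumption~\ref{ass:fstoc_finite_var}).
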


\vspace{3mm}

\def\EE{{\mathbb{E}}}
\begin{theorem} 
\label{thm:spiderfw}
Consider Algorithm~\ref{alg:homotopy-spider-fw} with parameters $\gamma_{t, k} = \frac{2}{K_t + k}$, $\beta_{t, k} = \frac{\beta0}{\sqrt{K_t+k}}, \, \beta_0 > 0$, and $\xi_{\mathcalorigin{S}_{t, k}}, \text{ such that } |\mathcalorigin{S}_{t, k}| = K_t= 2^{t-1}$. Then,
\begin{itemize}[itemsep=15pt]
\item For $\xi$ be finitely sampled from set $[n]$, $\xi_{\mathcalorigin{Q}_t} = [n]$ and $\forall t \in \mathbb{N}, \, 1 \leq k \leq 2^{t-1}$,
                        \begin{equation*}
                                \Exp{S_{\beta_{t, k}}(x_{t, k + 1})}{}\leq \frac{C_3}{\sqrt{K_t + k + 1}},
                        \end{equation*}
where $\!\begin{aligned}[t] C_3 = &\max \Bigg\{S_{\beta_{1, 0} }( x_{1, 1}), \\ &\hspace{-10mm}2\diam^2 L_f + 2\diam^2 \sqrt{16L_f^2 + \frac{196L_A^2}{\beta_0^2}} +  \frac{2\diam^2 L_A}{\beta_0}  \Bigg\}; \end{aligned}$

\item For $\xi \sim P(\xi)$, $\xi_{\mathcalorigin{Q}_t} \text{ such that } \lvert  \mathcalorigin{Q}_t \rvert = \lceil \frac{2K_t}{\beta_{t, 1}^2}\rceil$ and \mbox{$\forall t \in \mathbb{N}, \, 1 \leq k \leq 2^{t-1}$},
                        \begin{equation*}
                                \Exp{S_{\beta_{t, k}}(x_{t, k + 1})}{} \leq \frac{C_4}{\sqrt{K_t + k + 1}},
                        \end{equation*}
where $\!\begin{aligned}[t] C_4 = &\max \Bigg\{ S_{\beta_{1,0}}(x_{1, 1}), \; 2\diam^2 L_f +  \frac{2\diam^2 L_A}{\beta_0}  \\ &\hspace{-18mm}  + 2\diam \sqrt{16L_f^2\diam^2 + 2L_A^2\diam^2 \left( \frac{98}{\beta_0^2} + 1\right) + 2\beta_0^2\sigma_f^2}\Bigg\}.\end{aligned}$
\end{itemize}
\end {theorem}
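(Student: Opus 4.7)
The plan is to combine a Frank--Wolfe descent step with an lmo bias-correction trick and a homotopy correction for the decreasing smoothing parameter, and close by induction on $(t,k)$. A key structural observation is that because $A(\xi)x^* \in b(\xi)$ almost surely, $g_\beta(A(\xi)x^*) = 0$ and hence $F_{\beta_{t,k}}(x^*) = f^*$ for every $\beta_{t,k}$, so $x^*$ remains a minimizer of every smoothed surrogate.

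First I would apply the quadratic descent inequality for the $(L_f + L_A/\beta_{t,k})$-smooth function $F_{\beta_{t,k}}$, and decompose the critical inner product as $\langle \nabla F_{\beta_{t,k}}(x_{t,k}), w_{t,k}-x_{t,k}\rangle = \langle v_{t,k}, w_{t,k}-x^*\rangle + \langle \nabla F_{\beta_{t,k}}(x_{t,k}), x^*-x_{t,k}\rangle + \langle \nabla F_{\beta_{t,k}}(x_{t,k}) - v_{t,k}, w_{t,k}-x^*\rangle$. The first summand is non-positive by lmo optimality, the second is at most $f^* - F_{\beta_{t,k}}(x_{t,k})$ by convexity combined with $F_{\beta_{t,k}}(x^*)=f^*$, and the third is bounded by $\|\nabla F_{\beta_{t,k}}(x_{t,k}) - v_{t,k}\|\,\diam$. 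Taking expectation, using Jensen's inequality $\mathbb{E}[\|\cdot\|] \leq \sqrt{\mathbb{E}[\|\cdot\|^2]}$, and plugging in the variance bound from Lemma~\ref{lem:spiderfw-finite-sum-var} or~\ref{lem:spiderfw-expectation-var} (with respective constant $C_j$) produces the recursion
\begin{equation*}
\mathbb{E}[S_{\beta_{t,k}}(x_{t,k+1})] \leq (1-\gamma_{t,k})\,\mathbb{E}[S_{\beta_{t,k}}(x_{t,k})] + \frac{\gamma_{t,k}\,\diam\sqrt{C_j}}{\sqrt{K_t+k}} + \frac{\gamma_{t,k}^2(L_f+L_A/\beta_{t,k})\,\diam^2}{2}.
\end{equation*}

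To connect this with the inductive quantity indexed at $\beta_{t,k-1}$, I would use the identity $F_{\beta_{t,k}}(x) - F_{\beta_{t,k-1}}(x) = \bigl(\tfrac{1}{2\beta_{t,k}} - \tfrac{1}{2\beta_{t,k-1}}\bigr)\mathbb{E}_\xi[\dist^2(A(\xi)x,b(\xi))]$ together with the crude bound $\dist^2(A(\xi)x,b(\xi)) \leq \|A(\xi)(x-x^*)\|^2 \leq L_A \diam^2$, yielding a homotopy surplus of order $L_A\diam^2/(\beta_0\sqrt{K_t+k-1})$. Induction on $(t,k)$ with hypothesis $\mathbb{E}[S_{\beta_{t,k}}(x_{t,k+1})] \leq C_j'/\sqrt{K_t+k+1}$, after substituting $\gamma_{t,k} = 2/(K_t+k)$ and $\beta_{t,k} = \beta_0/\sqrt{K_t+k}$ and invoking the elementary inequality $(1-2/n)/\sqrt{n} \leq 1/\sqrt{n+1} - \Omega(1/n^{3/2})$, closes the recursion once $C_j'$ dominates both $S_{\beta_{1,0}}(x_{1,1})$ and the per-step excess $2\diam^2 L_f + 2\diam\sqrt{C_j} + 2\diam^2 L_A/\beta_0$---precisely the terms appearing in $C_3$ and $C_4$. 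The outer-loop transition $\bar{x}_{t+1} = x_{t,K_t+1}$ with $K_{t+1}+1 = 2K_t+1$ is compatible with the rate, and the $k=1$ boundary is handled by the initial large-batch variance already absorbed into the preceding lemmas.

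The main obstacle is that the homotopy surplus $O(1/\sqrt{K_t+k-1})$ has the \emph{same} asymptotic rate as the target $O(1/\sqrt{K_t+k+1})$, so it cannot be treated as a lower-order perturbation; the argument hinges on the Frank--Wolfe contraction factor $(1-\gamma_{t,k})$ producing just enough slack to absorb it, provided $C_j'$ is chosen sufficiently large. In the general-expectation case an additional $2\beta_0^2\sigma_f^2$ term enters $C_2$ through the variance of the initial mini-batch $\xi_{\mathcal{Q}_t}$ of size $\lceil 2K_t/\beta_{t,1}^2\rceil$ (invoking Assumption~\ref{ass:fstoc_finite_var}), and is propagated into $C_4$; this boundary requires a parallel but otherwise structurally identical estimate.
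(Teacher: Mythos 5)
Your overall architecture (smoothness descent for the $(L_f+L_A/\beta_{t,k})$-smooth surrogate, lmo decomposition of the inner product, the variance bounds of Lemmas~\ref{lem:spiderfw-finite-sum-var}--\ref{lem:spiderfw-expectation-var} via Jensen, and a recursion lemma with $\alpha=1$, $\beta=3/2$) matches the paper's, but your treatment of the homotopy surplus fails, and you have in fact located the weak point yourself. Bounding $F_{\beta_{t,k}}(x_{t,k})-F_{\beta_{t,k-1}}(x_{t,k})$ through $\dist^2(A(\xi)x,b(\xi))\le L_A\diam^2$ leaves an additive forcing term of order $L_A\diam^2/(\beta_0\sqrt{K_t+k})$ in the recursion. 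With $n=K_t+k$, the contraction $(1-\gamma_{t,k})=(1-2/n)$ applied to an inductive hypothesis $C/\sqrt{n}$ frees only $(1-2/n)\tfrac{C}{\sqrt{n}}\le \tfrac{C}{\sqrt{n+1}}-\tfrac{3C}{2\,n^{3/2}}$ of slack, i.e.\ order $n^{-3/2}$, which cannot absorb a forcing term of order $n^{-1/2}$ for \emph{any} choice of $C$; solving $\phi_n\le(1-2/n)\phi_{n-1}+a\,n^{-1/2}$ actually yields $\phi_n=\Theta(a\sqrt{n})$, which diverges. So the induction as you set it up does not close.

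The paper sidesteps this by never invoking plain convexity of $F_\beta$ at $\xopt$. It uses the strengthened inequality for the smoothed indicator (Technical observation (e), item (i), from Lemma~10 of Tran-Dinh et al.): $g(A(\xi)\xopt)\ge g_{\beta}(A(\xi)x)+\dotprod{\nabla g_\beta(A(\xi)x)}{A(\xi)\xopt-A(\xi)x}+\tfrac{\beta}{2}\normsqr{\lambda^*_\beta(A(\xi)x)}$, which injects an extra \emph{negative} term $-\tfrac{\gamma_{t,k}\beta_{t,k}}{2}\mathbb{E}\,\normsqr{\lambda^*_{\beta_{t,k}}}$ into the descent inequality. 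The homotopy surplus is kept in the same currency, $\tfrac{(1-\gamma_{t,k})(\beta_{t,k-1}-\beta_{t,k})}{2}\mathbb{E}\,\normsqr{\lambda^*_{\beta_{t,k}}}$ (note $\tfrac{\beta}{2}\normsqr{\lambda^*_\beta}=\tfrac{1}{2\beta}\dist^2$), and a direct computation shows $(1-\gamma_{t,k})(\beta_{t,k-1}-\beta_{t,k})-\gamma_{t,k}\beta_{t,k}<0$ for $\gamma_{t,k}=2/(K_t+k)$, $\beta_{t,k}=\beta_0/\sqrt{K_t+k}$, so the entire $\normsqr{\lambda^*}$ contribution is simply discarded. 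That exact cancellation, not the Frank--Wolfe contraction, is what eliminates the same-order surplus. Your remaining steps (epoch-boundary transitions, the flattened index $K_t+k$, and the recursion lemma) are consistent with the paper, but the proof as proposed has a genuine hole at this point.
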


\vspace{3mm}

\begin{corollary}
\label{cor:spiderfw}
The expected convergence in terms of objective suboptimality and feasibility of Algorithm~\ref{alg:homotopy-spider-fw} is, respectively, 
\begin{align*}
    &\mathbb{E}\left[ \lVert f(x_{t, k}) - \fopt \rVert \right] &&\hspace{-5mm}\in \bigO{(K_t + k)^{-1/2}}\\
    &\sqrt{\mathbb{E} \left[ \dist(\Astoc x_{t, k}, \bstoc)^2 \right] } &&\hspace{-5mm}\in \bigO{(K_t + k)^{-1/2}}
\end{align*} 
for both the finite-sum and the general expectation setting, up to constants. Consequently, the oracle complexities are given by $\ifo \in \bigO{n \log_2(\epsilon^-2)+ \epsilon^{-4}}$ and $\lmo \in \bigO{\epsilon^{-2}}$ for the finite-sum setting, and by $\sfo \in \bigO{\epsilon^{-4}}$ and $\lmo \in \bigO{\epsilon^{-2}}$ for the more general expectation setting.
\end{corollary}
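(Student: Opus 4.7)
The plan is to translate the smoothed-gap guarantee of Theorem~\ref{thm:spiderfw} into the advertised residual and feasibility rates via a Slater-based duality argument, and then count oracle calls using the doubling schedule $K_t = 2^{t-1}$ and $\beta_{t,k} = \beta_0/\sqrt{K_t+k}$. Using $g_{\beta}(\Astoc x) = \tfrac{1}{2\beta}\dist(\Astoc x, \bstoc)^2$ and the definition of $F_\beta$, the theorem rewrites as
\[
\mathbb{E}[\f{x_{t,k+1}} - \fopt] + \tfrac{1}{2\beta_{t,k}}\, D_{t,k+1}^2 \;\leq\; \frac{C}{\sqrt{K_t+k+1}},
\]
where $D_{t,k+1}^2 \defeq \mathbb{E}[\dist(\Astoc x_{t,k+1}, \bstoc)^2]$ and $C \in \{C_3, C_4\}$. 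This already upper-bounds $\mathbb{E}[\f{x_{t,k+1}} - \fopt]$; what is still needed is a matching lower bound and a separate bound on $D_{t,k+1}$.

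For the lower bound I would invoke the stated Slater condition to produce a dual optimum $\lambda^*$ for $\min_{x \in \X} \f{x} + G(Ax)$ in the Hilbert space $\H$, with $\|\lambda^*\|_\H < \infty$. Combining $G(\mathrm{proj}_b(Ax)) = 0$ with Fenchel--Young and Cauchy--Schwarz in $\H$ yields the standard inequality $\f{x} - \fopt \geq -\|\lambda^*\|_\H \sqrt{\mathbb{E}_\xi[\dist(\Astoc x, \bstoc)^2]}$ at every realization of $x$. Taking expectation over the algorithm randomness and applying Jensen gives $\mathbb{E}[\f{x_{t,k+1}} - \fopt] \geq -\|\lambda^*\|_\H D_{t,k+1}$. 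Combined with the displayed inequality this yields the one-variable quadratic
\[
\tfrac{1}{2\beta_{t,k}}\, D_{t,k+1}^2 - \|\lambda^*\|_\H D_{t,k+1} - \tfrac{C}{\sqrt{K_t+k+1}} \;\leq\; 0,
\]
whose positive root satisfies $D_{t,k+1} \leq 2\beta_{t,k}\|\lambda^*\|_\H + \sqrt{2\beta_{t,k}C/\sqrt{K_t+k+1}}$. Substituting $\beta_{t,k} = \beta_0/\sqrt{K_t+k}$, both summands are $\bigO{(K_t+k)^{-1/2}}$, giving the feasibility rate. Sandwiching $\mathbb{E}[|\f{x_{t,k+1}} - \fopt|]$ between the direct upper bound and $2\|\lambda^*\|_\H D_{t,k+1}$, via the usual positive/negative-part split, then yields the same rate for the objective residual.

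For the oracle complexities, demanding $(K_t+k)^{-1/2} \leq \epsilon$ forces $K_t + k = \Theta(\epsilon^{-2})$, which the doubling schedule reaches at some outer index $T = \Theta(\log_2 \epsilon^{-2})$. Each inner iteration issues one \lmo call, so the total \lmo count is $\sum_{t=1}^T K_t = 2^T-1 = \bigO{\epsilon^{-2}}$. In the finite-sum setting each outer step costs $n$ \ifo calls for the snapshot and each of the $K_t$ inner steps costs $2K_t$ \ifo calls for the two SPIDER evaluations on $|\mathcalorigin{S}_{t,k}| = K_t$, totalling $\sum_{t=1}^T (n + 2K_t^2) = \bigO{n\log_2 \epsilon^{-2} + \epsilon^{-4}}$. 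In the expectation setting the outer batch size is $|\mathcalorigin{Q}_t| = \lceil 2K_t/\beta_{t,1}^2\rceil = \bigO{K_t^2}$ while the inner cost is unchanged, yielding total \sfo count $\bigO{\sum_t K_t^2} = \bigO{\epsilon^{-4}}$.

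The main technical obstacle I anticipate is the duality step in $\H$: one must carefully apply the stated Slater condition to extract a bounded $\lambda^* \in \H$, and verify that the Cauchy--Schwarz passage lands exactly on $\sqrt{\mathbb{E}[\dist(\Astoc x, \bstoc)^2]}$ as it appears in the corollary. Once that one-sided inequality is in place, the quadratic manipulation, the Jensen step, and the geometric-series complexity summation are all routine.
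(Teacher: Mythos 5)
Your proposal is correct and follows essentially the same route as the paper: the paper translates the smoothed-gap bound of Theorem~\ref{thm:spiderfw} into the residual and feasibility rates by invoking Lemma~3.1 of \citet{fercoq2019almost} (restated as Lemma~\ref{lemma:smoothed-gap} in the appendix), whose items are precisely the Slater-based lower bound and the quadratic-root feasibility estimate you propose to re-derive by hand, and then counts oracle calls via the same geometric sums over the doubling schedule. The only difference is that you unpack the cited lemma rather than quoting it, which changes nothing substantive.
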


\subsection{Discussion}
\label{sec:comparison}
\paragraph{Rate Degradation in the Absence of Projection Oracles.} 
Compared to proximal methods for solving~\eqref{eq:prob-form}, our algorithms require $\mathcal{O}(\epsilon^{-2})$ times more sfo calls to reach an $\epsilon$-solution. This is well-known for CG-based methods: for instance, solving a fully deterministic version of~\eqref{eq:prob-form} using the Augmented Lagrangian framework has a gradient complexity of $\mathcal{O}(\epsilon^{-1})$~\cite{xu2017accelerated}, whereas the best known complexity for CG-based algorithms is $\mathcal{O}(\epsilon^{-2})$~\cite{yurtsever2018conditional}. 

\paragraph{Comparison with SHCGM \cite{locatello2019stochastic}.} The state-of-the-art for solving \eqref{eq:prob-form} is the half-stochastic method SHCGM~\cite{locatello2019stochastic}, in which stochasticity is restricted to the objective function $f$, while the constraints are processed deterministically. This algorithm attains an $\mathcal{O}(\epsilon^{-3})$ sfo complexity and an $\mathcal{O}(\epsilon^{-3})$ lmo complexity, by resorting to the same VR scheme as H-1SFW applied only to $f(x, \xi)$. Since SHCGM handles the constraints deterministically, it does not face the challenge of exploding variance as $\beta \rightarrow 0$.

Our analysis shows that handling the $\beta$-dependence of the gradient noise comes at the price of H-1SFW being $\mathcal{O}(\epsilon^{-3})$ times more expensive in terms of both oracles. In contrast, owing to a more powerful variance-reduction scheme, H-SPIDER-FW attains only an $\mathcal{O}\left(\epsilon \right)$-times worse sfo complexity, while improving by an $\mathcal{O}\left(\epsilon \right)$ factor in terms of the lmo complexity. Given that an lmo call is generally more expensive than that of an sfo, we have in fact \emph{improved} the complexity over the state-of-the-art, while being the first to process linear constraints stochastically. Moreover, we note that the lmo complexity of H-SPIDER-FW is on the same order as its fully deterministic counterpart, the HCGM~\cite{yurtsever2018conditional}.

\paragraph{The Role of VR.} The choice of VR technique dictates the worst-case convergence guarantees of our methods, a fact which is apparent from the discrepancy between the variance bounds of Lemmas~\ref{lem:mokhtari} and \ref{lem:spiderfw-finite-sum-var}-~\ref{lem:spiderfw-expectation-var}, respectively: $\mathcal{O}(k^{-1/3})$ for $d_k$ vs. $\mathcal{O}(k^{-1})$ for $v_{t, k}$. This signals the existence of a trade-off: a more intricate way of handling stochastic penalty-type constraints can ensure the better convergence guarantees of H-SPIDER-FW, while a simpler VR scheme comes at the cost of the rather pessimistic ones of H-1SFW. Fortunately, as shown in the Section~\ref{sec:experiments}, the simple H-1SFW greatly outperforms its worst-case guarantees.

\section{Numerical Experiments}
\label{sec:experiments}

\begin{figure*}[ht!]
\centering
\begin{minipage}[t]{.66\textwidth}
\setlength{\lineskip}{0pt}
  \includegraphics[width=.5\linewidth]{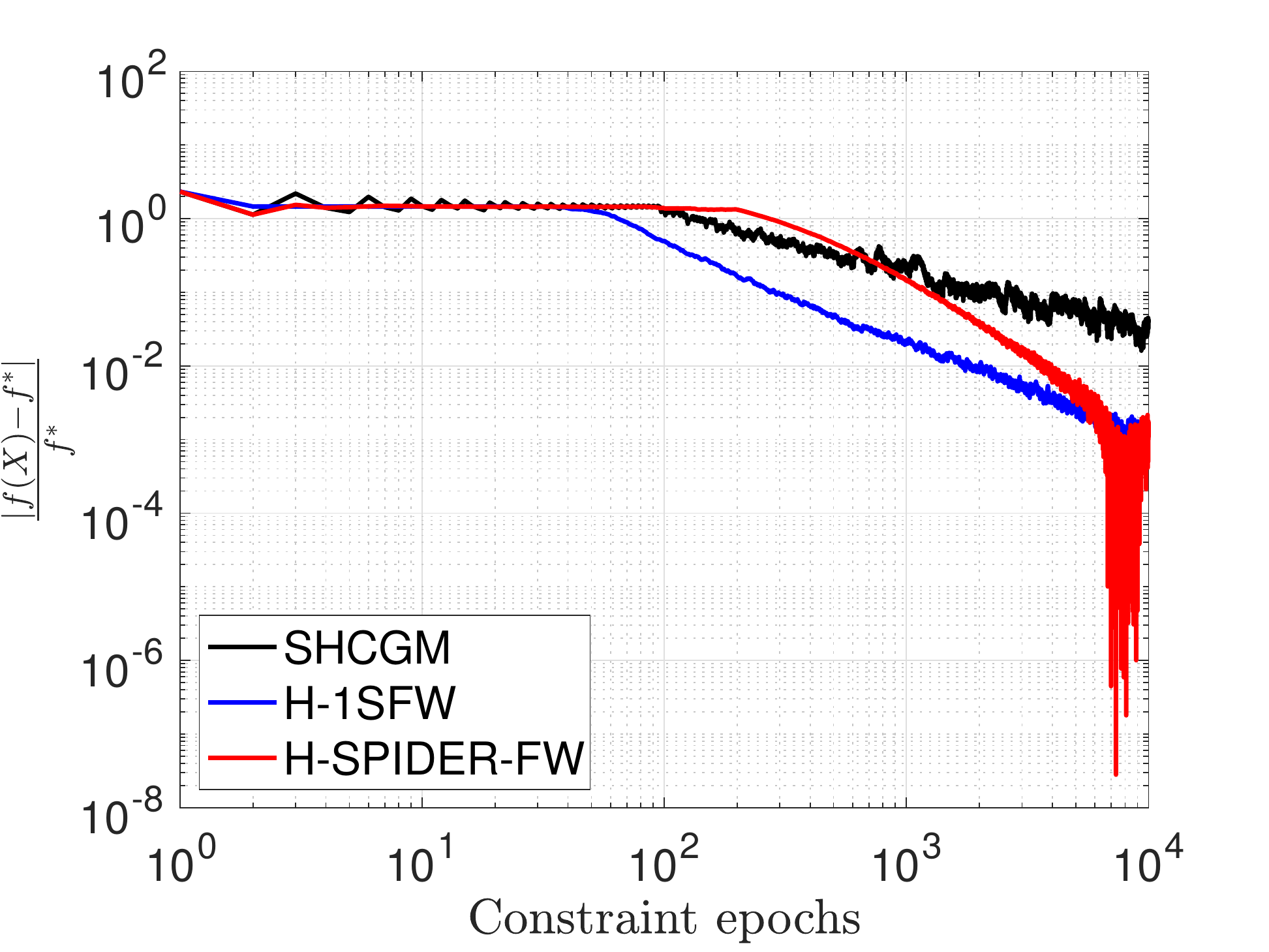}
  \includegraphics[width=.5\linewidth]{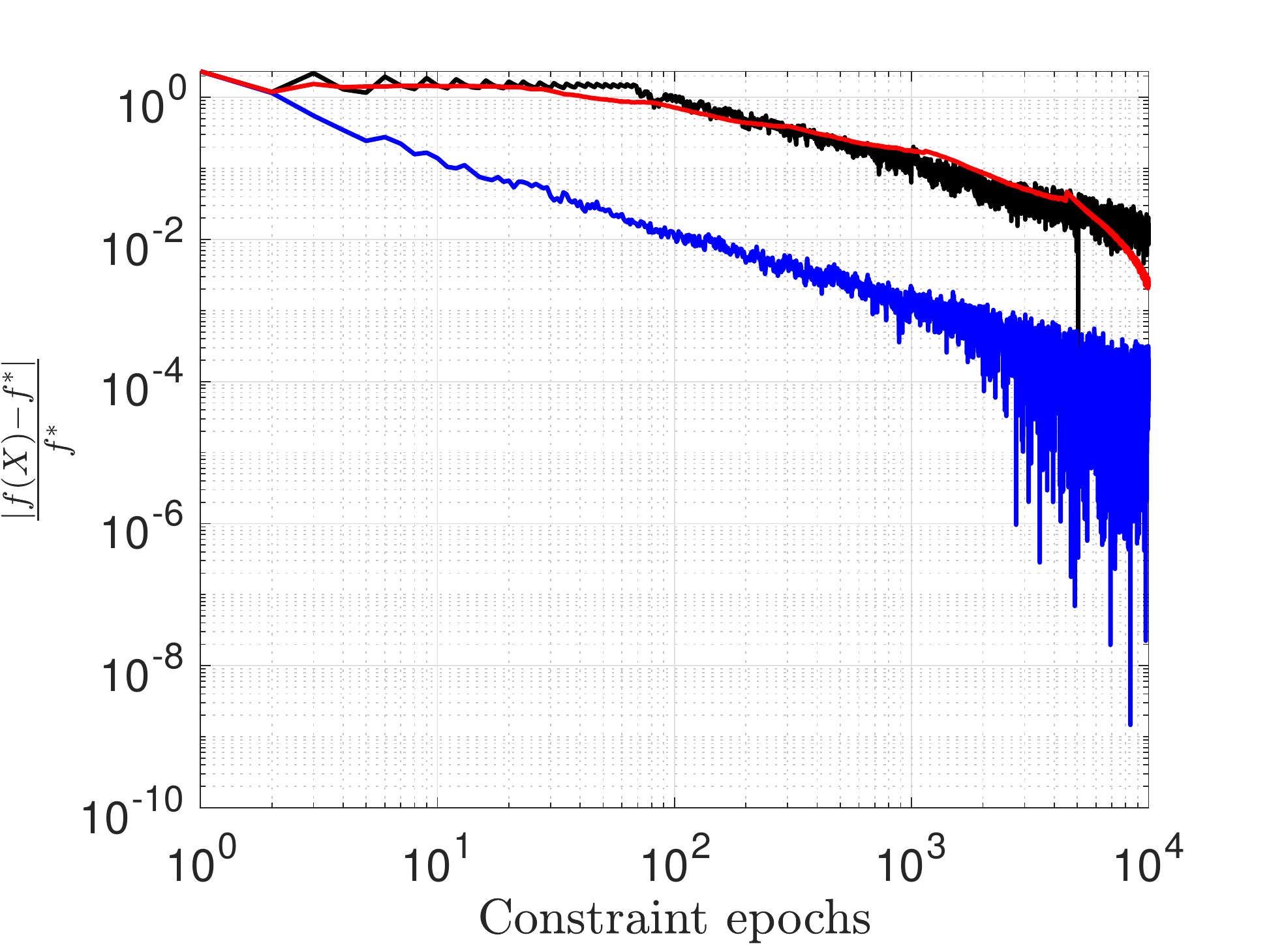}\par
  \includegraphics[width=.5\linewidth]{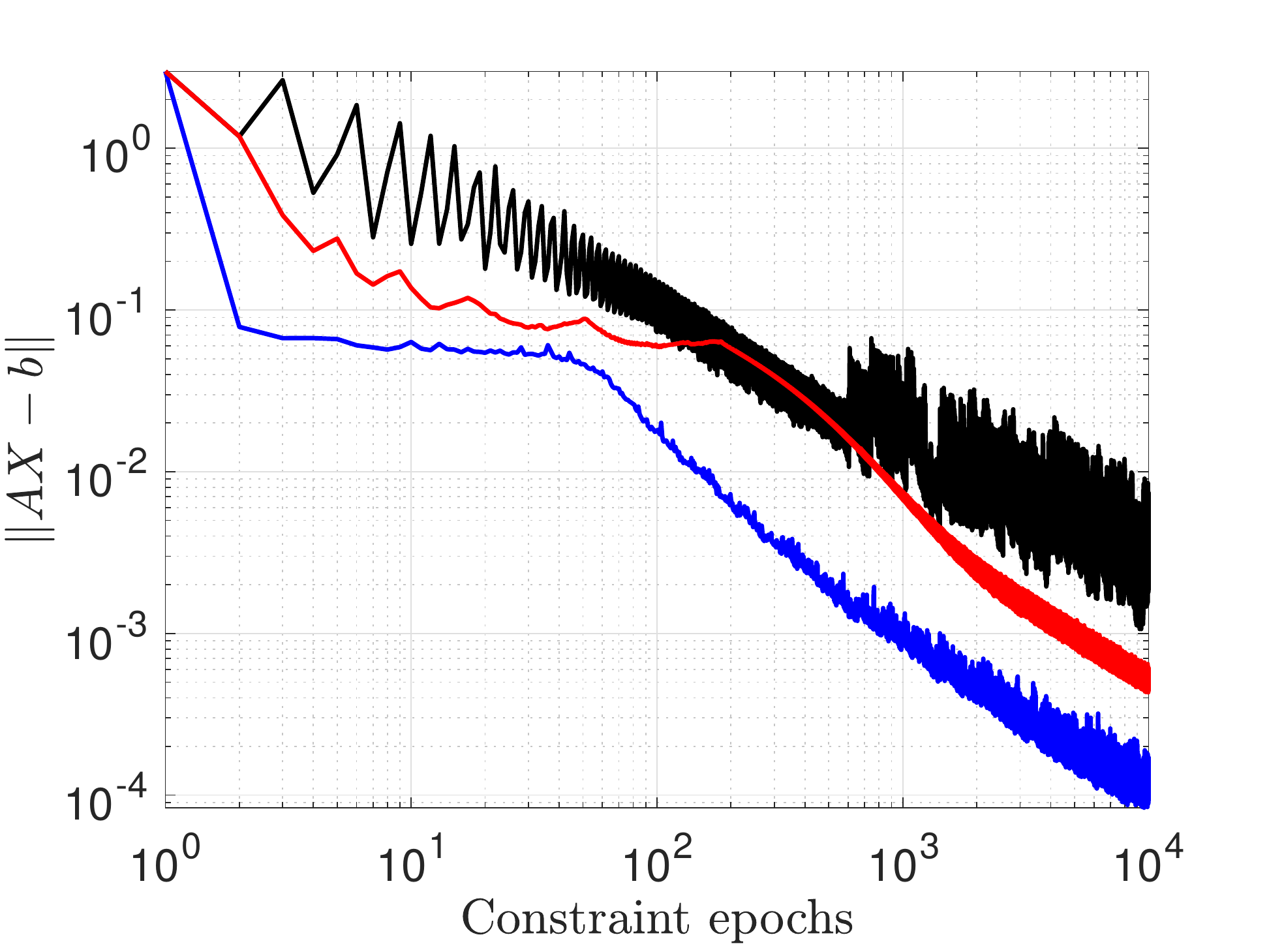}
  \includegraphics[width=.5\linewidth]{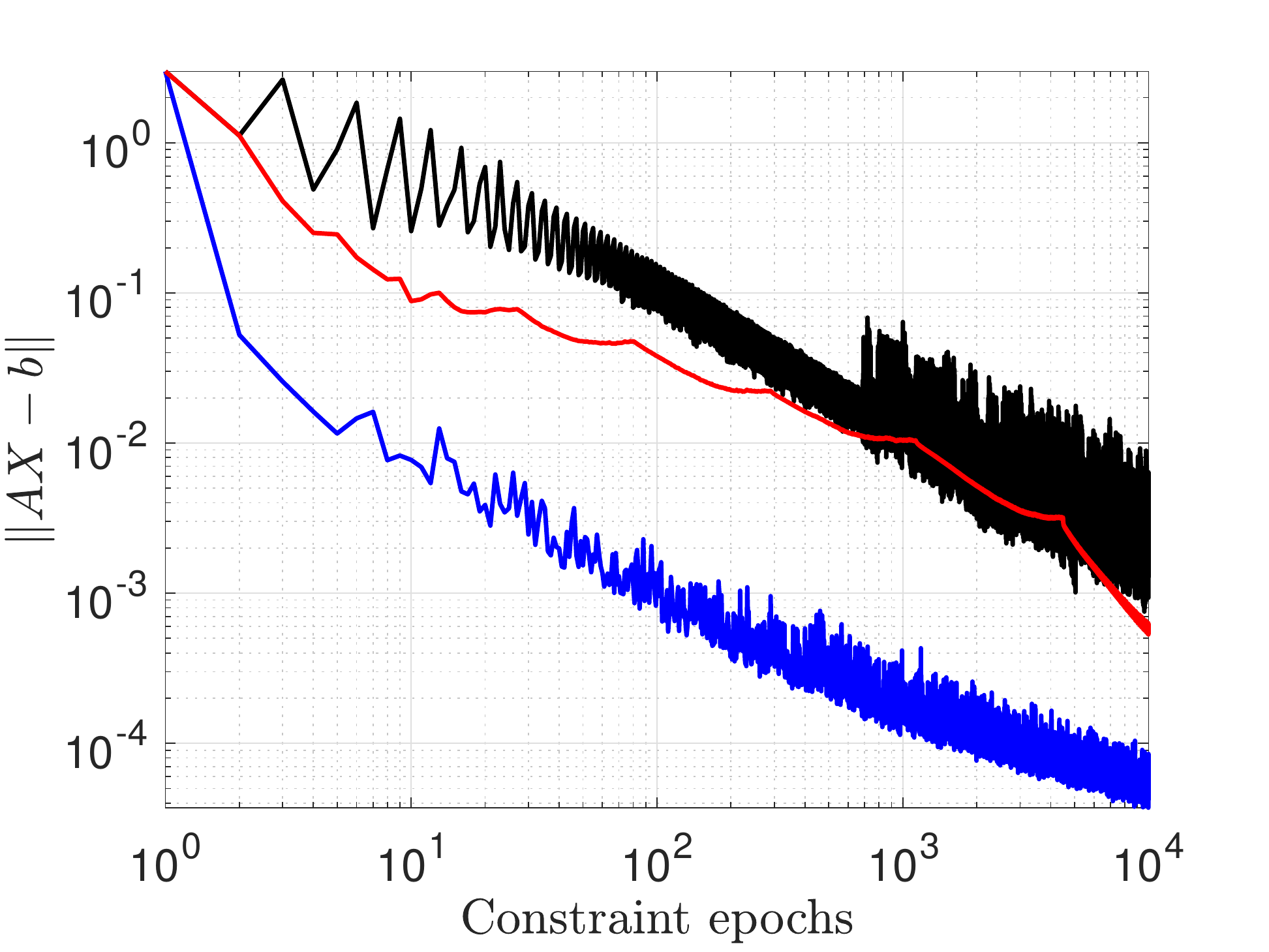}
\caption{Synthetic SDPs, with each column showing the convergence in objective suboptimality (top) and in feasibility (bottom) for a specific problem. The left hand-side column corresponds to a problem with $\texttt{5e2}$ constraints, while the right hand-side one to a problem with $\texttt{5e3}$ constraints.}\label{fig:conv1e4}
\end{minipage}
\hspace{1.7mm}
\begin{minipage}[t]{.319\textwidth}
\setlength{\lineskip}{0pt}
  \includegraphics[width=\linewidth]{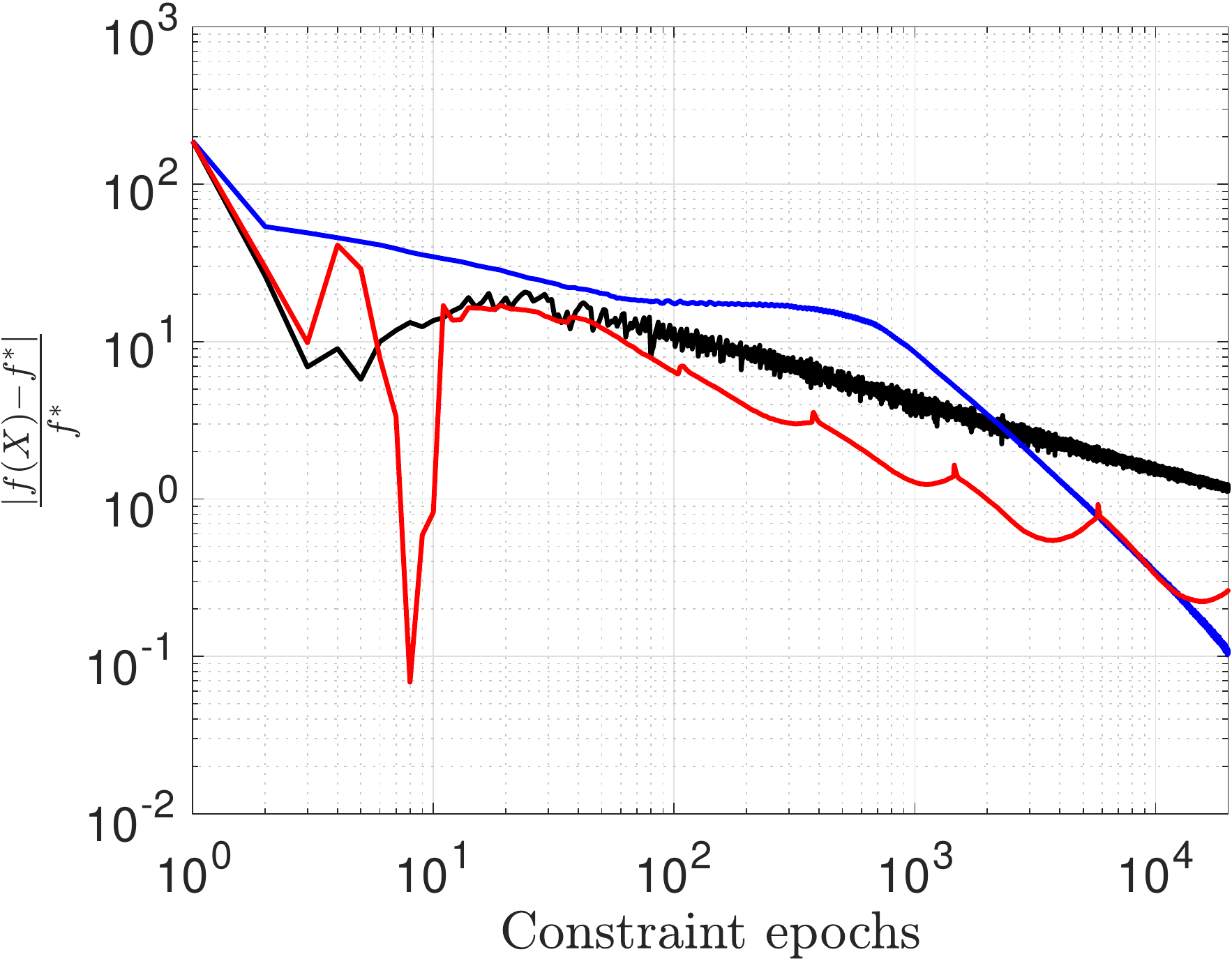}\par
  \includegraphics[width=\linewidth]{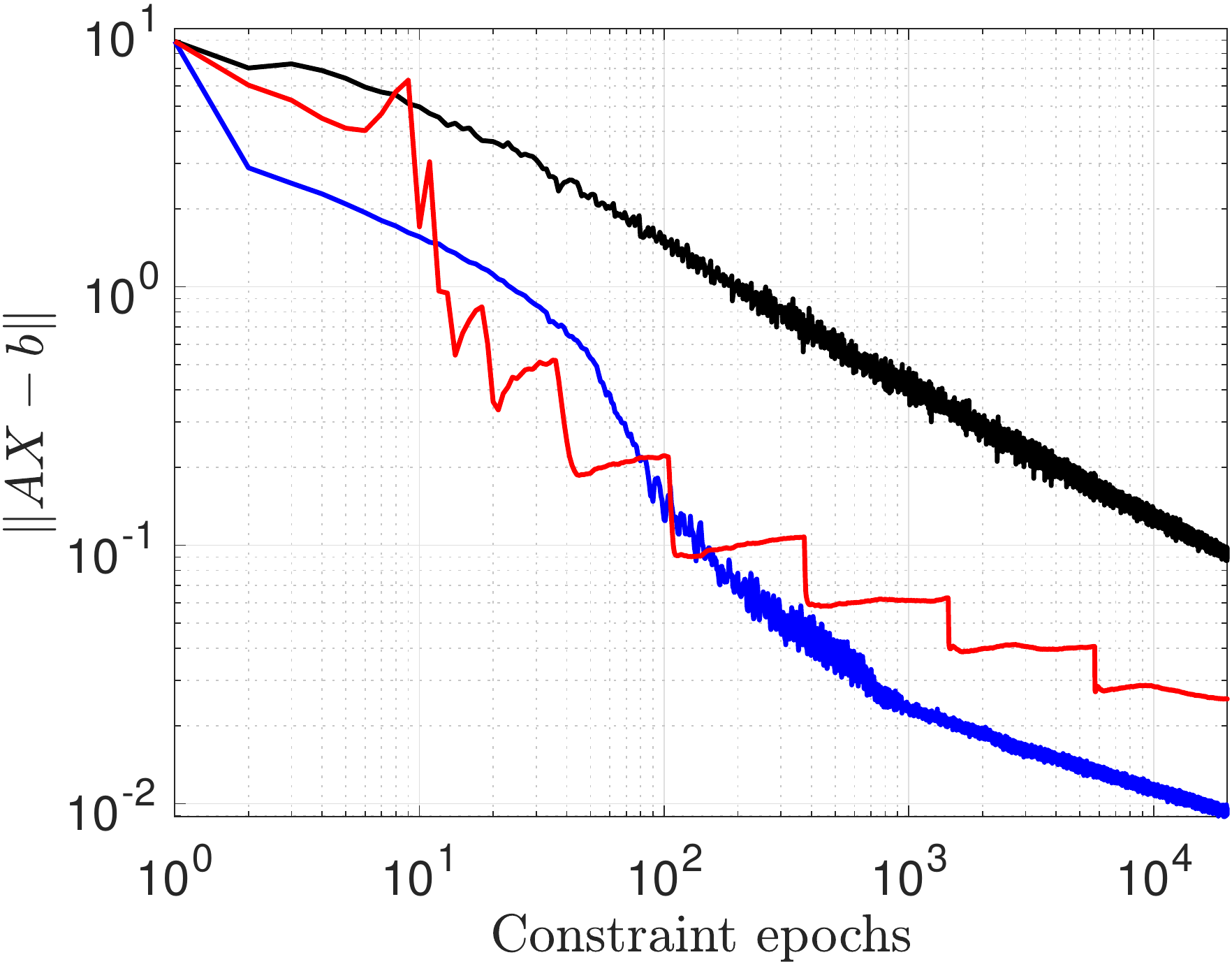}
\caption{The K-means SDP relaxation, with convergence in objective suboptimality (top) and in feasibility (bottom).}\label{fig:mnist_small}
\end{minipage}
\end{figure*}

\begin{figure*}[hb!]
\centering
\begin{minipage}[t]{\textwidth}
\setlength{\lineskip}{0pt}
\includegraphics[width=.33\linewidth]{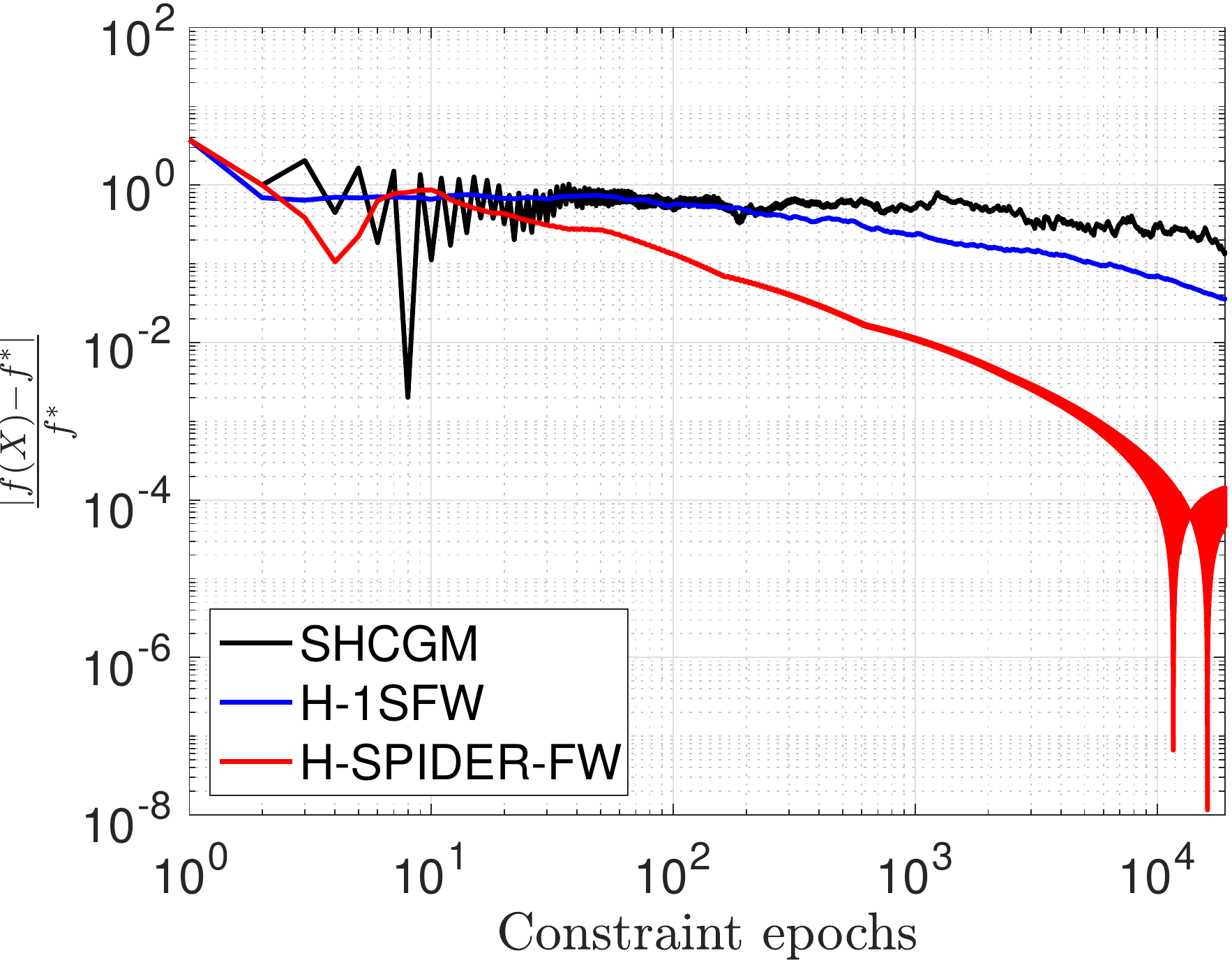}
\includegraphics[width=.33\linewidth]{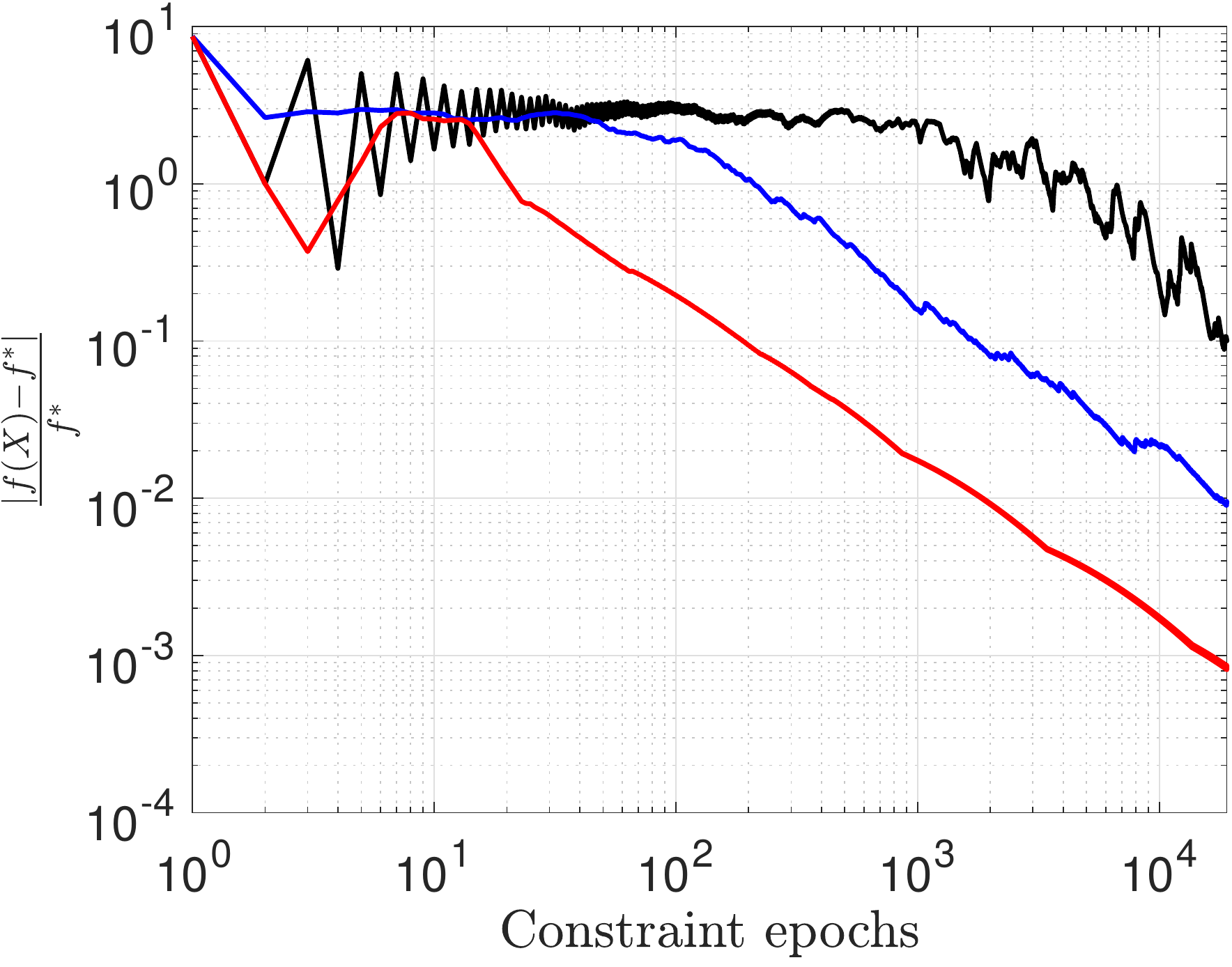}
\includegraphics[width=.33\linewidth]{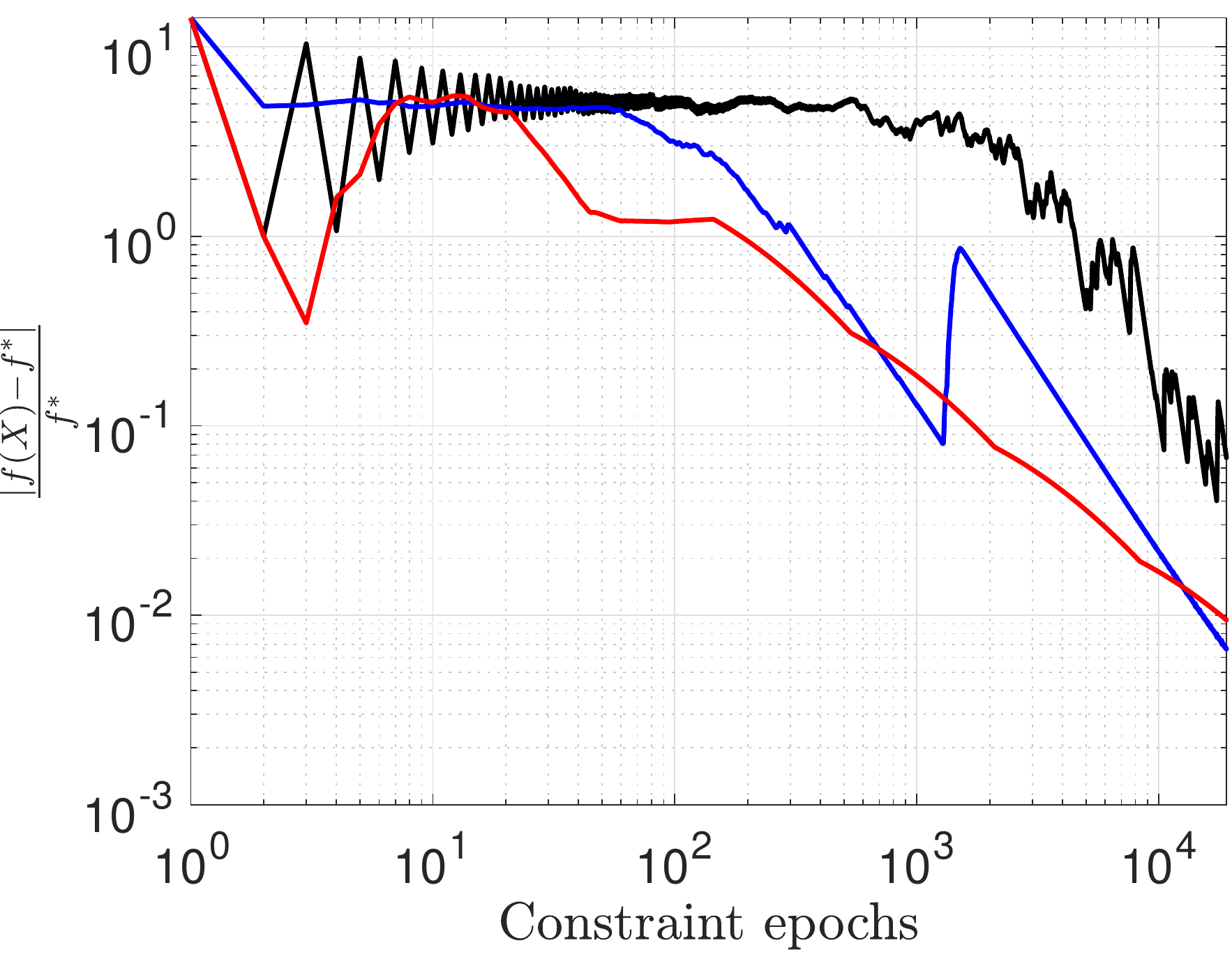}\par
\includegraphics[width=.33\linewidth]{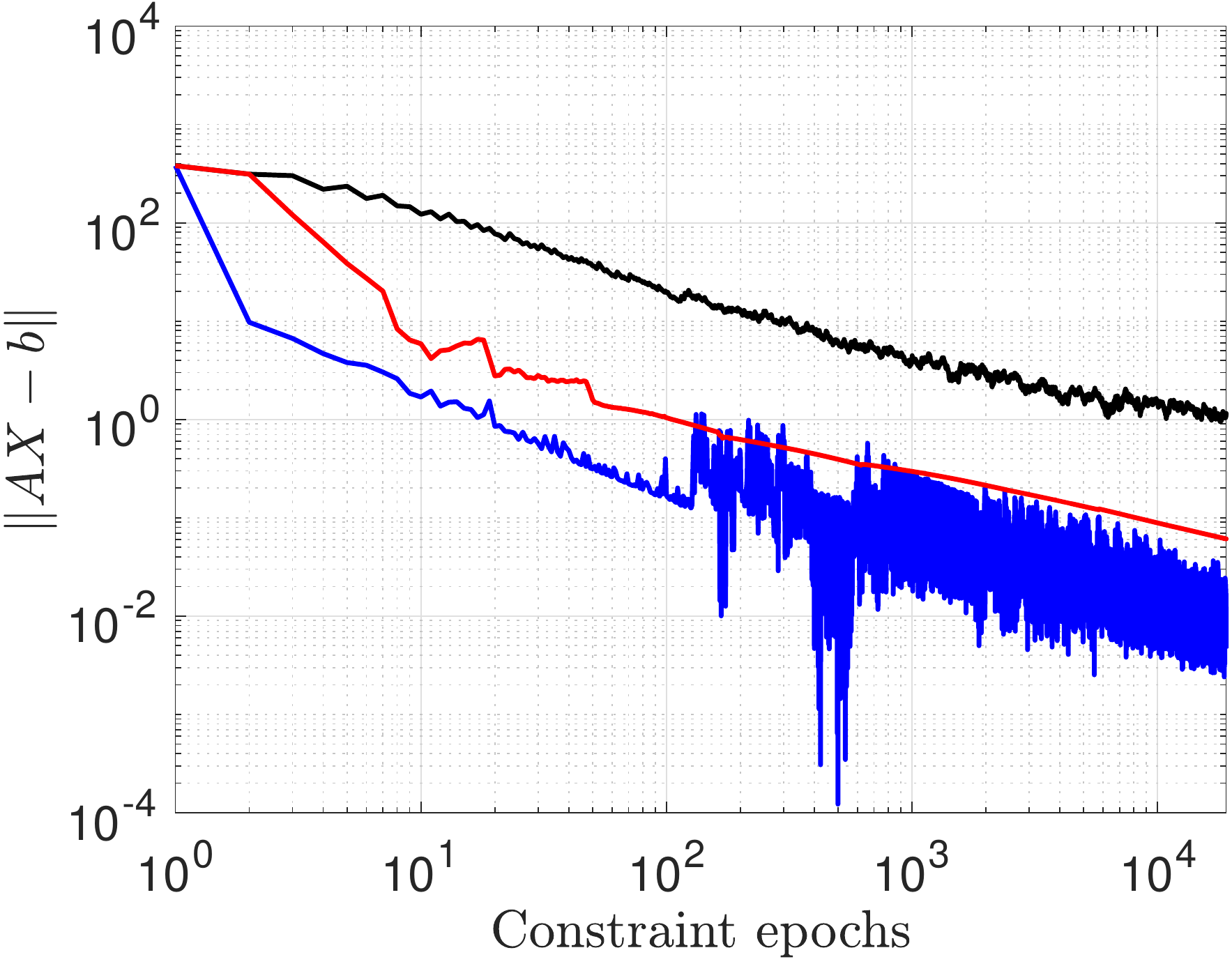}
\includegraphics[width=.33\linewidth]{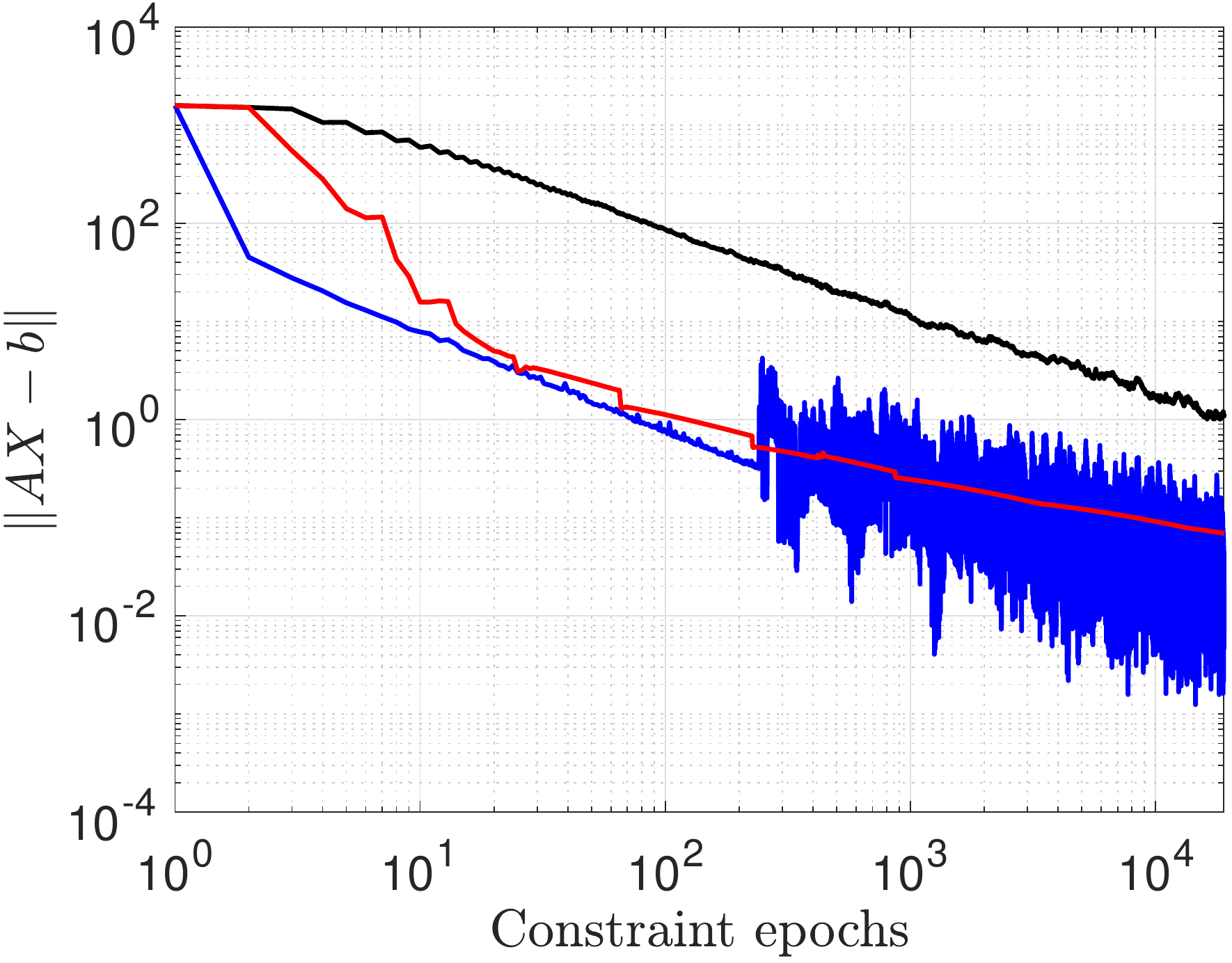}
\includegraphics[width=.33\linewidth]{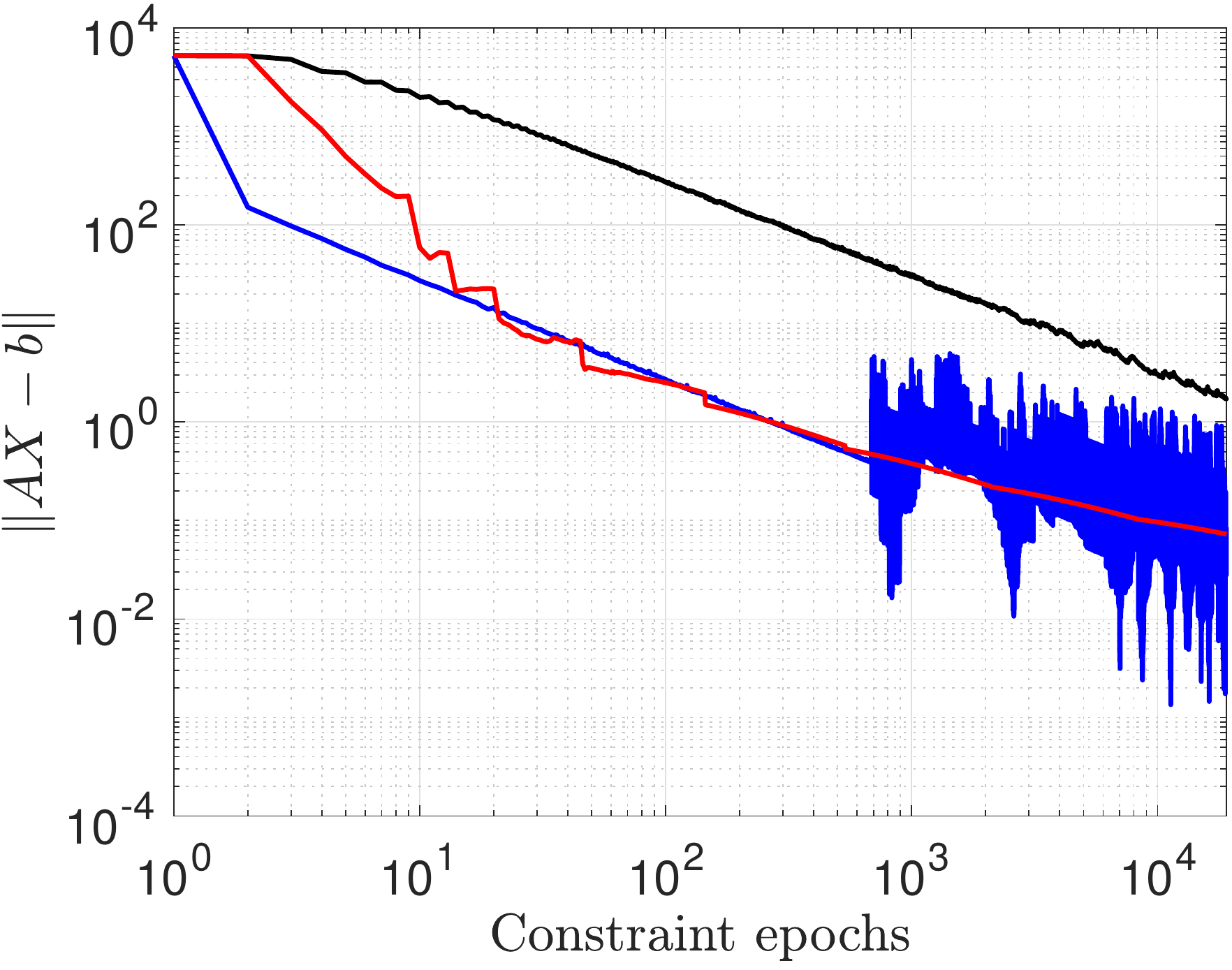}
\caption{The Sparsest Cut-associated SDP relaxation, where each column shows the convergence in objective suboptimality (top) and feasibility (bottom) for a specific problem. From left to right, the results correspond to graphs \emph{mammalia-primate-association-13}, \emph{insecta-ant-colony1-day37} and \emph{insecta-ant-colony4-day10}, sorted by increasing size. }\label{fig:sparsest_cut}
\end{minipage}
\end{figure*}

For demonstrating the empirical efficiency of our algorithms, we apply them to three problem instances: synthetically-generated SDPs, the K-means clustering SDP relaxation and the Sparsest Cut-associated SDP.

\textbf{Evaluation Metrics:} {Our experiments subscribe to a finite-sum template, where we define $f(x) \defeq \sum_{i = 1}^{n_1} f_i(x)$ and $g_{\beta}(Ax) = \sum_{i = 1}^{n_2} g_{i, \beta}(A_i^Tx)$. The objective convergence is recorded as $|f(x) - f^\star|$, with $f^\star \defeq \fopt$. Due to imperfect feasibility, the value of $f(x)$ can overshoot $f^\star$, since the constrained optimum is not the global one. This usually appears as the increase of $|f(x) - f^\star|$ immediately after a significant drop when the quantity $f(x) - f^\star$ becomes negative; then the decreasing trend restarts, as the objective and constraints re-balance. Such a phenomenon is common for homotopy-based methods, see for instance \cite{yurtsever2018conditional}. Lastly, the feasibility is recorded as $\| Ax- b \|$.}

\textbf{Baseline:} {To the best of our knowledge, the HCGM \cite{yurtsever2018conditional} and the SHCGM \cite{locatello2019stochastic} are the only algorithms which tackle SDPs under the conditional gradient framework. The latter represents the empirical state-of-the-art and we choose it as the baseline for our experiments.
}

\subsection{Synthetic SDP Problems}
\label{sec:exp-synth}
This proof-of concept experiment aims to show the performance of our fully stochastic methods, given a fixed problem dimension and an increasing set of constraints. We consider the synthetic SDP:
\begin{align}
    &\pmb{\min\limits_{\substack{ X \in \mathbb{S}^d_+ \\ \text{tr}(X) \leq \frac{1}{d}}}} &&\hspace{-12mm}\left\langle C,  X \right\rangle  \nn \\
                &\textbf{subject to } &&\hspace{-12mm}\text{tr}(A_iX) = b_i, i = 1 \ldots n \nn
\end{align}
where the entries of $A_i$ and $C$ are generated from $\mathcal{U}(0, 1)$, and $b_i = \langle A_i, X^*\rangle$ for a fixed $X^*$. We perform uniform sampling on the pairs $(A_i, b_i)$ for computing their stochastic gradients in our algorithms. We fix the dimension to be $d=20$ and vary the size of constraints with $n = \texttt{5e2}$ and $\texttt{5e3}$.

For a fair comparison, we sweep the parameter $\beta_0$ for the three algorithms in the range $\left[\texttt{1e-7}, \texttt{1e1}\right]$. We settle for $\texttt{1e-7}$, $\texttt{1e-7}$ and $\texttt{1e-5}$ for SHCGM, H-1SFW and H-SPIDER-FW, respectively. For H-1SFW and SHCGM, we choose the batchsize to be 1\% of the data.

Figure~\ref{fig:conv1e4} illustrates the outcome of the experiments, where we observe a clear improvement of the stochastic algorithms over the baseline with a stable margin throughout the test cases.

Interestingly, H-1SFW exhibits strong empirical performance on the synthetic data, much better than its theoretical worst-case bound. A possible explanation is that the entries of $C$ and $A_i$ are generated from a ``benign'' distribution and \emph{concentrate} around its mean \cite{ledoux2001concentration}. In such scenarios, even a small subset of constraints allows for effective variance reduction. For comparison, we provide an additional set of results for synthetic SDPs generated from a less well-behaved distribution in \mbox{Appendix~\ref{sec:app-synth-exp}}. Nevertheless, we observe the same good performance of H-1SFW even with real data, in the next sections.


Regarding H-SPIDER-FW, we observe that the suboptimality and feasibility decrease at the rate $k^{-\frac{1}{2}}$ and $k^{-\frac{3}{4}}$, respectively, which is better than the worst-case bounds in Theorem~\ref{thm:spiderfw}.

\subsection{The K-means Clustering Relaxation}
\label{sec:stochastic-k-means}

\renewcommand{\arraystretch}{1.5}
\setlength{\tabcolsep}{6pt}
\begin{table*}[tp]
\caption{Details of the Network Repository \cite{nr} graphs used in the experiments.}
\label{table:graphs}
\centering
\begin{tabular}{ >{\centering\arraybackslash}  m{4.9cm}  >{\centering\arraybackslash}  m{0.8cm}  >{\centering\arraybackslash}  m{0.8cm}  >{\centering\arraybackslash}  m{1.7cm}  >{\centering\arraybackslash}  m{1.7cm}  >{\centering\arraybackslash}  m{2.1cm}  >{\centering\arraybackslash} m{2.2cm}} 
\toprule
 \textbf{Graph name} & $\pmb{\lvert V \rvert}$ &$\pmb{\lvert E \rvert}$ & \textbf{\shortstack{Avg. \\ node degree }} &\textbf{\shortstack{Max. \\ node degree }}  & \textbf{\shortstack{USC SDP \\ dimension}} & \textbf{\shortstack{ USC SDP \\ \#  constraints}} \\ 
\midrule
  mammalia-primate-association-13 & 25 & 181 & 14 & 19 & $X \in \R^{25 \times 25}$ & $\sim \texttt{6.90e3}$\\
  insecta-ant-colony1-day37 & 55 & 1k & 42 & 53 & $X \in \R^{55 \times 55}$ &  $\sim \texttt{7.87e4}$ \\
  insecta-ant-colony4-day10 & 102 & 4k & 79 & 99 & $X \in \R^{102 \times 102}$ & $\sim \texttt{5.15e5}$ \\
\bottomrule
\end{tabular}
\end{table*}

We consider the unsupervised learning task of partitioning $d$ data points into $k$ clusters. We adopt the SDP formulation in \cite{peng2007approximating}, which amounts to solving:
\begin{align}
\label{eq:sdp_clustering}
&\pmb{\min_{X\in\mathcal{X} }} &&\hspace{-34mm} \left\langle  C, X \right\rangle \nn\\  
&\textbf{subject to }  &&\hspace{-34mm} X\vec{1} = \vec{1},\nn\\
&&\hspace{-12mm} X_{i,j} \geq 0, \;\; 1 \leq i,j \leq d.
\end{align}
Here, $C \in \R^{d\times d}$ is the Euclidean distance matrix of the $d$ data points, $\mathcal{X} = \{  X \in \R^{d\times d}: X\succeq 0,\ \textup{tr}(X) \leq k \}$, $\vec{1}$ is the all 1's vector. Notice that the number of linear constraints in \eqref{eq:sdp_clustering} is $\mathcal{O}(d^2)$.

In order to compare against existing work, we adopt the MNIST dataset ($k=10$) \cite{lecun-mnist} with $d = 10^3$ samples and perform data preprocessing as in \cite{mixon2016clustering}. The very same setup appeared in several works \cite{mixon2016clustering, yurtsever2018conditional, locatello2019stochastic}, with SHCGM \cite{locatello2019stochastic} showing the best practical performance.

We perform parameter sweeping on $\beta_0 \in \left[\texttt{1e-7}, \texttt{1e2}\right]$ for H-1SFW and H-SPIDER-FW, and settle for \texttt{5e-2} and \texttt{6e0}, respectively. For SHCGM, we adopt the same hyperparameter as in \cite{locatello2019stochastic}. The batchsize for H-1SFW and SHCGM is set to 5\%.

The comparison of our algorithms against SHCGM is reported in Figure~\ref{fig:mnist_small}. H-1SFW and H-SPIDER-FW converge at a comparable rate, with both clearly overtaking the baseline with regards to objective suboptimality and feasibility convergence.

\subsection{Computing an $\ell_2^2$ Embedding for the Uniform Sparsest Cut Problem}
\label{sec:sp-cut}

The Uniform Sparsest Cut problem (USC) aims to find a bipartition $(S, \bar{S})$ of the nodes of a graph $G = (V, E)$, $\vert V \vert = d$, which minimizes the quantity 
\begin{equation*}
\label{sceq}
       \frac{E(S, \bar{S})}{\lvert S \rvert \lvert \bar{S} \rvert}, 
\end{equation*}
where $E(S, \bar{S})$ is the number of edges connecting $S$ and $\bar{S}$. This problem is of broad interest, with applications in areas such as VLSI layout design, topological design of communication networks and image segmentation, to name a few. Relevant to machine learning, it appears as a subproblem in hierarchical clustering algorithms \citep{dasgupta2016cost, chatziafratis2018hierarchical}.

Computing such a bipartition is NP-hard and intense research has gone into designing efficient approximation algorithms for this problem. In the seminal work of \citet{arv} an $\bigO{\sqrt{\log d}}$ approximation algorithm is proposed for solving USC, which relies on finding a \emph{well-spread} $\ell_2^2$ geometric representation of $G$ where each node $i\in V$ is mapped to a vector $v_i$ in $\mathbb{R}^d$. In this experimental section we focus on solving the SDP that computes this geometric embedding, as its high number of triangle inequality constraints ($\bigO{d^3}$) makes it a suitable candidate for our framework. The canonical formulation of the SDP is given below (for the original formulation, see \mbox{Appendix~\ref{subsec:app-usc}}).
\vspace{2mm}

\scalebox{0.98}{
    $\!\begin{aligned}
   &\pmb{\min_{X \in \X}}&&\hspace{-66mm}  \langle L, X \rangle \\
   &\textbf{subject to} &&\hspace{-66mm} d\Tr(X) - \Tr(\mathbf{1}_{d\times d}X) = \frac{d^2}{2} \\
   && X_{i,j} + X_{j,k} - X_{i, k} - X_{j,j} \leq 0, \;\; \forall\ i, j, k \in V  \\
  \end{aligned}$  
  }
  
\vspace{3mm}
Here, $L$ represents the Laplacian of $G$, $\X = \{  X \in \R^{d\times d}: X\succeq 0,\ \textup{tr}(X) \leq d \}$ and $X_{i,j} = \dotprod{v_{i}}{v_j}$ gives the geometric embedding of the nodes. We run our algorithms on three graphs of different sizes from the Network Repository dataset \cite{nr}, whose details are summarized in Table~\ref{table:graphs}. Note the cubic dependence of the number of constraints relative to the number of nodes. We perform parameter sweeping on $\beta_0 \in [\texttt{1e-5}, \texttt{1e5}]$ using the smallest graph, \emph{mammalia-primate-association-13}, and keep the same parameters for all the experiments. The values of $\beta_0$ for SHCGM, H-1SFW and H-SPIDER-FW are \texttt{1e2}, \texttt{1e-2} and \texttt{1e1} respectively, and the batch size for both H-1SFW and SHCGM is set to 5\%.

Figure~\ref{fig:sparsest_cut} depicts the outcomes of the experiments, with both our algorithms consistently outperforming SHCGM and H-SPIDER-FW attaining the fastest convergence. A possible explanation is that, given the much larger number of constraints relative to the problem dimension ($\mathcal{O}\left( n^3  \right)$ v.s $\mathcal{O}\left( n^2  \right)$), H-SPIDER-FW's increasing minibatches readily reach an adequate balance between feasibility enforcement and objective minimization.

\section*{Acknowledgements}

The authors are grateful to Mehmet Fatih Sahin and Alp Yurtsever for the helpful discussions throughout the development of this paper.

This work was partially supported by the Swiss National Science Foundation (SNSF) under  grant number 200021\_178865 / 1; the Army Research Office under Grant Number W911NF-19-1-0404; the European Research Council (ERC) under the European Union's Horizon 2020 research and innovation programme (grant agreement no 725594 - time-data).

\bibliography{references}
\bibliographystyle{icml2020}

\newpage

\appendix
\captionsetup[table]{skip=10pt}
\onecolumn

{\Huge Appendix}

\section{Additional Experiment Information}
In this section we provide some omitted experiment details.

\subsection{Experiment Setup}
The experiments presented in this paper were implemented in MATLAB R2019b and executed on a 2,9 GHz 6-Core Intel Core i9 CPU with 32 GB RAM. For retrieving the values of $\fopt$ we used the code of~\citep{mixon2016clustering} which relies on SDPNAL+~\cite{yang2015sdpnal} for the clustering experiments, and CVX for the Sparsest Cut ones. The code is included in the supplemental material. 

\subsection{Additional results for synthetic SDPs}
\label{sec:app-synth-exp}
The setup for these experiments is the same as that of Section~\ref{sec:exp-synth}, but with a different distribution for generating $A_i$ and $C$. Specifically, we use the heavy-tailed Stable distribution with parameters $(\alpha=1.5, \beta=0, \gamma=10, \delta=0)$. We sweep $\beta_0$ for all three algorithms in the range $\left[\texttt{1e-7}, \texttt{1e-1}\right]$ and settle for $\texttt{1e-5}$, $\texttt{1e-7}$, $\texttt{1e-6}$ for SHCGM, H-1SFW and H-SPIDER-FW, respectively. The results are depicted in Figure~\ref{fig:add-synth}.

We observe that, given this more difficult distribution, all methods are comparable in terms of convergence speed for both objective suboptimality and feasibility, with H-SPIDER-FW having an edge over the other two. 

\begin{figure*}[hb!]
\centering
\begin{minipage}[t]{\textwidth}
\setlength{\lineskip}{0pt}
\includegraphics[width=.33\linewidth]{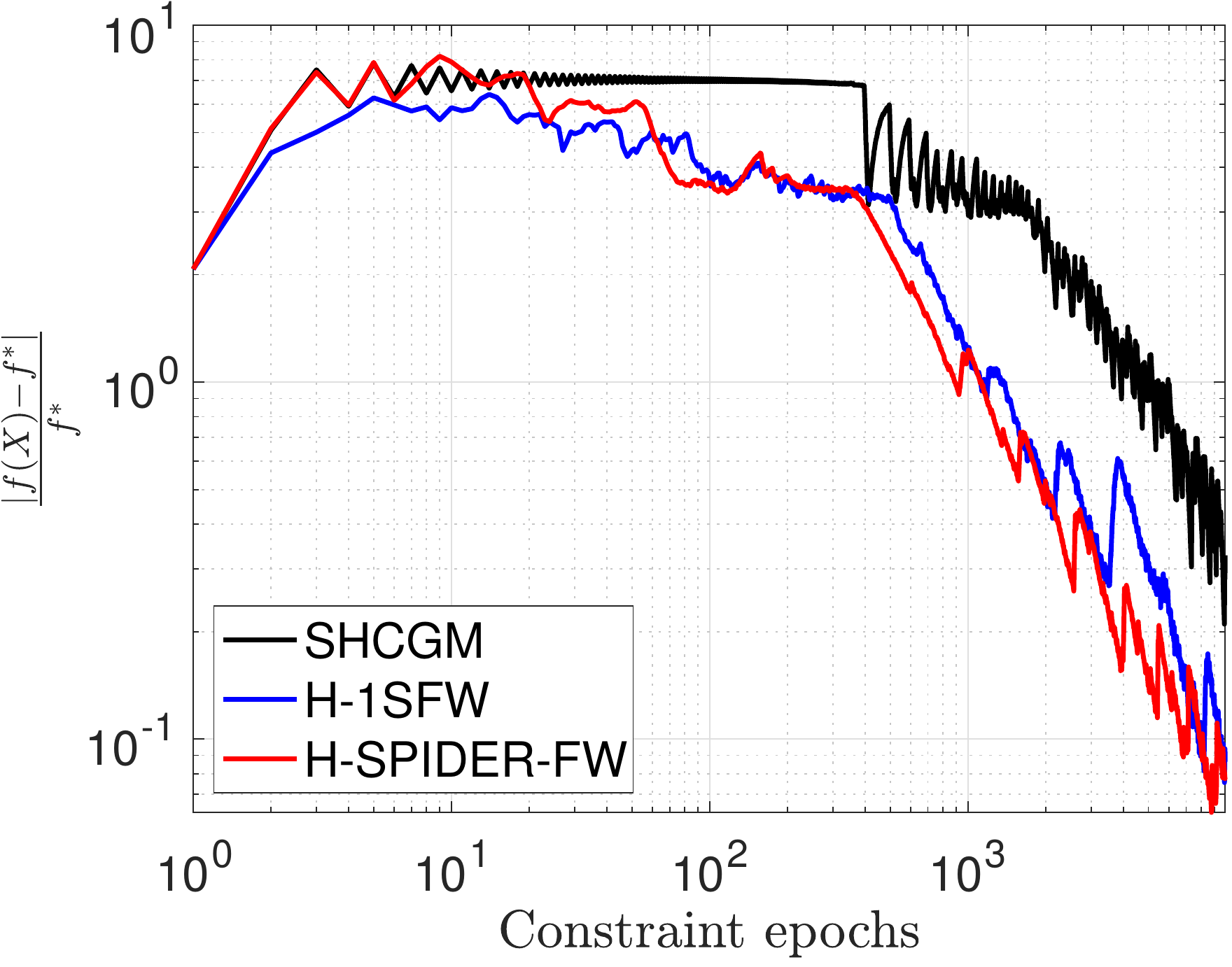}
\includegraphics[width=.33\linewidth]{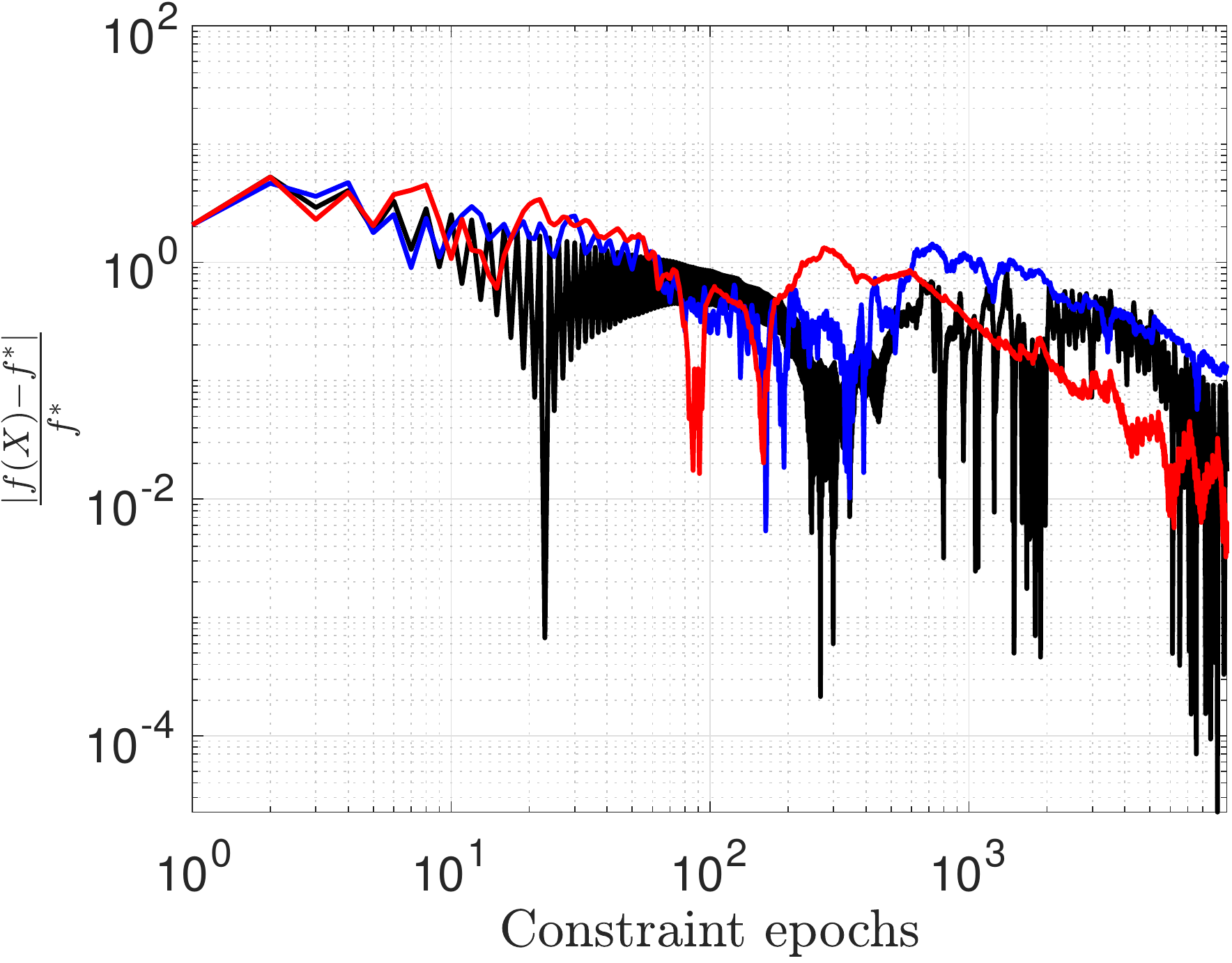}
\includegraphics[width=.33\linewidth]{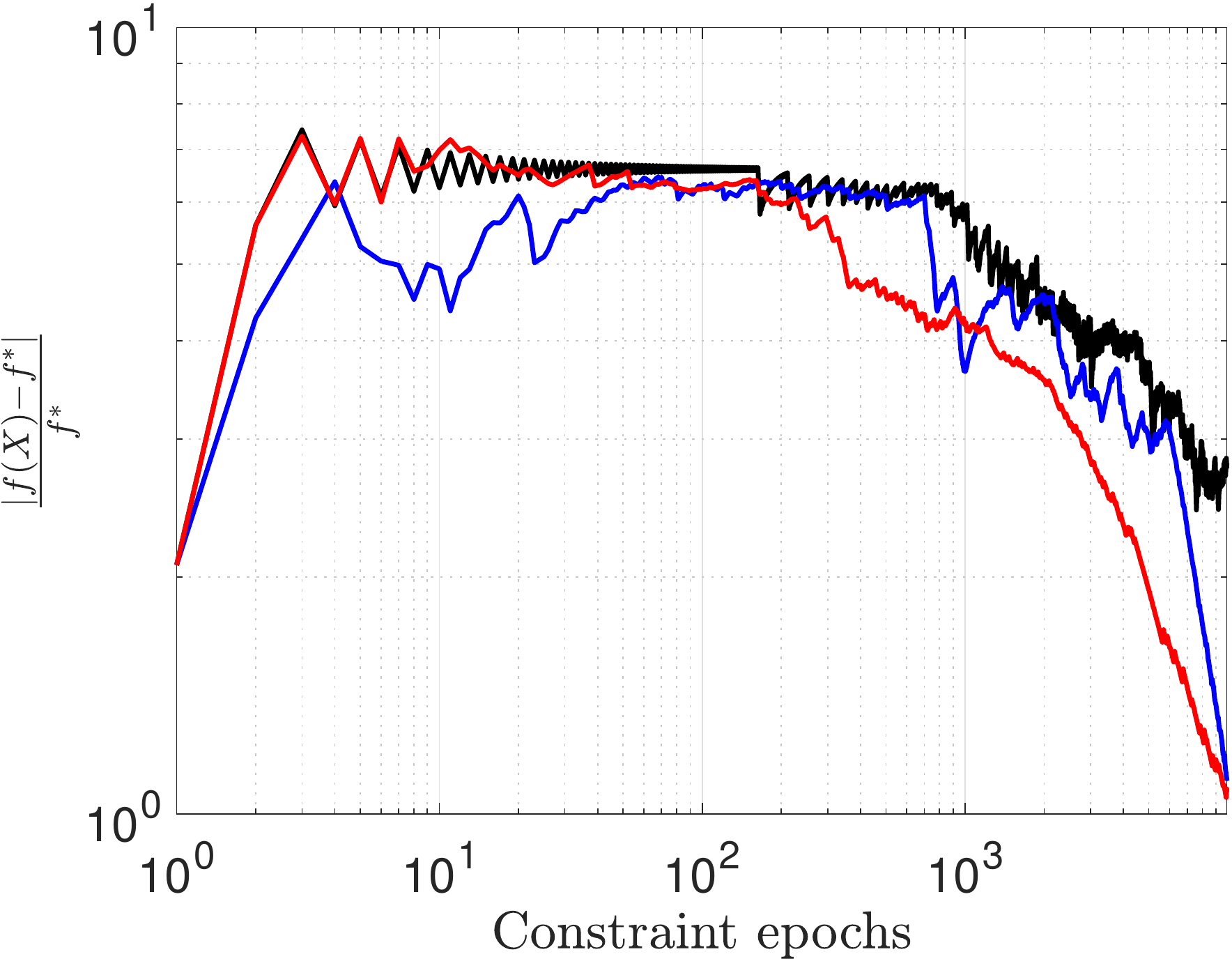}\par
\includegraphics[width=.33\linewidth]{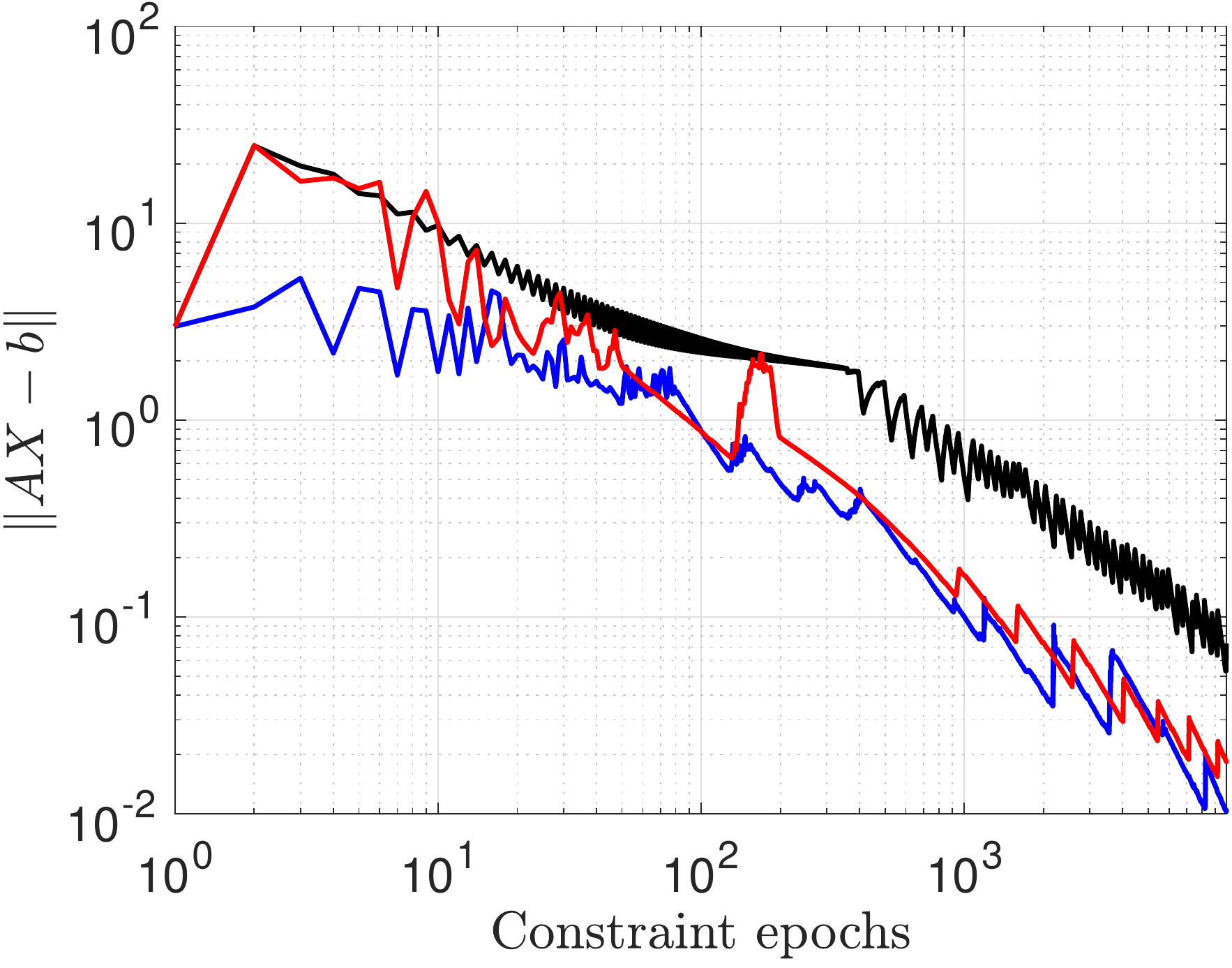}
\includegraphics[width=.33\linewidth]{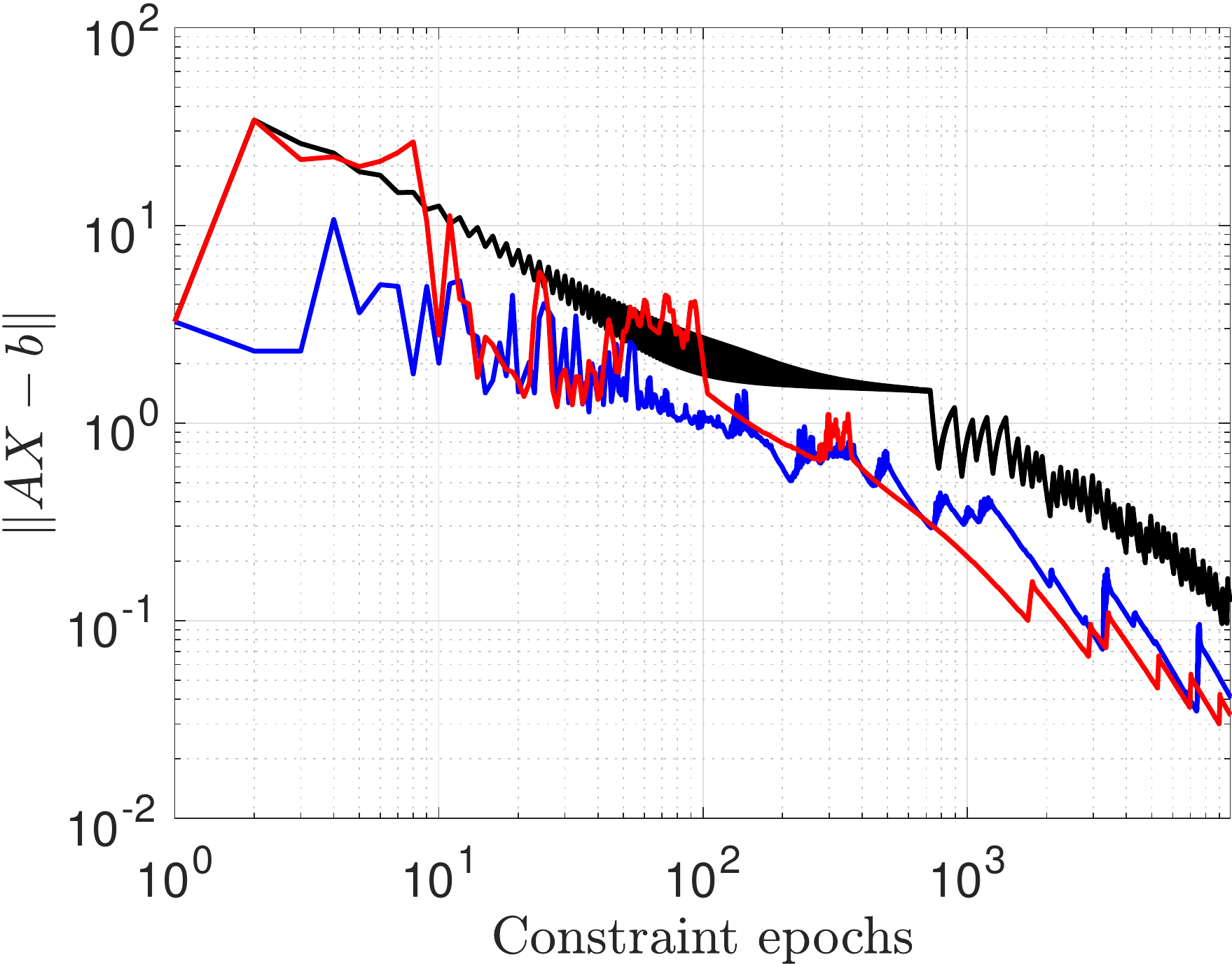}
\includegraphics[width=.33\linewidth]{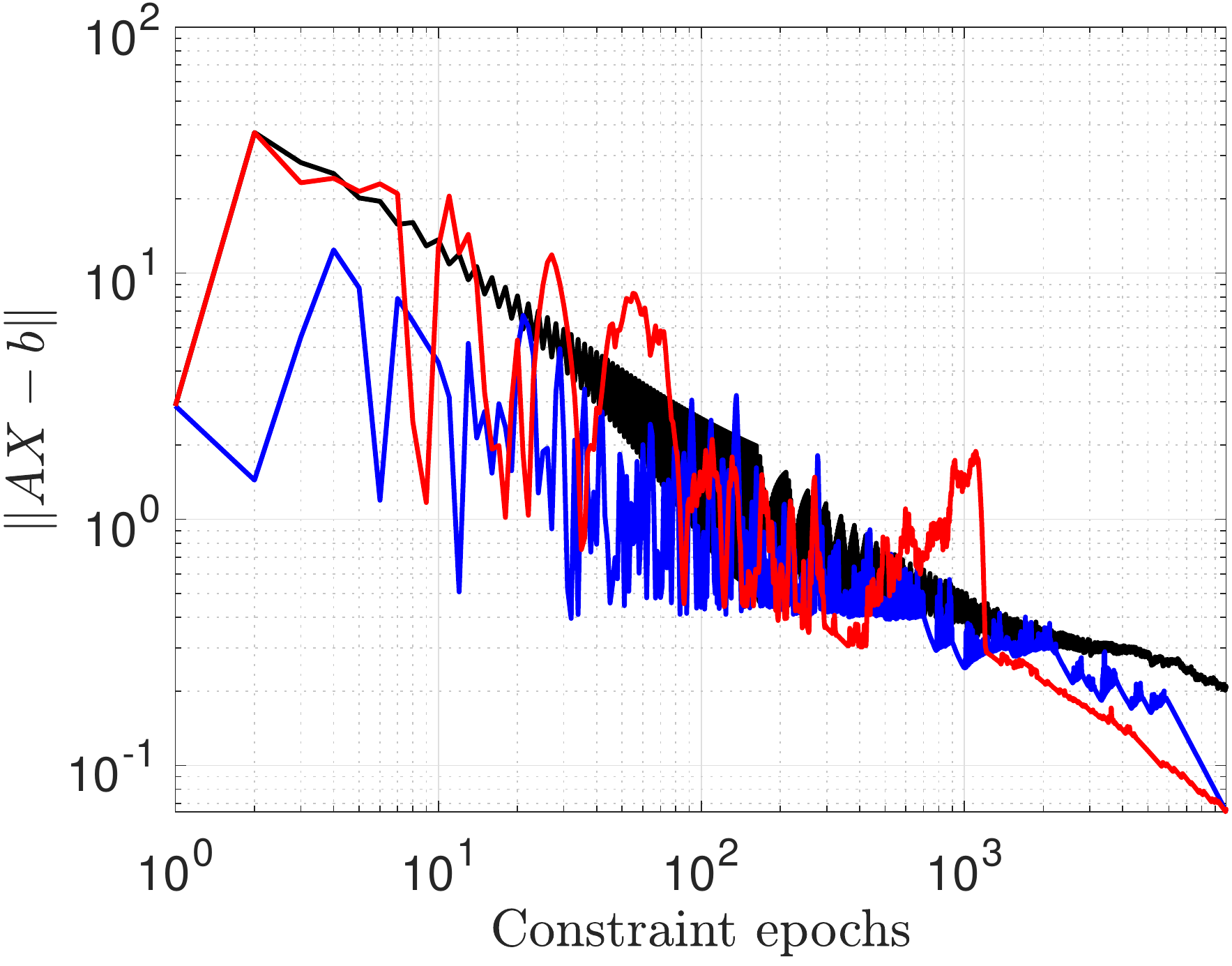}
\caption{Synthetic SDPs, with each column showing the convergence in objective suboptimality (top) and in feasibility (bottom) for a specific problem. From left to right, the columns depict the results for problems with $\texttt{5e2}$, $\texttt{1e3}$ and $\texttt{5e3}$ constraints.}\label{fig:add-synth}
\end{minipage}
\end{figure*}

\subsection{The Uniform Sparsest Cut SDP}
\label{subsec:app-usc}

The left column of Table~\ref{table:sdp} provides the original SDP formulation of~\citep{arv} for finding the $\ell_2^2$ embedding of nodes $i \in V$; the right column contains the corresponding canonical formulation. In our experiments we use the latter formulation to which we add the trace constraint $\mathrm{tr}(X) \leq d$. This additional constraint does not change the optimal objective~\cite{iyengar2010feasible}.

\def\Tr{{ \textup{Tr}  }}
\setlength{\tabcolsep}{10pt}
\begin{table}[h!]
\caption{SDP formulations for retrieving the $\ell_2^2$ embedding of graph nodes.}
\label{table:sdp}
\centering
\begin{tabular}{ll}
\toprule
 \textbf{Original SDP} & \textbf{Canonical SDP}  \\ 
\midrule
 \parbox{8cm}{ \vspace{4mm}\scalebox{0.85}{
 $\!\begin{aligned}
   \textbf{minimize} & \quad \frac{1}{d^2}\sum_{(i,j) \in E} \lvert v_i - v_j\rvert^2
   \\
   \\
   \textbf{subject to} & \quad \sum_{\substack{i, j \in V \\ i \neq j} } \lvert v_i - v_j\rvert^2 = d^2
   \\
   & \quad \lvert v_i - v_j\rvert^2 + \lvert v_j - v_k\rvert^2 \geq \lvert v_i - v_k\rvert^2 \quad \forall\ i, j, k \in V \\
   \ \\
  \end{aligned}$}} & \parbox{8cm}{
    \vspace{6mm} \scalebox{0.85}{
    $\!\begin{aligned}
   \textbf{minimize}& \quad \Tr(LX)
   \\
   \\
   \textbf{subject to} & \quad d\Tr(X) - \Tr(\mathbf{1}_{d\times d}X) = \frac{d^2}{2}
    \\
   \\
   &\vspace{2mm}\quad X_{i,j} + X_{j,k} - X_{i, k} - X_{j,j} \leq 0 \quad \forall\ i, j, k \in V  \\
   \ \\
  \end{aligned}$  
  }} \\ 
\bottomrule
\end{tabular}
\end{table}

\vspace{5mm}

\section{Omitted proofs}

\subsection{Preliminaries}
\label{app:prelim}

We begin by introducing some new notation used throughout the proofs and state some simple technical observations:

\begin{enumerate}[itemsep=10pt]
\item \textbf{Notation}
            \begin{itemize}[itemsep=15pt]
		    \item $L_A \defeq \sup_\xi \| A(\xi) \|^2$;
		    \item $f(x) \defeq \mathbb{E} \left[ f(x, \xi)\right]$. 
		    \item From above it follows that Assumption~\ref{ass:fstoc_finite_var} can be rewritten as: \mbox{$\mathbb{E} \left[ \nabla f(x, \xi) \right]  = \nabla f(x)$} and \mbox{$\mathbb{E} \left[ \| \nabla f(x, \xi) - \nabla f(x)\|^2\right] \leq \sigma_f^2 < +\infty$}; 
		    \item $\g{\Astoc x} \defeq \delta_{\{\bstoc\}}(\Astoc x)$;
		    \item $\!
		    \begin{aligned}[t]
		                        & \g[\beta]{\Astoc x}  && \defeq \frac{1}{2\beta} \mathrm{dist}(\Astoc x, \bstoc)^2 \\
 & && =  \frac{1}{2\beta} \normsqr{\Astoc x - \Pi_{\bstoc}(\Astoc x)}, \\
 &  \text{where } && \Pi_{\bstoc}(\Astoc x)) = \argmin\limits_{y \in \bstoc} \normsqr{\Astoc x - y}; \text{also, } g_\beta \text{ is } \frac{1}{\beta} \text{-smooth}
               \end{aligned}$, 
		    \item $\G{\beta}{Ax} \defeq \Exp{\g[\beta]{\Astoc x}}{}, \;\; \nabla \G{\beta}{Ax} \defeq \Exp{\nabla \g[\beta]{\Astoc x}}{}$, where $A: \R^d \rightarrow \H$ is a linear operator such that $(Ax)\xi = \Astoc x$ and $G_{\beta}: \H \rightarrow \R \cup \{ \infty \}$.
		    \item $ F_{\beta_k}(x, \xi) \defeq f(x, \xi) + g_{\beta_k}(x, \xi), \;\; \nabla F_{\beta_k}(x, \xi) \defeq \nabla f(x, \xi) + \nabla g_{\beta_k}(x, \xi)$
		    \item We annotate averaged stochastic quantities with the symbol  $\sim$. For example, the averaged stochastic gradient of the constraints is expressed as $\tilde{\nabla}_x \g[\beta]{\Astoc x}$;
		    \item The optimal value of the dual problem at $\Astoc x$ is denoted as $\lambda^*_{\beta}(\Astoc x) \defeq \frac{1}{\beta} (\Astoc x - \Pi_{\bstoc}(\Astoc x)) \label{eq:def-y-beta}$;
		    \item The smoothed gap is defined as $S_{\beta}(x) \defeq F_{\beta}(x) - \fopt$.
		\end{itemize}
\item \textbf{Technical observations}
            \begin{enumerate}[label=\alph*., itemsep=15pt]
                \item \label{list:technical-obs-Gexp}From the definition of $G_{\beta}$: 
                        \begin{align*}
                            \nabla_x \G{\beta}{Ax} &= \Exp{\nabla_x \g[\beta]{\Astoc x}}{} \\
                                                                &= \Exp{A^T(\xi) \nabla \g[\beta]{\Astoc x}}{} \\
                                                                &= \Exp{\frac{1}{\beta}A^T(\xi) \left( \Astoc x - \Pi_{\bstoc}(\Astoc x) \right)}{};
                            \end{align*}
                \item \label{list:technical-obs-diff-Gs} Form smoothness of $G_{\beta}$, iterate update rule and non-expansiveness of projections: 
                                \begin{align*}
                                \normsqr{ \nabla G_{\beta}(Ax_{k+1}) - &\nabla G_{\beta}(Ax_k) } \\
                                        &= \left\lVert\frac{1}{\beta}\Exp{A^T(\xi) \left( \Astoc x_{k+1} - \Pi_{\bstoc}(\Astoc x_{k+1}) \right) - A^T(\xi) \left( \Astoc x_k - \Pi_{\bstoc}(\Astoc x_k) \right)}{} \right\rVert^2\\
                                &\leq \frac{1}{\beta^2} \mathbb{E} \left[ \left\lVert A^T(\xi) \Astoc \left( x_{k+1} - x_k\right)  + A^T(\xi) \left( \Pi_{\bstoc}(\Astoc x_k) - \Pi_{\bstoc}(\Astoc x_{k+1}) \right) \right\rVert^2\right] \\
                                &\leq \frac{1}{\beta^2} \mathbb{E} \left[ 2\left\lVert A^T(\xi) \Astoc \left( x_{k+1} - x_k\right) \right\rVert^2  + 2\left\lVert A^T(\xi) \left( \Pi_{\bstoc}(\Astoc x_k) - \Pi_{\bstoc}(\Astoc x_{k+1}) \right) \right\rVert^2\right] \\
                               &\leq \frac{2\gamma_k^2L_A^2\diam^2}{\beta^2} + \frac{2}{\beta^2}\mathbb{E} \left[\left\lVert A(\xi) \right\rVert^2 \left\lVert\Pi_{\bstoc}(\Astoc x_k) - \Pi_{\bstoc}(\Astoc x_{k+1})  \right\rVert^2\right] \\
                              &\leq \frac{2\gamma_k^2L_A^2\diam^2}{\beta^2} + \frac{2}{\beta^2}\mathbb{E} \left[\left\lVert A(\xi) \right\rVert^2 \left\lVert \Astoc x_k - \Astoc x_{k+1}  \right\rVert^2\right] \\
                              &\leq \frac{4\gamma_k^2L_A^2\diam^2}{\beta^2} ; \\
                                \end{align*}                
			    \item \label{list:technical-obs-g-variance} Variance of  $g_{\beta} (A(\xi)x, \xi)$:
			            \begin{align*}
                			    \mathbb{E} \left[ \left\lVert \nabla g_{\beta} (A(\xi)x, \xi) - \nabla G_{\beta}(Ax) \right\rVert^2 \right] &= \mathbb{E} \left[ \left\lVert \nabla g_{\beta}(A(\xi)x, \xi) \right\rVert ^2 - \left\lVert \nabla G_{\beta}(Ax) \right\rVert ^2 \right]\\
                			    &\leq \mathbb{E} \left[ \left\lVert \nabla g_{\beta}(A(\xi)x, \xi) \right\rVert ^2 \right]\\
                			    &\leq \frac{1}{\beta^2}\mathbb{E} \left[ \left\lVert A(\xi) \right\rVert^2 \left\lVert A(\xi)x - \Pi_{\bstoc}(\Astoc x) \right\rVert ^2 \right]\\
                			    &\leq \frac{1}{\beta^2} \mathbb{E} \left[ \left\lVert A(\xi) \right\rVert^2 \left\lVert A(\xi)x - \Astoc \xopt \right\rVert ^2 \right]\\
                			            &\leq \frac{L_A^2 D_{\X}^2}{\beta^2},  
                		\end{align*}
                		where we used the definition of $G_{\beta}$ and $\left\lVert A(\xi)x - \Pi_{\bstoc}(\Astoc x) \right\rVert ^2 \leq \left\lVert A(\xi)x - \Astoc \xopt \right\rVert ^2$
                \item \label{list:technical-obs-g-smoothness} Smoothness constant of $g_{\beta}(\Astoc x)$ and $F_{\beta}(x, \xi)$:
                \begin{align*}
                    \norm{\nabla g_{\beta}(\Astoc x) - \nabla g_{\beta}(\Astoc y)} &= \norm{\frac{A^T(\xi)}{2\beta} \left( \Astoc x - \Pi_{\bstoc}(\Astoc x) \right) - \frac{A^T(\xi)}{2\beta} \left( \Astoc y - \Pi_{\bstoc}(\Astoc y) \right)} \\
                        &\leq \frac{L_A}{2\beta} \norm{x - y} + \frac{\norm{\Astoc}}{2\beta} \norm{\Pi_{\bstoc}(\Astoc y)  -  \Pi_{\bstoc}(\Astoc x) } \\
                        &\leq \frac{L_A}{2\beta} \norm{x - y} + \frac{\norm{\Astoc}}{2\beta} \norm{\Astoc y  -  \Astoc x } \\
                        &\leq \frac{L_A}{\beta} \norm{x - y} 
                \end{align*}
                This implies that $F_{\beta}(x, \xi)$ is $(L_f + \frac{L_A}{\beta})$-smooth.
                
                \item  Properties of $g_{\beta}$ (results from \mbox{\textbf{Lemma 10}~in~\cite{tran2018smooth}}):
                    \begin{enumerate}[itemsep=15pt]
                        \item \label{list:technical-obs-trandinh-relationto-g} $g(z_1) \geq g_{\beta}(z_2) + \dotprod{\nabla g_{\beta}(z_2)}{z_1 - z_2} + \frac{\beta}{2} \normsqr{\lambda^*_{\beta}(z_2)}$
                        \item \label{list:technical-obs-trandinh-relationto-prev-beta} $\g[\beta_k]{\Astoc x_k} \leq \g[\beta_{k - 1}]{\Astoc x_k} + \frac{\beta_{ k-1} - \beta_k}{2} \normsqr{\lambda^*_{\beta_{k}}(\Astoc x_{k})} $
                    \end{enumerate}
        \end{enumerate}
\end{enumerate}

\medskip

\vspace{5mm}
Secondly, we restate \mbox{\textbf{Lemma 3.1}} from \citep{fercoq2019almost} for completeness, as we rely on it for translating the convergence rates from the smoothed gap onto objective suboptimality and feasibility.
\begin{lemma}[Restatement of \textbf{Lemma 3.1} from \citep{fercoq2019almost}]
\label{lemma:smoothed-gap}
\ \newline
Let $(x^*, \lambda^*)$ be a saddle point of \mbox{$\displaystyle \lagr(x, \lambda) \defeq f(x) + \int \langle A(\xi)x, \lambda(\xi)\rangle - \supp_{\bstoc}(\lambda(\xi)) \mu(d\xi)$}, where \mbox{$\supp_{\X}(x) \defeq \sup_{y \in \X}\dotprod{y}{x}$}. Then the following holds:
\begin{enumerate}
    \item $\displaystyle S_{\beta}(x) \geq -\frac{\beta}{2} \normsqr{\lambda^*}$
    \item    $\displaystyle F(x) - F(x^*) \geq  -\frac{1}{4\beta} \int \text{dist}(A(\xi)x, \bstoc)^2 dP(\xi) - \beta \norm{\lambda^*}^2$
    \item    $\displaystyle F(x) - F(x^*) \leq  S_{\beta}(x)$
    \item    $\displaystyle \int \text{dist}(A(\xi)x,  \chi(\xi))^2 dP(\xi) \leq  4\beta^2 \|\lambda^*\|^2 + 4 \beta S_{\beta}(x)$
\end{enumerate}
\end{lemma}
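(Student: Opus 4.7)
The plan is to derive the four inequalities by combining the Fenchel-type representation of the smoothed indicator $g_\beta$ with the saddle-point optimality of $(x^*,\lambda^*)$. Throughout I read the symbol $F(\cdot)$ in items (2) and (3) as the primal smooth objective $f(\cdot)$, using $F(x^*) = f(x^*)$ from feasibility of $x^*$. Parts (1) and (3) are essentially one-liners. For (1), I would expand $S_\beta(x) = F_\beta(x) - f(x^*)$ and lower-bound each smoothed integrand by plugging $y = \lambda^*(\xi)$ into $g_\beta(A(\xi)x) = \max_y\{\langle y, A(\xi)x\rangle - \supp_{b(\xi)}(y) - \tfrac{\beta}{2}\normsqr{y}\}$, which yields $F_\beta(x) \ge \lagr(x,\lambda^*) - \tfrac{\beta}{2}\normsqr{\lambda^*}$; the saddle inequality $\lagr(x,\lambda^*) \ge \lagr(x^*,\lambda^*) = f(x^*)$ then closes the argument. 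For (3), $g_\beta = \tfrac{1}{2\beta}\dist(A(\xi)x, b(\xi))^2 \ge 0$ gives $F_\beta(x) \ge f(x)$ and hence $f(x) - f(x^*) \le S_\beta(x)$.

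Part (2) is the main obstacle. Starting from the saddle inequality $f(x^*) = \lagr(x^*,\lambda^*) \le \lagr(x,\lambda^*)$ and rearranging yields $f(x) - f(x^*) \ge \mathbb{E}[\supp_{b(\xi)}(\lambda^*(\xi)) - \langle A(\xi)x, \lambda^*(\xi)\rangle]$. Since $\supp_{b(\xi)}(\lambda^*(\xi)) \ge \langle \lambda^*(\xi), \Pi_{b(\xi)}(A(\xi)x)\rangle$ by definition of the support function, the integrand dominates $\langle \lambda^*(\xi), \Pi_{b(\xi)}(A(\xi)x) - A(\xi)x\rangle$, which by Cauchy--Schwarz is bounded below by $-\norm{\lambda^*(\xi)}\,\dist(A(\xi)x, b(\xi))$. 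Applying the weighted Young inequality $ab \le \beta a^2 + \tfrac{1}{4\beta}b^2$ with $a = \norm{\lambda^*(\xi)}$ and $b = \dist(A(\xi)x, b(\xi))$, and then integrating, produces the required lower bound $f(x) - f(x^*) \ge -\tfrac{1}{4\beta}\mathbb{E}[\dist(A(\xi)x, b(\xi))^2] - \beta\normsqr{\lambda^*}$.

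Part (4) falls out by combining (2) with the identity $S_\beta(x) = f(x) - f(x^*) + \tfrac{1}{2\beta}\mathbb{E}[\dist(A(\xi)x, b(\xi))^2]$: substituting the lower bound from (2) cancels half of the quadratic term, and rearranging isolates $\mathbb{E}[\dist(A(\xi)x, b(\xi))^2] \le 4\beta S_\beta(x) + 4\beta^2\normsqr{\lambda^*}$. The delicate point is the choice of the Young constant in (2), which must be matched exactly to the coefficient $\tfrac{1}{2\beta}$ of the smoothing term so that the two quadratic-in-$\dist$ contributions combine cleanly into the $\tfrac{1}{4\beta}$ coefficient demanded in (4); once that calibration is set, the remaining manipulations are direct consequences of the identities and basic inequalities collected in Appendix~\ref{app:prelim}.
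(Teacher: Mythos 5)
Your proposal is correct. Note that the paper itself does not prove this lemma --- it is restated verbatim from \citet{fercoq2019almost} and the proof is deferred to that reference --- so your derivation supplies an argument the paper only cites; it is, in substance, the standard one: part (1) from evaluating the max in $g_\beta(u)=\max_y\{\langle y,u\rangle-\supp_{\bstoc}(y)-\tfrac{\beta}{2}\normsqr{y}\}$ at $y=\lambda^*(\xi)$ plus the saddle inequality; part (3) from $g_\beta\ge 0$; part (2) from the other saddle inequality, $\supp_{\bstoc}(\lambda^*(\xi))\ge\langle\lambda^*(\xi),\Pi_{\bstoc}(\Astoc x)\rangle$, Cauchy--Schwarz and Young with the constant calibrated to $\beta$; part (4) by combining (2) with the identity $S_\beta(x)=f(x)-\fopt+\tfrac{1}{2\beta}\mathbb{E}[\dist(\Astoc x,\bstoc)^2]$. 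Two small points worth making explicit: your reading of $F(\cdot)$ as $f(\cdot)$ in items (2)--(3) is the right one (with the indicator included, $F$ is $+\infty$ off the feasible set and (3) would fail), and the identity $\lagr(x^*,\lambda^*)=f(x^*)$ that both (1) and (2) lean on uses the almost-sure feasibility of $x^*$, i.e.\ $\sup_\lambda\lagr(x^*,\lambda)=f(x^*)+\mathbb{E}[\delta_{\bstoc}(\Astoc x^*)]=f(x^*)$; stating this once would make the argument fully self-contained.
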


\vspace{5mm}
Finally, we adapt \textbf{Lemma 17}~in ~\cite{mokhtari} for use in our convergence proofs, and provide the proof below.
\begin{lemma}[Adaptation of \textbf{Lemma 17}~in ~\cite{mokhtari}]
\label{lemma: recursion}
\ \newline
Let $0 < \alpha \leq 1$, $1\leq  \beta \leq 2$, $b \geq 0$, $c > 1$, $t_0 \geq 0$. Let $\phi_k$ be a sequence of real numbers satisfying
\begin{equation}
\phi_k \leq (1-\frac{c}{(k+k_0)^{\alpha}})\phi_{k-1} + \frac{b}{(k+k_0)^\beta}.
\end{equation}

Then, the sequence $\phi_k$ converges to zero at the rate

\begin{equation}
\phi_k \leq \frac{Q}{(k+1+k_0)^{\beta-\alpha}},
\end{equation}
when $\alpha = 1$, $1< \beta \leq 2$, or $\alpha = \frac{2}{3}$ , $\beta = 1$, where $Q = \max (\phi_0 (k_0+1)^{\beta-\alpha}, b/(c-1))$.
\end{lemma}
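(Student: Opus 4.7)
The plan is to prove this by induction on $k$. The base case $k=0$ is immediate: the defining choice $Q = \max(\phi_0 (k_0+1)^{\beta-\alpha},\; b/(c-1))$ gives $\phi_0 \leq Q/(k_0+1)^{\beta-\alpha}$ directly. For the inductive step, I assume $\phi_{k-1} \leq Q/(k+k_0)^{\beta-\alpha}$ and substitute into the hypothesized recursion. Using $b \leq (c-1)Q$, which follows from $Q \geq b/(c-1)$, the two $\beta$-exponent terms combine to give
\begin{equation*}
\phi_k \leq \frac{Q}{(k+k_0)^{\beta-\alpha}} - \frac{cQ}{(k+k_0)^\beta} + \frac{(c-1)Q}{(k+k_0)^\beta} = \frac{Q}{(k+k_0)^{\beta-\alpha}} - \frac{Q}{(k+k_0)^\beta}.
\end{equation*}

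It then remains to establish the purely arithmetic inequality
\begin{equation*}
\frac{1}{n^{\beta-\alpha}} - \frac{1}{n^\beta} \leq \frac{1}{(n+1)^{\beta-\alpha}}, \qquad n := k+k_0,
\end{equation*}
which will close the induction with the sharp exponent $\beta - \alpha$. This arithmetic step is the technical core and the main obstacle. I plan to handle it via Bernoulli's inequality: since $\beta - \alpha \in (0, 1]$ in both parameter regimes under consideration, one has $(1 + 1/n)^{\beta-\alpha} \leq 1 + (\beta-\alpha)/n$. Multiplying the target inequality through by $n^\beta (n+1)^{\beta-\alpha}$ and substituting this upper bound, the required condition reduces to verifying $(\beta-\alpha)\, n^{\alpha-1} \leq 1$.

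I then check this uniformly in $n$ in each of the two stated regimes: when $\alpha = 1$ and $\beta \in (1, 2]$ it becomes $\beta - 1 \leq 1$, which holds; when $\alpha = 2/3$ and $\beta = 1$ it becomes $\tfrac{1}{3} n^{-1/3} \leq 1$, which holds for every $n \geq 1$. These are precisely the two regimes the lemma isolates, and the proof makes clear why other combinations would fail to give the $\beta-\alpha$ rate. Any subtlety about early iterations in which the factor $1 - c/(k+k_0)^\alpha$ could be negative can be absorbed into $Q$ via the initial-condition term $\phi_0 (k_0+1)^{\beta-\alpha}$, or by taking $k_0$ large enough so that $c/(k+k_0)^\alpha \leq 1$ for all $k \geq 1$; this bookkeeping is routine and is not where the real difficulty lies.
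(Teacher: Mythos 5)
Your proof is correct and follows the same skeleton as the paper's: induction on $k$, the same substitution of $b \leq (c-1)Q$ to collapse the recursion to $\phi_k \leq Q/n^{\beta-\alpha} - Q/n^{\beta}$ with $n = k+k_0$, and then the same arithmetic inequality $n^{-(\beta-\alpha)} - n^{-\beta} \leq (n+1)^{-(\beta-\alpha)}$ to close the induction. Where you genuinely depart from the paper is in how that last inequality is verified. The paper treats the two regimes separately and by brute force: for $\alpha=1$ it reduces to $\frac{(y-1)(y+1)^{\beta}}{(y+1)y^{\beta}} \leq 1$ and invokes monotonicity in $\beta \leq 2$, while for $\alpha = 2/3,\ \beta=1$ it cubes both sides and expands a degree-three polynomial. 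Your Bernoulli-based argument --- bounding $(n+1)^{\beta-\alpha} \leq n^{\beta-\alpha}\bigl(1+(\beta-\alpha)/n\bigr)$ and reducing everything to the single sufficient condition $(\beta-\alpha)\,n^{\alpha-1} \leq 1$ --- handles both regimes uniformly, is shorter, and makes transparent exactly which $(\alpha,\beta)$ pairs the lemma can accommodate; this is a real improvement in clarity over the paper's case-by-case computation. I verified the reduction: multiplying through by $n^{\beta}(n+1)^{\beta-\alpha}$ gives $(n^{\alpha}-1)(n+1)^{\beta-\alpha} \leq n^{\beta}$, and Bernoulli (valid since $\beta-\alpha \in (0,1]$ and $n^{\alpha}-1 \geq 0$ for $n \geq 1$) reduces this to $(\beta-\alpha)n^{\alpha-1} \leq 1 + (\beta-\alpha)/n$, for which your stated condition suffices. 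One caveat you flag but do not fully resolve --- the factor $1 - c/(k+k_0)^{\alpha}$ being negative in early iterations, which would flip the inequality when multiplied against the inductive hypothesis --- is glossed over identically in the paper's own proof (indeed the paper later applies the lemma with $c=2$, $k_0=0$, where the factor is $-1$ at $k=1$), so this is a shared blemish rather than a gap you introduced.
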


\begin{proof}
We use induction.
By the definition of $Q$, $\phi_0 \leq Q/((k_0+1)^{\beta - \alpha})$, so the base step holds.
Now assume it holds for $k$ and check for $k+1$.
To ease the notation let $y = k+1+k_0$.
When $\alpha = 1$,
\begin{align*}
\phi_{k+1} \leq \left( 1 - \frac{c}{y} \right) \frac{Q}{y^{\beta-1}} + \frac{b}{y^\beta} = \left(1-\frac{c}{y} \right) \frac{Q}{y^{\beta-1}} + \frac{(c-1)Q}{y^\beta}= \frac{Q}{y^{\beta-1}} - \frac{Q}{y^\beta} \leq \frac{Q}{(y+1)^{\beta-1}},
\end{align*}
where the last step follows since $1\leq \beta \leq 2$, i.e. $\frac{y-1}{y^{\beta}} \leq \frac{1}{(y+1)^{\beta-1}} \iff \frac{(y-1)(y+1)^\beta}{(y+1)y^\beta} \leq 1$ and $\frac{(y-1)(y+1)^\beta}{(y+1)y^\beta} \leq \frac{(y-1)(y+1)^2}{(y+1)y^2} \leq 1$, since $\beta \leq 2$.

For general $\alpha, \beta$, we get $\frac{1}{y^{\beta-\alpha}} - \frac{1}{y^\beta} \leq \frac{1}{(y+1)^{\beta-\alpha}} \iff \frac{y^\alpha-1}{y^\beta} \leq \frac{(y+1)^\alpha}{(y+1)^\beta}$. If $\alpha = 2/3, \beta = 1$, then $\frac{y^{2/3}-1}{y} \leq \frac{(y+1)^{2/3}}{(y+1)} \iff \frac{(y^{2/3}-1)(y+1)^{1/3}}{y} \leq 1 \iff \frac{(y^{2/3}-1)^3(y+1)}{y^3} \leq 1 \iff \frac{(y^2 - 3y^{4/3} + 3y^{2/3}-1)(y+1)}{y^3} \leq 1 \iff \frac{(y^3 + y^2 - 3y^{7/3} - 3y^{4/3} + 3y^{5/3} + 3y^{2/3}-y-1}{y^3} \leq 1$ which holds for $y\geq1$.\qed
 
\end{proof}

\subsection{ANALYSIS OF H-1SFW}
This section provides the omitted proofs of Section~\ref{subsec:conv-mokhtari} in the main text. We start with a supporting lemma, needed for the proof of Lemma~\ref{lem:mokhtari}.

\vspace{5mm}

\begin{lemma}
\label{lem: mokh}
Let $d_k = (1-\rho_k)d_{k-1} + \rho_k \nabla F_{\beta_k}(x_k, \xi_k), $ $\rho_k \in [0, 1]$. Then, for all $k$,
\begin{align}\label{eq: var_bound_lemma}
\hspace{-5mm}\mathbb{E}_k \left[ \| \nabla F_{\beta_k}(x_k) - d_k \|^2 \right] &\leq  (1-\frac{\rho_k}{2}) \| \nabla F_{\beta_{k-1}}(x_{k-1}) - d_{k-1} \|^2 + 2 \rho_k^2 \left( \sigma_f^2 + \frac{L_A^2 D_{\X}^2}{\beta_k^2} \right)  \nn \\[2mm]
&\hspace{2mm} + \frac{2}{\rho_k} \Bigg[ 2L_f^2\gamma_{k-1}^2 \diam^2 + 2 L_A^2 \diam^2  \Bigg[ \left( \frac{1}{\beta_k} - \frac{1}{\beta_{k-1}} \right)^2 + \frac{4\gamma_{k-1}}{\beta_{k-1}} \left\vert \frac{1}{\beta_k} - \frac{1}{\beta_{k-1}} \right\vert +  \frac{4\gamma_{k-1}^2}{\beta_{k-1}^2} \Bigg] \Bigg], \notag\\
\end{align}
where $\mathbb{E}_k[\cdot] = \mathbb{E}[\cdot | \mathcalorigin{F}_k]$ and $\mathcalorigin{F}_k$ is a $\sigma$-algebra measuring all sources of randomness up to step $k$.
\end{lemma}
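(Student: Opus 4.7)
\textbf{Plan for the proof of Lemma \ref{lem: mokh}.}

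The approach mirrors the standard MSVR-type recursion (cf.\ Lemma~1 of \cite{mokhtari}), but it must additionally cope with the fact that the smoothing parameter changes from $\beta_{k-1}$ to $\beta_k$ between iterations, which shifts the \emph{target} gradient that $d_{k-1}$ is trying to track. First, I would use the update rule to write the elementary decomposition
\begin{equation*}
\gradF{\beta_k}{x_k} - d_k = (1-\rho_k)\bigl(\gradF{\beta_k}{x_k} - d_{k-1}\bigr) + \rho_k\bigl(\gradF{\beta_k}{x_k} - \nabla F_{\beta_k}(x_k,\xi_k)\bigr).
\end{equation*}
Taking $\Exp{\cdot}{\mathcalorigin{F}_k}$ and using that $\nabla F_{\beta_k}(x_k,\xi_k)$ is an unbiased estimator of $\gradF{\beta_k}{x_k}$ (for both the $f$- and the $g_{\beta_k}$-part, by linearity of expectation and the definition of $G_{\beta_k}$), the cross term vanishes and I am left with a squared-norm of a deterministic quantity times $(1-\rho_k)^2$, plus $\rho_k^2$ times the single-sample variance.

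For the variance term I would split $\|\nabla F_{\beta_k}(x_k,\xi_k) - \gradF{\beta_k}{x_k}\|^2$ via $\|a+b\|^2 \le 2\|a\|^2 + 2\|b\|^2$ into the $f$-variance, bounded by $\sigma_f^2$ via Assumption~\ref{ass:fstoc_finite_var}, and the $g_{\beta_k}$-variance, bounded by $L_A^2\diam^2/\beta_k^2$ via technical observation~\ref{list:technical-obs-g-variance}. This directly produces the $2\rho_k^2(\sigma_f^2 + L_A^2\diam^2/\beta_k^2)$ term.

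The main work is on $(1-\rho_k)^2\|\gradF{\beta_k}{x_k} - d_{k-1}\|^2$. I would introduce the previous target and split
\begin{equation*}
\gradF{\beta_k}{x_k} - d_{k-1} = \bigl(\gradF{\beta_{k-1}}{x_{k-1}} - d_{k-1}\bigr) + \bigl(\gradF{\beta_k}{x_k} - \gradF{\beta_{k-1}}{x_{k-1}}\bigr),
\end{equation*}
apply Young's inequality with parameter $\eta=\rho_k$, and use $(1-\rho_k)^2(1+\rho_k)\le 1-\rho_k/2$ together with $(1-\rho_k)^2(1+1/\rho_k)\le 2/\rho_k$ to obtain the $(1-\rho_k/2)$ coefficient in front of the inductive term and a $2/\rho_k$ coefficient in front of the drift term.

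The drift $\|\gradF{\beta_k}{x_k} - \gradF{\beta_{k-1}}{x_{k-1}}\|^2$ is the step I expect to be the main obstacle, as it couples the change in iterate with the change in smoothing. I would further decompose it as
\begin{equation*}
\gradF{\beta_k}{x_k} - \gradF{\beta_{k-1}}{x_{k-1}} = \bigl(\nabla f(x_k) - \nabla f(x_{k-1})\bigr) + \bigl(\nabla G_{\beta_k}(Ax_k) - \nabla G_{\beta_{k-1}}(Ax_{k-1})\bigr),
\end{equation*}
and apply $\|a+b\|^2 \le 2\|a\|^2 + 2\|b\|^2$. The $f$-part is controlled by $L_f$-smoothness and $\|x_k - x_{k-1}\| = \gamma_{k-1}\|w_{k-1}-x_{k-1}\| \le \gamma_{k-1}\diam$, yielding $2L_f^2\gamma_{k-1}^2\diam^2$. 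For the $G$-part, I would insert $\nabla G_{\beta_{k-1}}(Ax_k)$ as an intermediate point, so that the triangle inequality gives
\begin{equation*}
\|\nabla G_{\beta_k}(Ax_k) - \nabla G_{\beta_{k-1}}(Ax_{k-1})\| \le \|\nabla G_{\beta_k}(Ax_k) - \nabla G_{\beta_{k-1}}(Ax_k)\| + \|\nabla G_{\beta_{k-1}}(Ax_k) - \nabla G_{\beta_{k-1}}(Ax_{k-1})\|.
\end{equation*}
The first summand, at a fixed iterate but different smoothing, is linear in $1/\beta$ because the projection $\Pi_{\bstoc}$ does not depend on $\beta$; using technical observation~\ref{list:technical-obs-Gexp} together with the uniform bound on $\|A(\xi)\|^2\le L_A$ and $\|A(\xi)x - \Pi_{\bstoc}(A(\xi)x)\|\le \|A(\xi)(x-\xopt)\|\le L_A^{1/2}\diam$, it is bounded by $L_A\diam\,|1/\beta_k-1/\beta_{k-1}|$. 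The second summand, at fixed $\beta_{k-1}$ but different iterates, is bounded by $2\gamma_{k-1}L_A\diam/\beta_{k-1}$ via observation~\ref{list:technical-obs-diff-Gs}. Squaring the sum of these two quantities produces exactly the bracketed expression $(1/\beta_k-1/\beta_{k-1})^2 + (4\gamma_{k-1}/\beta_{k-1})|1/\beta_k-1/\beta_{k-1}| + 4\gamma_{k-1}^2/\beta_{k-1}^2$ scaled by $L_A^2\diam^2$, and combining all pieces yields the claimed recursion. \qed
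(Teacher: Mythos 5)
Your proposal is correct and follows essentially the same route as the paper's proof: the same unbiasedness argument to eliminate the cross term, the same Young-inequality step producing the $(1-\rho_k/2)$ and $2/\rho_k$ coefficients (the paper uses $\sigma_k=\rho_k/2$ rather than $\eta=\rho_k$, but both give the stated constants), and the identical decomposition of the drift via the intermediate point $\nabla G_{\beta_{k-1}}(Ax_k)$ together with technical observations \ref{list:technical-obs-Gexp}, \ref{list:technical-obs-diff-Gs} and \ref{list:technical-obs-g-variance}. The only cosmetic difference is that you collapse the error into two terms before expanding the square, whereas the paper expands a three-term sum directly; the resulting bounds coincide.
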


\begin{proof}
We use the definition $d_k = (1-\rho_k)d_{k-1} + \rho_k \nabla F_{\beta_k}(x_k, \xi_k) $ to write the difference
\begin{align}
\| \nabla F_{\beta_k}(x_k) &- d_k \|^2 \nn \\[2mm]
&= \| \nabla F_{\beta_k}(x_k) - (1-\rho_k)d_{k-1} - \rho_k \nabla F_{\beta_k}(x_k, \xi_k) \|^2 \notag\\[2mm]
&=\| \nabla F_{\beta_k}(x_k) + (1-\rho_k)\nabla F_{\beta_{k-1}}(x_{k-1}) - (1-\rho_k)\nabla F_{\beta_{k-1}}(x_{k-1}) - (1-\rho_k)d_{k-1} - \rho_k \nabla F_{\beta_k}(x_k, \xi_k) \|^2 \notag\\[2mm]
&=\| \rho_k (\nabla F_{\beta_k}(x_k) - \nabla F_{\beta_k}(x_k, \xi_k) ) + (1-\rho_k)( \nabla F_{\beta_k}(x_k) - \nabla F_{\beta_{k-1}}(x_{k-1})) \notag \\[2mm]
&+ (1-\rho_k)(\nabla F_{\beta_{k-1}}(x_{k-1}) - d_{k-1}) \|^2 \notag\\[2mm]
&= \rho_k^2 \| \nabla F_{\beta_k}(x_k) - \nabla F_{\beta_k}(x_k, \xi_k) \|^2 + (1-\rho_k)^2 \| \nabla F_{\beta_k}(x_k) - \nabla F_{\beta_{k-1}}(x_{k-1}) \|^2 \notag \\[2mm]
&+ (1-\rho_k)^2 \| \nabla F_{\beta_{k-1}}(x_{k-1}) - d_{k-1} \|^2 \notag \\[2mm] 
&+2\rho_k (1-\rho_k) \langle \nabla F_{\beta_k}(x_k) - \nabla F_{\beta_k}(x_k, \xi_k), \nabla F_{\beta_k}(x_k) - \nabla F_{\beta_{k-1}}(x_{k-1}) \rangle \notag \\[2mm]
&+2\rho_k (1-\rho_k) \langle \nabla F_{\beta_k}(x_k) - \nabla F_{\beta_k}(x_k, \xi_k), \nabla F_{\beta_{k-1}}(x_{k-1}) - d_{k-1} \rangle \notag\\[2mm]
&+2(1-\rho_k)^2 \langle \nabla F_{\beta_k}(x_k) - \nabla F_{\beta_{k-1}}(x_{k-1}), \nabla F_{\beta_{k-1}}(x_{k-1}) - d_{k-1} \rangle \nn
\end{align}

\vspace{5mm}
We remark that $\mathbb{E}_k \left[ \nabla F_{\beta_k}(x_k, \xi_k)\right] = \nabla F_{\beta_k}(x_k)$ so that first two linear terms are $0$. We now take expectations conditioned on $\mathcalorigin{F}_k$,
\begin{align}\label{eq: var1}
\mathbb{E}_k \left[ \| \nabla F_{\beta_k}(x_k) - d_k \|^2 \right]& \nn\\[2mm]
&\hspace{-15mm}= \rho_k^2 \mathbb{E}_k \left[  \| \nabla F_{\beta_k}(x_k) - \nabla F_{\beta_k}(x_k, \xi_k) \|^2 \right] + (1-\rho_k)^2  \| \nabla F_{\beta_k}(x_k) - \nabla F_{\beta_{k-1}}(x_{k-1}) \|^2 \notag \\[2mm]
&\hspace{-15mm}+ (1-\rho_k)^2 \| \nabla F_{\beta_{k-1}}(x_{k-1}) - d_{k-1} \|^2 +2(1-\rho_k)^2 \langle \nabla F_{\beta_k}(x_k) - \nabla F_{\beta_{k-1}}(x_{k-1}), \nabla F_{\beta_{k-1}}(x_{k-1}) - d_{k-1} \rangle 
\end{align}

\vspace{5mm}
Invoking the variance bound from Technical~observation~\ref{list:technical-obs-g-variance} from Section~\ref{app:prelim}, we have:
\begin{align}
    \mathbb{E}_k \left [ \| \nabla F_{\beta_k}(x_k) - \nabla F_{\beta_k}(x_k, \xi_k) \|^2 \right] &\leq 2\mathbb{E}_k \left[ \| \nabla f(x_k) - \nabla f(x_k, \xi_k) \|^2\right] + 2\mathbb{E}_k \left[ \| G_{\beta_k}(Ax_k) - \nabla g_{\beta_k} (A(\xi)x_k, \xi_k)  \|^2\right] \\
                &\leq 2 \left( \sigma_f^2 + \frac{L_A^2 D_{\X}^2}{\beta_k^2} \right)
\end{align}

\vspace{5mm}
For the linear term, we use Young's inequality for some $\sigma_k > 0$ to get
\begin{align}
2(1-\rho_k)^2 \langle \nabla F_{\beta_k}(x_k) - \nabla F_{\beta_{k-1}}(x_{k-1}), \nabla F_{\beta_{k-1}}(x_{k-1}) - d_{k-1} \rangle \notag\\
&\hspace{-70mm}\leq (1-\rho_k)^2 \sigma_k \| \nabla F_{\beta_{k-1}}(x_{k-1}) - d_{k-1} \|^2 + (1-\rho_k)^2 (1/\sigma_k) \| \nabla F_{\beta_k}(x_k) - \nabla F_{\beta_{k-1}}(x_{k-1}) \|^2
\end{align}

\vspace{5mm}

For the $\| \nabla F_{\beta_k}(x_k) - \nabla F_{\beta_{k-1}}(x_{k-1}) \|^2$ term, we use the iterate update rule and Technical~observation~\ref{list:technical-obs-diff-Gs} to get:
\begin{align}
\| \nabla F_{\beta_k}(x_k) - &\nabla F_{\beta_{k-1}}(x_{k-1}) \|^2  \nn \\[2mm]
&= \| \nabla f(x_k) - \nabla f(x_{k-1}) + \nabla G_{\beta_k}(Ax_k) - \nabla G_{\beta_{k-1}}(Ax_{k-1})\|^2 \nn \\[2mm]
&\leq 2 \| \nabla f(x_k) - \nabla f(x_{k-1}) \|^2 + 2 \| \nabla G_{\beta_k}(Ax_k) - \nabla G_{\beta_{k-1}}(Ax_{k}) + \nabla G_{\beta_{k-1}}(Ax_{k}) - \nabla G_{\beta_{k-1}}(Ax_{k-1}) \|^2 \notag\\[2mm]
&\leq 2L_f^2\| x_k - x_{k-1}\|^2 + 2\| \nabla G_{\beta_k}(Ax_k) - \nabla G_{\beta_{k-1}}(Ax_{k}) \|^2 + 2\| \nabla G_{\beta_{k-1}}(Ax_{k}) - \nabla G_{\beta_{k-1}}(Ax_{k-1}) \|^2 \notag \\[2mm]
&\hspace{40mm}+ 4 \| \nabla G_{\beta_k}(Ax_k) - \nabla G_{\beta_{k-1}}(Ax_{k})\| \| \nabla G_{\beta_{k-1}}(Ax_{k}) - \nabla G_{\beta_{k-1}}(Ax_{k-1}) \| \notag \\[2mm]
&\leq 2L_f^2\gamma_{k-1}^2 \diam^2 + 2 L_A^2 \diam^2  \left( \frac{1}{\beta_k} - \frac{1}{\beta_{k-1}} \right)^2 +  \frac{8\gamma_{k-1}^2 L_A^2 \diam^2}{\beta_{k-1}^2} + 8\gamma_{k-1} \left\vert \frac{1}{\beta_k} - \frac{1}{\beta_{k-1}} \right\vert \frac{L_A^{2}\diam^2}{\beta_{k-1}}
\end{align}

\vspace{5mm}
Putting everything back into~\eqref{eq: var1}:
\begin{align}
\mathbb{E}_k \left[ \| \nabla F_{\beta_k}(x_k) - d_k \|^2 \right]& \nn\\[2mm]
            &\hspace{-30mm}= \rho_k^2 \mathbb{E}_k \left[\| \nabla F_{\beta_k}(x_k) - \nabla F_{\beta_k}(x_k, \xi_k) \|^2\right] + (1-\rho_k)^2(1+\sigma_k^{-1}) \| \nabla F_{\beta_k}(x_k) - \nabla F_{\beta_{k-1}}(x_{k-1}) \|^2 \notag \\[2mm]
&\hspace{40mm}+ (1-\rho_k)^2(1+\sigma_k) \| \nabla F_{\beta_{k-1}}(x_{k-1}) - d_{k-1} \|^2 \notag\\[2mm]
            &\hspace{-30mm}\leq (1-\rho_k)^2(1+\sigma_k) \| \nabla F_{\beta_{k-1}}(x_{k-1}) - d_{k-1} \|^2 + 2\rho_k^2 \left( \sigma_f^2 + \frac{L_A^2 \diam^2}{\beta_k^2} \right) \nn\\[2mm]
            &\hspace{-28mm}+ (1-\rho_k)^2(1+\sigma_k^{-1}) \Bigg[ 2L_f^2\gamma_{k-1}^2 \diam^2 + 2 L_A^2 \diam^2  \Bigg[ \left( \frac{1}{\beta_k} - \frac{1}{\beta_{k-1}} \right)^2 + \frac{4\gamma_{k-1}}{\beta_{k-1}} \left\vert \frac{1}{\beta_k} - \frac{1}{\beta_{k-1}} \right\vert +  \frac{4\gamma_{k-1}^2}{\beta_{k-1}^2} \Bigg] \Bigg] .
\end{align}

\vspace{5mm}
Using the facts that $ \rho_k \leq 1$, $(1-\rho_k)^2 \leq (1-\rho_k)$,  $(1-\rho_k)(1+\frac{\rho_k}{2}) \leq (1-\rho_k/2)$, $(1-\rho_k)(1+\frac{2}{\rho_k}) \leq \frac{2}{\rho_k}$ and setting $\sigma_k \defeq \frac{\rho_k}{2}$, we get:
\begin{align}
\mathbb{E}_k \left[ \| \nabla F_{\beta_k}(x_k) - d_k \|^2 \right] &\leq  (1-\frac{\rho_k}{2}) \| \nabla F_{\beta_{k-1}}(x_{k-1}) - d_{k-1} \|^2 + 2 \rho_k^2 \left( \sigma_f^2 + \frac{L_A^2 D_{\X}^2}{\beta_k^2} \right)  \nn \\[2mm]
&\hspace{2mm} + \frac{2}{\rho_k} \Bigg[ 2L_f^2\gamma_{k-1}^2 \diam^2 + 2 L_A^2 \diam^2  \Bigg[ \left( \frac{1}{\beta_k} - \frac{1}{\beta_{k-1}} \right)^2 + \frac{4\gamma_{k-1}}{\beta_{k-1}} \left\vert \frac{1}{\beta_k} - \frac{1}{\beta_{k-1}} \right\vert +  \frac{4\gamma_{k-1}^2}{\beta_{k-1}^2} \Bigg] \Bigg] \notag \qed
\end{align}
\end{proof}

\vspace{10mm}

\begin{replemma}{lem:mokhtari}
Let $\rho_k = \frac{3}{(k+5)^{2/3}}, ~~ \gamma_k = \frac{2}{k + 1}, ~~ \beta_k = \frac{\beta_0}{(k+1)^{1/6}}, \, \beta_0 > 0$ in Algorithm~\ref{alg:homotopy-mokhtari}. Then, for all $k$,
\begin{equation}
\mathbb{E} \left[ \| \nabla F_{\beta_k}(x_k) - d_k \|^2 \right] \leq \frac{C_1}{(k + 5)^{1/3}},
\end{equation}
where $C_1=\max\left( 6^{1/3}\|\nabla F_{\beta_0}(x_0)-d_0\|^2, 2\left[  18\sigma_f^2 +  112L_f^2\diam^2 +  \frac{522L_A^2\diam^2}{\beta_0^2} \right]\right)$.
\end{replemma}

\begin{proof}

We apply the expectation with respect to the whole history to~\eqref{eq: var_bound_lemma} and estimate the rate of $\left\vert \frac{1}{\beta_k} - \frac{1}{\beta_{k-1}} \right\vert$:
\begin{align}
\mathbb{E} \left[ \| \nabla F_{\beta_k}(x_k) - d_k \|^2\right] &\leq (1-\frac{\rho_k}{2}) \mathbb{E} \left[\| \nabla F_{\beta_{k-1}}(x_{k-1}) - d_{k-1} \|^2\right] + 2 \rho_k^2 \left( \sigma_f^2 + \frac{L_A^2 \diam^2}{\beta_k^2} \right) \nn \\
&+ \frac{2}{\rho_k} \Bigg[ 2L_f^2\gamma_{k-1}^2 \diam^2 + 2 L_A^2 \diam^2  \Bigg[ \left( \frac{1}{\beta_k} - \frac{1}{\beta_{k-1}} \right)^2 + \frac{4\gamma_{k-1}}{\beta_{k-1}} \left\vert \frac{1}{\beta_k} - \frac{1}{\beta_{k-1}} \right\vert +  \frac{4\gamma_{k-1}^2}{\beta_{k-1}^2} \Bigg] \Bigg]  \notag 
\end{align}

\begin{align}
0 \leq \frac{1}{\beta_k} - \frac{1}{\beta_{k-1}} &= \frac{(k+1)^{1/6} - (k)^{1/6}}{\beta_0} \notag\\
&= \frac{1}{\beta_0 \left[ (k+1)^{5/6} + (k+1)^{4/6}k^{1/6} + (k+1)^{3/6}k^{2/6} +(k+1)^{2/6}k^{3/6} + (k+1)^{1/6}k^{4/6}  + k^{5/6} \right]} \notag\\
&\leq \frac{1}{6\beta_0k^{5/6}} \nn
\end{align}

\vspace{5mm}

Replacing the parameter rates we further get:
\begin{align}
\mathbb{E} \left[\| \nabla F_{\beta_k} (x_k) - d_k \|^2 \right] & \nn \\[2mm]
        &\hspace{-30mm}\leq \left(1-\frac{3}{2(k+5)^{2/3}}\right) \mathbb{E} \left[ \| \nabla F_{\beta_{k-1}}(x_{k-1}) - d_{k-1} \|^2\right] + \frac{18}{(k+5)^{4/3}} \left( \sigma_f^2 + \frac{L_A^2\diam^2 (k+1)^{2/6}}{\beta_0^2}\right) \nn \\[3mm]
            &\hspace{30mm}+ \frac{2 (k+5)^{2/3}}{3} \left[ \frac{8L_f^2\diam^2 }{k^2} + \frac{2L_A^2\diam^2}{\beta_0^2} \left( \frac{1}{36 k^{10/6}} + \frac{4}{3k^{10/6}} + \frac{16}{k^{10/6}} \right)\right] \nn \\[3mm]
            &\hspace{-30mm}\leq \left( 1-\frac{3}{2(k+5)^{2/3}} \right) \mathbb{E}\left[\| \nabla F_{\beta_{k-1}}(x_{k-1}) - d_{k-1} \|^2\right]  + \frac{18\sigma_f^2}{k + 5} + \frac{18L_A^2\diam^2}{\beta_0^2(k + 5)} + \frac{2 (k+5)^{2/3}}{3k^{10/6}}\left( 8L_f^2\diam^2 +  \frac{36L_A^2\diam^2}{\beta_0^2}\right) \nn \\[3mm]
            &\hspace{-30mm}\leq \left( 1-\frac{3}{2(k+5)^{2/3}} \right) \mathbb{E}\left[\| \nabla F_{\beta_{k-1}}(x_{k-1}) - d_{k-1} \|^2 \right] + \frac{18\sigma_f^2}{k + 5} + \frac{18L_A^2\diam^2}{\beta_0^2(k + 5)} + \frac{14}{k + 5}\left( 8L_f^2\diam^2 +  \frac{36L_A^2\diam^2}{\beta_0^2}\right) \label{eq:upper-bd-with-same-order} \\[3mm]
            &\hspace{-30mm}= \left( 1-\frac{3}{2(k+5)^{2/3}} \right) \mathbb{E}\left[\| \nabla F_{\beta_{k-1}}(x_{k-1}) - d_{k-1} \|^2\right] + \frac{1}{k + 5} \left(18\sigma_f^2 +  112L_f^2\diam^2 +  \frac{522L_A^2\diam^2}{\beta_0^2} \right) \nn
\end{align}
where line~\eqref{eq:upper-bd-with-same-order} follows form the fact that 
\begin{equation*}
\frac{(k + 5)^{2/3}}{k^{10/6}} = \frac{(k + 5)^{4/6}}{k^{10/6}} \frac{(k+5)^{6/6}}{(k+5)^{6/6}} = \left( 1 + \frac{5}{k}\right)^{ {4/6} + {6/6}} \frac{1}{(k+5)^{6/6}} = \left(1 + \frac{5}{k} \right)^{5/3} \frac{1}{k+5} < \frac{6^{5/3}}{k+5} < \frac{21}{k+5}
\end{equation*}

\vspace{5mm}
We can now invoke Lemma~\ref{lem: mokh} for $b = 18\sigma_f^2 +  112L_f^2\diam^2 +  \frac{522L_A^2\diam^2}{\beta_0^2} $ and $c = \frac{3}{2}$, $\alpha = \frac{2}{3}$  and $\beta = 1$, $k_0=5$ to conclude the result.\qed
\end{proof}

\vspace{10mm}

\begin{reptheorem}{thm:mokhtari}
Consider Algorithm~\ref{alg:homotopy-mokhtari} with parameters $\rho_k = \frac{3}{(k+5)^{2/3}}, ~~ \gamma_k = \frac{2}{k + 1}, ~~ \beta_k = \frac{\beta_0}{(k+1)^{1/6}}, \, \beta_0 > 0$ ( the same as Lemma~\ref{lem:mokhtari}). Then, for all $k$,
\begin{equation}
\mathbb{E}\left[ S_{\beta_{k}}(x_{k+1}) \right] \leq \frac{C_2}{k^{1/6}},
\end{equation}
where $C_2 = \max \left\{ S_0(x_1), \;b=2 \diam\sqrt{C_1} + 2\diam^2\left( L_f + \frac{L_A}{\beta_0} \right)\right\} $, and $C_1$ is defined in Lemma~\ref{lem:mokhtari}.
\end{reptheorem}

\begin{proof}

We essentially follow the steps for proving \textbf{Theorem 9} of \citep{locatello2019stochastic}, modified to suit our setting. Using Technical~observation~\ref{list:technical-obs-g-smoothness} and the definition of $\diam$: 
\begin{align}
    F_{\beta_k}(x_{k+1}) &= \mathbb{E}_{k+1} \left[ F_{\beta_k} (x_{k+1}, \xi)\right] \nn \\
    &\leq \mathbb{E}_{k+1} \left[ F_{\beta_k} (x_{k}, \xi) + \langle \nabla F_{\beta_k} (x_{k}, \xi), x_{k+1} - x_{k} \rangle + \frac{1}{2}\left(L_f + \frac{L_A}{\beta_k}\right)\normsqr{x_{k+1} - x_k} \right] \nn \\
    &\leq  F_{\beta_k}(x_{k}) + \gamma_k \langle \nabla F_{\beta_k}(x_k), w_k - x_k\rangle + \frac{\gamma_k^2}{2}\left( L_f + \frac{L_A}{\beta_k} \right)\diam^2 \label{eq:mokh-original} 
\end{align}

\vspace{5mm}
We treat the term $\langle \nabla F_{\beta_k}(x_k), w_k - x_k\rangle$ separately, using the fact that $w_k \in \underset{x}{\arg\min}\langle d_k, y\rangle$ and the definition of $\diam$:
\begin{align}
    \langle \nabla F_{\beta_k}(x_k), w_k - x_k\rangle  & = \langle \nabla F_{\beta_k}(x_k) - d_k, w_k - x_k\rangle + \langle d_k,  w_k - x_k\rangle \nn \\
        &= \langle \nabla F_{\beta_k}(x_k) - d_k, w_k - \xopt\rangle + \langle \nabla F_{\beta_k}(x_k) - d_k, \xopt - x_k\rangle + \langle d_k,  w_k - x_k\rangle  \nn\\
        &\leq \langle \nabla F_{\beta_k}(x_k) - d_k, w_k - \xopt\rangle + \langle \nabla F_{\beta_k}(x_k) - d_k, \xopt - x_k\rangle + \langle d_k,  \xopt - x_k\rangle \nn \\
        &=  \langle \nabla F_{\beta_k}(x_k) - d_k, w_k - \xopt\rangle + \langle \nabla F_{\beta_k}(x_k), \xopt - x_k\rangle \nn \\
        &\leq \| \nabla F_{\beta_k}(x_k) - d_k\| \|w_k - \xopt\| + \langle \nabla F_{\beta_k}(x_k), \xopt - x_k\rangle \nn \\
        &\leq \| \nabla F_{\beta_k}(x_k) - d_k\| \diam + \langle \nabla F_{\beta_k}(x_k), \xopt - x_k\rangle \nn \\
        &= \| \nabla F_{\beta_k}(x_k) - d_k\| \diam + \langle \nabla f(x_k) + \nabla_x G_{\beta_k}(Ax_k), \xopt - x_k\rangle \label{eq:mokh-cvx}
\end{align}

\vspace{5mm}

Using Technical~observation~\ref{list:technical-obs-trandinh-relationto-g} we observe that 
\begin{align*}
    \langle \nabla_x G_{\beta_k}(Ax_k), \xopt - x_k\rangle &=     \mathbb{E}_k \left[ \langle \nabla_x g_{\beta_k}(\Astoc x_k), \xopt - x_k\rangle \right] \\
                                        &= \mathbb{E}_k \left[ \langle \nabla g_{\beta_k}(\Astoc x_k), \Astoc \xopt - \Astoc x_k\rangle \right] \\
                                        &\leq \mathbb{E}_k \left[ g(\Astoc \xopt) - g_{\beta_k}(\Astoc x_k)  - \frac{\beta_k}{2} \normsqr{\lambda^*_{\beta_k}(\Astoc x_k)}\right]\\
                                        &= G(A\xopt) - G_{\beta_k}(A x_k) - \frac{\beta_k}{2} \mathbb{E}_k \left[\normsqr{\lambda^*_{\beta_k}(\Astoc x_k)}\right]
\end{align*}

\vspace{5mm}

Using the above and the convexity of $f$, we obtain:
\begin{align*}
    \langle \nabla F_{\beta_k}(x_k), w_k - x_k\rangle  &\leq \| \nabla F_{\beta_k}(x_k) - d_k\| \diam + \fopt + G(A\xopt) \underbrace{- f(x_k) - G_{\beta_k}(A x_k)}_{= - F_{\beta_k}(x_k)} - \frac{\beta_k}{2} \mathbb{E}_k \left[\normsqr{\lambda^*_{\beta_k}(\Astoc x_k)}\right]
\end{align*}

\vspace{5mm}

Substituting everything back into Equation~\eqref{eq:mokh-original} and noting that $G(A\xopt) = 0$:
\begin{align*}
        F_{\beta_k}(x_{k+1}) &\leq (1-\gamma_k)F_{\beta_k}(x_{k}) + \gamma_k \| \nabla F_{\beta_k}(x_k) - d_k\| \diam + \gamma_k\fopt - \frac{\gamma_k\beta_k}{2} \mathbb{E}_k \left[\normsqr{\lambda^*_{\beta_k}(\Astoc x_k)}\right]  + \frac{\gamma_k^2}{2}\left( L_f + \frac{L_A}{\beta_k}\right) \diam^2.
\end{align*}
\vspace{5mm}

Using Technical~observation~\ref{list:technical-obs-trandinh-relationto-prev-beta} we observe that 
\begin{align*}
    \F{\beta_{ k}}{x_{ k}} &= \mathbb{E}_k \left[ f(x_{ k}, \xi) + \g[\beta_{ k}]{\Astoc x_{k}} \right] \\
        &\leq \mathbb{E}_k \left[ f(x_{ k},\xi) + \g[\beta_{k - 1}]{\Astoc x_{ k}} + \frac{\beta_{ k-1} - \beta_{k}}{2} \normsqr{\lambda^*_{\beta_{k}}(\Astoc x_{k})} \right] \\
        &= \F{\beta_{ k-1}}{x_{ k}} + \mathbb{E}_k \left[ \frac{\beta_{ k-1} - \beta_{k}}{2} \normsqr{\lambda^*_{\beta_{k}}(\Astoc x_{k})} \right] \\
\end{align*}

\vspace{5mm}

Substituting the above, we obtain:
\begin{align}
F_{\beta_k}(x_{k+1}) &\leq (1-\gamma_k)\F{\beta_{ k-1}}{x_{ k}} + \gamma_k \| \nabla F_{\beta_k}(x_k) - d_k\| \diam + \gamma_k \fopt   \nn \\
                            &\hspace{10mm} + \frac{(1-\gamma_k)(\beta_{ k-1} - \beta_{k}) - \gamma_k \beta_k}{2} \mathbb{E}_k \left[\normsqr{\lambda^*_{\beta_k}(\Astoc x_k)}\right] + \frac{\gamma_k^2}{2}\left(L_f + \frac{L_A}{\beta_k}\right) \diam^2 \nn \\[2mm]
                            &\leq (1-\gamma_k)\F{\beta_{ k-1}}{x_{ k}} + \gamma_k \| \nabla F_{\beta_k}(x_k) - d_k\| \diam + \gamma_k \fopt  + \frac{\gamma_k^2}{2}\left(L_f + \frac{L_A}{\beta_k}\right) \diam^2 \label{eq:mokh-prefinal},
\end{align}
where the last line comes from the fact that $(1-\gamma_k)(\beta_{k-1}-\beta_k) -\gamma_k\beta_k < 0$: 
\begin{align}
(1-\gamma_k)(\beta_{k-1}-\beta_k) -\gamma_k\beta_k &= \beta_{k-1}-\beta_k -\gamma_k \beta_{k-1} = \frac{\beta_0}{k^{1/6}} - \frac{\beta_0}{(k+1)^{1/6}} - \frac{ 2\beta_0}{(k + 1)k^{1/6}}\notag\\
&=\frac{\beta_0}{k^{1/6}} \left( 1 - \frac{k^{1/6}}{(k+1)^{1/6}} - \frac{2}{k+1}\right) \notag\\
&=\frac{\beta_0}{k^{1/6}} \left( \frac{k-1}{k+1}  - \frac{k^{1/6}}{(k+1)^{1/6} }\right) \notag\\
&< \frac{\beta_0}{k^{1/6}} \left( \underbrace{\frac{k}{k+1}}_{\in (0, 1)}  - \frac{k^{1/6}}{(k+1)^{1/6}} \right)   \nn \\
&< 0.\nn
\end{align}

\vspace{5mm}
Starting from Equation~\eqref{eq:mokh-prefinal} and subtracting $\fopt$ from both sides, noting the definition of $S_{\beta_{k}}(x) \defeq F_{\beta}(x) - \fopt$ and taking the expectation on both sides:
\begin{align}\label{eq: th_pf_1}
\mathbb{E}\left[S_{\beta_{k}}(x_{k+1})\right] \leq (1-\gamma_k)\mathbb{E} \left[ S_{\beta_{k-1}}(x_{k})\right] + \frac{\gamma_k^2}{2} D_\mathcalorigin{X}^2 \left(L_f + \frac{L_A}{\beta_k}\right) + \gamma_k \mathbb{E} \left[ \| \nabla F_{\beta_k}(x_k) - d_k \| \right] \diam.
\end{align}

\vspace{5mm}
Replacing the parameter rates for the second term, we bound by

\begin{align*}
\frac{\gamma_k^2}{2} \diam^2 (L_f + \frac{L_A}{\beta_k}) &= \frac{2\diam^2 L_f}{k^2} + \frac{2\diam^2L_A}{\beta_0k^{11/6}} \\
        &\leq \frac{2\diam^2}{k^{7/6}}\left( L_f + \frac{L_A}{\beta_0}\right)
\end{align*}

For the last term we use the parameter rates and Lemma~\ref{lem:mokhtari} together with Jensen's inequality $\mathbb{E} \left[ \| \nabla F_{\beta_k}(x_k) - d_k \| \right] = \sqrt{\mathbb{E} \left[ \| \nabla F_{\beta_k}(x_k) - d_k \| \right]^2} \leq \sqrt{\mathbb{E} \left[ \| \nabla F_{\beta_k}(x_k) - d_k \|^2 \right]}$ to get

\begin{align*}
\gamma_k \diam \mathbb{E} \left[ \| \nabla F_{\beta_k}(x_k) - d_k \| \right]  &= \frac{2\diam}{k+1}  \frac{\sqrt{C_1}}{(k + 5)^{1/6}} \\
    &\leq \frac{2\diam\sqrt{C_1}}{k^{7/6}},
\end{align*}

Substituting the above into~\eqref{eq: th_pf_1}, we get 
\begin{equation*}
\mathbb{E}\left[ S_{\beta_{k}}(x_{k+1}) \right] \leq \left( 1-\frac{2}{k}\right)\mathbb{E} \left[ S_{\beta_{k-1}}(x_{k})\right] + \frac{2 \diam\sqrt{C_1} + 2\diam^2\left( L_f + \frac{L_A}{\beta_0} \right)}{k^{7/6}}.
\end{equation*}

\vspace{5mm}
Finally, we use Lemma~\ref{lemma: recursion} with $\alpha = 1$, $\beta=7/6$, $c=2$, $b=2 \diam\sqrt{C_1} + 2\diam^2\left( L_f + \frac{L_A}{\beta_0} \right)$ to arrive at the statement. \qed
\end{proof}

\vspace{5mm}

\begin{repcorollary}{corollary:mokhtari-residual-and-feasibility}
The expected convergence in terms of objective suboptimality and feasibility is, respectively 
\begin{equation*}
\mathbb{E}\left[ \lVert f(x_k, \xi) - \fopt \rVert \right] \in \bigO{k^{-1/6}}, \;\;\; \sqrt{\mathbb{E} \left[ \dist(\Astoc x_k, \bstoc)^2 \right] } \in \bigO{k^{-1/6}}.
\end{equation*}

Consequently, the oracle complexity of Algorithm~\ref{alg:homotopy-mokhtari} is $\sfo \in \bigO{\epsilon^{-6}}$ and $\lmo \in \bigO{\epsilon^{-6}}$.
\end{repcorollary}

\begin{proof}
The stated result comes from applying Lemma~\ref{lemma:smoothed-gap} in conjunction with the convergence smoothed-gap rate obtained in Theorem~\ref{thm:mokhtari}. Considering that at every iteration we take one stochastic sample and compute one lmo, along with the $\bigO{k^{-1/6}}$ convergence rate, we obtain the stated oracle complexities. \qed
\end{proof}


\vspace{10mm}
\subsection{ANALYSIS OF H-SPIDER-FW}
This section provides the omitted proofs of Section~\ref{subsec:conv-spiderfw} in the main text. We start with a supporting lemma, needed for the proof of Lemma~\ref{lem:spiderfw-finite-sum-var} and Lemma~\ref{lem:spiderfw-expectation-var}.

\vspace{5mm}

\begin{lemma}
\label{lemma: variance-bound}
Let $v_{t, k} = v_{t, k-1} - \tilde{\nabla} F_{\beta_{t, k-1}}(x_{t, k-1}, \xi_{\mathcal{S}_{t,k}})
+ \tilde{\nabla} F_{\beta_{t, k}}(x_{t, k}, \xi_{\mathcal{S}_{t,k}})$, with $\lvert \mathcalorigin{S}_{t, k} \rvert = K_t = 2^{t-1}$ and  $v_{t, 1} = \tilde{\nabla} F_{\beta_{t, 1}}(x_{t, 1}, \xi_{\mathcalorigin{Q}_{t}})$. Also, let $\gamma_{t, k} = \frac{2}{K_t + k}, ~\beta_{t, k} = \frac{\beta_0}{\sqrt{K_t + k}}$. Then, for a fixed $t$ and for all $k \leq K_t$,
    \begin{equation}
        \mathbb{E}_{t, 1}\left[ \normsqr{\gradF{\beta_{t, k}}{x_{t, k}} - v_{t, k}}\right] \leq \frac{2\diam^2}{K_t + k} \left(8L_f^2 + \frac{98L_A^2}{\beta_0^2}\right) +  \mathbb{E}_{t, 1}\left[ \normsqr{\gradF{\beta_1}{x_1} - v_1} \right] 
    \end{equation}
\end{lemma}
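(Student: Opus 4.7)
The plan is a SPIDER-style martingale variance analysis, analogous to Lemma~4 of \cite{spiderfw}, adapted to the fact that the smoothing parameter $\beta_{t,k}$ also evolves between iterations. Fix the outer index $t$, set $\Delta_k \defeq \gradF{\beta_{t,k}}{x_{t,k}} - v_{t,k}$, and rewrite the SPIDER update as $\Delta_k = \Delta_{k-1} + Y_k$ with
\begin{equation*}
Y_k \defeq \bigl[\gradF{\beta_{t,k}}{x_{t,k}} - \gradF{\beta_{t,k-1}}{x_{t,k-1}}\bigr] - \bigl[\gradFstoc{\beta_{t,k}}{x_{t,k}} - \gradFstoc{\beta_{t,k-1}}{x_{t,k-1}}\bigr].
\end{equation*}
Since $\xi_{\mathcalorigin{S}_{t,k}}$ is drawn i.i.d.\ fresh at step $k$ and each per-sample gradient is unbiased, $Y_k$ is conditionally mean-zero given the history $\mathcal{F}_{t,k-1}$, whereas $\Delta_{k-1}$ is $\mathcal{F}_{t,k-1}$-measurable. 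The tower property then yields the Pythagorean identity $\mathbb{E}_{t,1}\normsqr{\Delta_k} = \mathbb{E}_{t,1}\normsqr{\Delta_{k-1}} + \mathbb{E}_{t,1}\normsqr{Y_k}$, which I will telescope at the end.

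Next I would reduce $\mathbb{E}_{t,1}\normsqr{Y_k}$ to a single-sample second moment. Because $Y_k$ is a zero-mean average over $K_t$ i.i.d.\ summands, the variance-of-a-mean inequality together with $\mathrm{Var}(X) \leq \mathbb{E}\normsqr{X}$ gives
\begin{equation*}
\mathbb{E}_{t,1}\normsqr{Y_k} \leq \tfrac{1}{K_t}\, \mathbb{E}_{t,1}\normsqr{\gradF{\beta_{t,k}}{x_{t,k},\xi} - \gradF{\beta_{t,k-1}}{x_{t,k-1},\xi}}.
\end{equation*}
Splitting by $\normsqr{a+b} \leq 2\normsqr{a} + 2\normsqr{b}$ separates an $f$-piece---immediate from $L_f$-smoothness of $f(\cdot,\xi)$ together with the CGM step bound $\norm{x_{t,k}-x_{t,k-1}} \leq \gamma_{t,k-1}\diam$---from a $g_\beta$-piece.

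The $g_\beta$-piece is the delicate part. I would handle it at the single-sample level in the spirit of Technical~Observation~(b): insert $\pm\gradg[\beta_{t,k-1}]{\Astoc x_{t,k}}$ and combine non-expansiveness of $\Pi_{\bstoc}$ with $\norm{\Astoc}^2 \leq L_A$. This produces three contributions: a change-in-$x$ term of order $\gamma_{t,k-1}^2 L_A^2\diam^2/\beta_{t,k-1}^2$, a change-in-$\beta$ term of order $L_A^2\diam^2(1/\beta_{t,k}-1/\beta_{t,k-1})^2$, and a cross term that Young's inequality folds back into the previous two with only a constant loss. Substituting $\gamma_{t,k-1} = 2/(K_t+k-1)$ and using the elementary estimate $\sqrt{K_t+k} - \sqrt{K_t+k-1} \leq 1/(2\sqrt{K_t+k-1})$ to control $1/\beta_{t,k} - 1/\beta_{t,k-1}$, every term collapses to a constant multiple of $1/[\beta_0^2(K_t+k-1)]$, leaving $\mathbb{E}_{t,1}\normsqr{Y_k} \leq C/[K_t(K_t+k-1)]$ for some $C$ of order $L_f^2\diam^2 + L_A^2\diam^2/\beta_0^2$.

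Finally, telescoping the Pythagorean identity gives $\mathbb{E}_{t,1}\normsqr{\Delta_k} \leq \mathbb{E}_{t,1}\normsqr{\Delta_1} + \sum_{j=2}^{k}\mathbb{E}_{t,1}\normsqr{Y_j}$; the tail sum is bounded by $\tfrac{C}{K_t}\sum_{j=2}^{k}\tfrac{1}{K_t+j-1} \leq Ck/K_t^2$, and the assumption $k \leq K_t$ combined with $K_t+k \leq 2K_t$ converts this to at most $2C/(K_t+k)$, matching the claimed shape $\tfrac{2\diam^2}{K_t+k}\bigl(8L_f^2 + 98L_A^2/\beta_0^2\bigr)$. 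I expect the main obstacle to be tight bookkeeping of the numerical constants in the $g_\beta$-decomposition: the cross term must be distributed between the change-in-$x$ and change-in-$\beta$ contributions in just the right proportion to recover the explicit coefficient $98L_A^2/\beta_0^2$ stated in the lemma rather than a looser multiple.
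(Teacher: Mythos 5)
Your proposal is correct and follows essentially the same route as the paper's proof: the conditionally mean-zero SPIDER increment giving a Pythagorean identity, reduction to a single-sample second moment via the $1/K_t$ variance-of-a-mean bound, the split into an $L_f$-smoothness piece and a $g_\beta$ piece handled by inserting an intermediate gradient, non-expansiveness of $\Pi_{\bstoc}$, the estimate $1/\beta_{t,k}-1/\beta_{t,k-1}\leq \tfrac{1}{2\beta_0\sqrt{K_t+k-1}}$, and the final telescoping with $k\leq K_t$. The only cosmetic difference is that the paper splits the $g_\beta$ increment into three terms and uses $\normsqr{a+b+c}\leq 3(\normsqr{a}+\normsqr{b}+\normsqr{c})$ rather than a Young-inequality treatment of the cross term, which is where the constant $98$ comes from.
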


\begin{proof}

\begin{align*}
    \normsqr{\gradF{\beta_{t, k}}{x_{t, k}} - v_{t, k}} &= \normsqr{\gradF{\beta_{t, k}}{x_{t, k}} - v_{t, k-1} - \gradFstoc{\beta_{t, k}}{x_{t, k}} + \gradFstoc{\beta_{t, k-1}}{x_{t, k-1}}}  \\[2mm]
    &= \normsqr{\gradF{\beta_{t, k}}{x_{t, k}}  - \gradF{\beta_{t, k-1}}{x_{t, k-1}} +\gradF{\beta_{t, k-1}}{x_{t, k-1}} - v_{t, k-1} \\
            &\hspace{35mm}     - \gradFstoc{\beta_{t, k}}{x_{t, k}} + \gradFstoc{\beta_{t, k-1}}{x_{t, k-1}}} \\[2mm]
    &= \normsqr{\gradF{\beta_{t, k-1}}{x_{t, k-1}} - v_{t, k-1}}  \\
    & \hspace{5mm} + \normsqr{\gradF{\beta_{t, k}}{x_{t, k}}  - \gradF{\beta_{t, k-1}}{x_{t, k-1}} - \gradFstoc{\beta_{t, k}}{x_{t, k}} + \gradFstoc{\beta_{t, k-1}}{x_{t, k-1}}} \\
    & \hspace{5mm} + 2\dotprod{\gradF{\beta{k-1}}{x_{t, k-1}} - v_{t, k-1}}{\gradF{\beta_{t, k}}{x_{t, k}}  - \gradF{\beta{k-1}}{x_{t, k-1}} \\
        &\hspace{5mm} - \gradFstoc{\beta_{t, k}}{x_{t, k}} + \gradFstoc{\beta_{t, k-1}}{x_{t, k-1}}} \\   
\end{align*}

\vspace{5mm}
We now take the expectation on both sides $\Expk{X} = \Exp{X}{\mathcalorigin{F}_{t, k}}$ conditioned on all randomness up to step $(t, k)$ (i.e. the expectations are taken solely with regards to $\xi_{\mathcal{S}_{t,k}}$).   
\begin{align}
    \Expk{\normsqr{\gradF{\beta_{t, k}}{x_{t, k}} - v_{t, k}}} & \nn \\
        &\hspace{-42mm} = \normsqr{\gradF{\beta_{t, k-1}}{x_{t, k-1}} - v_{t, k-1}} + \mathbb{E}_{t,k}\left[\normsqr{\gradF{\beta_{t, k}}{x_{t, k}}  - \gradF{\beta_{t, k-1}}{x_{t, k-1}} - \gradFstoc{\beta_{t, k}}{x_{t, k}} + \gradFstoc{\beta_{t, k-1}}{x_{t, k-1}}} \right]\nn  \\
        & \hspace{-35mm} + 2\dotprod{\gradF{\beta_{t, k-1}}{x_{t, k-1}} - v_{t, k-1}}{ \underbrace{ \Expk{\gradF{\beta_{t, k}}{x_{t, k}}  - \gradF{\beta_{t, k-1}}{x_{t, k-1}} - \gradFstoc{\beta_{t, k}}{x_{t, k}} + \gradFstoc{\beta_{t, k-1}}{x_{t, k-1}}}}_{ = 0, \text{ since } \gradF{\beta}{x} = \mathbb{E} \left[ \tilde{\nabla} F(x, \xi_{\mathcal{S}_{t,k}})\right]}}   \nn \\[2mm]
    &\hspace{-42mm}=  \normsqr{\gradF{\beta_{t, k-1}}{x_{t, k-1}} - v_{t, k-1}} + \underbrace{\mathbb{E}_{t,k}\left[\normsqr{\gradF{\beta_{t, k}}{x_{t, k}}  - \gradF{\beta_{t, k-1}}{x_{t, k-1}} - \gradFstoc{\beta_{t, k}}{x_{t, k}} + \gradFstoc{\beta_{t, k-1}}{x_{t, k-1}}} \right]}_{=T}\label{eq: unbiasedness-grad-F1}
\end{align}

\vspace{5mm}
We now bound $T$:
\begin{align}
    T &= \Expk{\normsqr{\gradF{\beta_{t, k}}{x_{t, k}}  - \gradF{\beta_{t, k-1}}{x_{t, k-1}} - \gradFstoc{\beta_{t, k}}{x_{t, k}} + \gradFstoc{\beta_{t, k-1}}{x_{t, k-1}}} } \nn \\[3mm]
    &= \Expk {\normsqr{ \frac{1}{K_t} \sum_{i = 1}^{K_t} \gradF{\beta_{t, k}}{x_{t, k}}  - \gradF{\beta_{t, k-1}}{x_{t, k-1}} - \nabla F_{\beta_{t, k}}(x_{t, k}, \xi_i) + \nabla F_{\beta_{t, k-1}}(x_{t, k-1}, \xi_i)} } \label{eq:avggrad} \\[3mm] 
    &= \frac{1}{K^2_t} \Expk { \sum_{i = 1}^{K_t} \normsqr{\gradF{\beta_{t, k}}{x_{t, k}}  - \gradF{\beta_{t, k-1}}{x_{t, k-1}} - \nabla F_{\beta_{t, k}}(x_{t, k}, \xi_i) + \nabla F_{\beta_{t, k-1}}(x_{t, k-1}, \xi_i)}} \nn \\ 
            &\hspace{10mm }+ \frac{2}{K^2_t} \mathbb{E}_{t,k}\big[ \sum_{\substack{i,j < K_t \\ i < j}} \dotprod{\gradF{\beta_{t, k}}{x_{t, k}}  - \gradF{\beta_{t, k-1}}{x_{t, k-1}} - \nabla F_{\beta_{t, k}}(x_{t, k}, \xi_i) + \nabla F_{\beta_{t, k-1}}(x_{t, k-1}, \xi_i)}{\nn \\
            &\hspace{40mm} \gradF{\beta_{t, k}}{x_{t, k}}  - \gradF{\beta_{t, k-1}}{x_{t, k-1}} - \nabla F_{\beta_{t, k}}(x_{t, k}, \xi_j) + \nabla F_{\beta_{t, k-1}}(x_{t, k-1}, \xi_j)} \big] \label{eq:expanding-sum} \\[3mm]
    &= \frac{1}{K^2_t}  \sum_{i = 1}^{K_t} \Expk {\normsqr{\gradF{\beta_{t, k}}{x_{t, k}}  - \gradF{\beta_{t, k-1}}{x_{t, k-1}} - \nabla F_{\beta_{t, k}}(x_{t, k}, \xi_i) + \nabla F_{\beta_{t, k-1}}(x_{t, k-1}, \xi_i)}} \label{eq:indep-samples} \\[3mm] 
    &= \frac{K_t}{K^2_t} \Expk{\normsqr{\gradF{\beta_{t, k}}{x_{t, k}}  - \gradF{\beta_{t, k-1}}{x_{t, k-1}} - \nabla F_{\beta_{t, k}}(x_{t, k}, \xi) + \nabla F_{\beta_{t, k-1}}(x_{t, k-1}, \xi)}}  \nn \\[3mm]
    &= \frac{1}{K_t} \mathbb{E}_{t, k} \| \nabla f(x_{t, k}) - \nabla f(x_{t, k-1}) - \nabla f(x_{t, k}, \xi) + \nabla f(x_{t, k-1}, \xi)   \nn \\
            &\hspace{20mm }+ \gradG{\beta_{t, k}}{Ax_{t, k}} - \gradG{\beta_{t, k-1}}{Ax_{t, k-1}} - \nabla g_{\beta_{t, k}}(\Astoc x_{t, k}) + \nabla g_{\beta_{t, k-1}}(\Astoc x_{t, k-1}) \|^2 \nn \\[3mm]
    &\leq \underbrace{\frac{2}{K_t} \mathbb{E}_{t, k} \| \nabla f(x_{t, k}) - \nabla f(x_{t, k-1}) - \nabla f(x_{t, k}, \xi) + \nabla f(x_{t, k-1}, \xi) \|^2}_{=T_1} \nn \\
            &\hspace{10mm} + \underbrace{\frac{2}{K_t} \mathbb{E}_{t, k} \|  \gradG{\beta_{t, k}}{Ax_{t, k}} - \gradG{\beta_{t, k-1}}{Ax_{t, k-1}} - \nabla g_{\beta_{t, k}}(\Astoc x_{t, k}) + \nabla g_{\beta_{t, k-1}}(\Astoc x_{t, k-1}) \|^2 }_{= T_2} \nn
\end{align}
Line \eqref{eq:avggrad} comes from the use of an averaged gradient with batch size $K_t$. Line \eqref{eq:expanding-sum} comes from applying the square norm to the inner sum, and linearity of expectation. Line \eqref{eq:indep-samples} comes from plugging the expectation inside the inner product as allowed by the independence of the samples $\Astoci[i]$ and $\Astoci[j]$ (if $X \perp Y$, then $\Expk{XY} = \Expk{X}\Expk{Y}$). This results in each term being zero, due to stochastic gradient unbiasedness.

\vspace{5mm}

We evaluate the terms $T_1$ and $T_2$ separately:
\begin{align}
    T_2 &= \frac{2}{K_t} \mathbb{E}_{t, k} \|  \gradG{\beta_{t, k}}{Ax_{t, k}} - \gradG{\beta_{t, k-1}}{Ax_{t, k-1}} - \nabla g_{\beta_{t, k}}(\Astoc x_{t, k}) + \nabla g_{\beta_{t, k-1}}(\Astoc x_{t, k-1}) \|^2 \nn \\[3mm]
       &=  \frac{2}{K_t} \mathbb{E}_{t, k}\big[ \normsqr{\gradG{\beta_{t, k}}{Ax_{t, k}} - \gradG{\beta_{t, k-1}}{Ax_{t, k-1}}} \nn \\
       &\hspace{15mm } - 2\dotprod{\gradG{\beta_{t, k}}{Ax_{t, k}} - \gradG{\beta_{t, k-1}}{Ax_{t, k-1}}}{\gradg[\beta_{t, k}]{\Astoc x_{t, k}} -  \gradg[\beta_{t, k-1}]{\Astoc x_{t, k-1}}} \nn \\
       &\hspace{15mm} + \normsqr{ \gradg[\beta_{t, k}]{\Astoc x_{t, k}} -  \gradg[\beta_{t, k-1}]{\Astoc x_{t, k-1}} } \big]  \nn\\[3mm]
  &= \frac{2}{K_t} \big( \normsqr{\gradG{\beta_{t, k}}{Ax_{t, k}} - \gradG{\beta_{t, k-1}}{Ax_{t, k-1}}}  \nn \\
          &\hspace{15mm} - 2\dotprod{\gradG{\beta_{t, k}}{Ax_{t, k}} - \gradG{\beta_{t, k-1}}{Ax_{t, k-1}}}{\Expk{\gradg[\beta_{t, k}]{\Astoc x_{t, k}} -  \gradg[\beta_{t, k-1}]{\Astoc x_{t, k-1}}}}  \nn \\
         &\hspace{15mm} + \Expk{\normsqr{ \gradg[\beta_{t, k}]{\Astoc x_{t, k}} -  \gradg[\beta_{t, k-1}]{\Astoc x_{t, k-1}} }} \big) \nn \\[3mm]
    &= \frac{2}{K_t} \big( \Expk{\normsqr{ \gradg[\beta_{t, k}]{\Astoc x_{t, k}} -  \gradg[\beta_{t, k-1}]{\Astoc x_{t, k-1}} }} - \normsqr{\gradG{\beta_{t, k}}{Ax_{t, k}} - \gradG{\beta_{t, k-1}}{Ax_{t, k-1}}} \big) \nn \\[3mm]
    &\leq \frac{2}{K_t} \Expk{\normsqr{ \gradg[\beta_{t, k}]{\Astoc x_{t, k}} - \nabla g_{\beta_{t, k}}(\Astoc x_{t, k-1})  + \nabla g_{\beta_{t, k}}(\Astoc x_{t, k-1}) -  \gradg[\beta_{t, k-1}]{\Astoc x_{t, k-1}}  }}  \nn \\[3mm]
    &= \frac{2}{K_t} \mathbb{E}_{t, k} \Big[ \| \frac{1}{\beta_{t, k}} A^T(\xi)\Astoc \left( x_{t, k} - x_{t, k-1}\right) + \frac{1}{\beta_{t, k}} A^T(\xi) \left[\Pi_{\bstoc} \left( \Astoc x_{t, k-1} \right) - \Pi_{\bstoc} \left( \Astoc x_{t, k} \right) \right] \nn \\
    &\hspace{50mm}+ \left( \frac{1}{\beta_{t, k}} - \frac{1}{\beta_{t, k-1}}\right)A^T(\xi) \left[ \Astoc x_{t, k -1} - \Pi_{\bstoc}\left(\Astoc x_{t, k-1}\right) \right] \|^2 \Big]  \nn \\[3mm]
    &\leq \frac{2}{K_t} \mathbb{E}_{t, k} \Big[ \frac{3L_A^2}{\beta_{t, k}^2} \| x_{t, k} - x_{t, k-1}\|^2 + \frac{3L_A}{\beta_{t, k}^2} \| \Pi_{\bstoc} \left( \Astoc x_{t, k-1} \right) - \Pi_{\bstoc} \left( \Astoc x_{t, k} \right) \|^2 \nn \\
    &\hspace{50mm}+ 3L_A\left( \frac{1}{\beta_{t, k}} - \frac{1}{\beta_{t, k-1}}\right)^2 \|  \Astoc x_{t, k -1} - \Pi_{\bstoc}\left(\Astoc x_{t, k-1}\right) \|^2 \Big]  \nn \\[3mm]
    &\leq \frac{2}{K_t} \mathbb{E}_{t, k} \Big[ \frac{3L_A^2}{\beta_{t, k}^2} \| x_{t, k} - x_{t, k-1}\|^2 + \frac{3L_A^2}{\beta_{t, k}^2} \| x_{t, k-1}  -  x_{t, k}  \|^2 + 3L_A\left( \frac{1}{\beta_{t, k}} - \frac{1}{\beta_{t, k-1}}\right)^2 \|  \Astoc x_{t, k -1} - \Astoc \xopt \|^2 \Big]  \label{eq:nonexpansiveness-and-optimality-of-proj} \\[3mm]
    &\leq \frac{2}{K_t} \Big[ \frac{6L_A^2\gamma_{t, k-1}^2 \diam^2}{\beta_{t, k}^2} + 3L_A^2 \diam^2\left( \frac{1}{\beta_{t, k}} - \frac{1}{\beta_{t, k-1}}\right)^2 \Big]  \label{eq:def-of-diam-and-iterate-update} \\[3mm]
        &\leq \frac{2L_A^2 \diam^2}{\beta_0^2 K_t(K_t + k - 1)} \Big[ \frac{24(K_t + k)}{(K_t + k -1)} + \frac{3}{4}\Big]  \label{eq:param-replacement} \\[3mm]
        &\leq \frac{98L_A^2 \diam^2}{\beta_0^2 K_t(K_t + k - 1)} \label{eq:T2}
\end{align}
where line \eqref{eq:nonexpansiveness-and-optimality-of-proj} comes from the nonexpansiveness of projections and $\|  \Astoc x_{t, k -1} - \Pi_{\bstoc}\left(\Astoc x_{t, k-1}\right) \| \leq \|  \Astoc x_{t, k -1} - y \|, \, \forall y \in \bstoc$, and line~\eqref{eq:def-of-diam-and-iterate-update} comes from the iterate update rule and the definition of $\diam$. Line~\eqref{eq:param-replacement} comes from replacing the parameter rates and the fact that:

\begin{align*}
    0 \leq \frac{1}{\beta_{t, k}} - \frac{1}{\beta_{t, k-1}} &= \frac{1}{\beta_0}\left( \sqrt{K_t + k} - \sqrt{K_t + k - 1}\right) \\
        &= \frac{1}{\beta_0}\left( \frac{1}{\sqrt{K_t + k} + \sqrt{K_t + k - 1}}\right) \\
        &\leq \frac{1}{2\beta_0 \sqrt{K_t + k - 1}}
\end{align*}

\vspace{5mm}
Now we evaluate $T_1$ and use the fact that $\nabla f(x, \xi)$ are $L_f$-Lipschitz:
\begin{align}
    T_1 &= \frac{2}{K_t} \mathbb{E}_{t, k} \left[ \| \nabla f(x_{t, k}) - \nabla f(x_{t, k-1}) - \nabla f(x_{t, k}, \xi) + \nabla f(x_{t, k-1}, \xi) \|^2\right] \nn\\[2mm]
            &= \frac{2}{K_t} \Bigg( \| \nabla f(x_{t, k}) - \nabla f(x_{t, k-1})\|^2 +  \mathbb{E}_{t, k} \left[ \| \nabla f(x_{t, k}, \xi) - \nabla f(x_{t, k-1}, \xi) \|^2\right] \nn \\
            &\hspace{50mm}- 2\langle \nabla f(x_{t, k}) - \nabla f(x_{t, k-1}) , \mathbb{E}_{t, k} \left[ \nabla f(x_{t, k}, \xi) - \nabla f(x_{t, k-1}, \xi)\right] \rangle \Bigg)\nn \\[2mm]
            &\leq \frac{2}{K_t}  \mathbb{E}_{t, k} \left[\| \nabla f(x_{t, k}, \xi) - \nabla f(x_{t, k-1}, \xi) \|^2\right] \nn \\[2mm]
             &\leq \frac{2L_f^2}{K_t} \| x_{t, k} - x_{t, k-1} \|^2\nn \\[2mm]
             &\leq \frac{2L_f^2\gamma_{t, k-1}^2 \diam^2}{K_t} \nn \\[2mm]
             &= \frac{8L_f^2 \diam^2}{K_t(K_t + k - 1)^2} \label{eq:T1}
\end{align}

\vspace{5mm}
Plugging in~\eqref{eq:T1} and~\eqref{eq:T2} into the expression of $T$, we get that 
\begin{align}
    T  &\leq \frac{8L_f^2 \diam^2}{K_t(K_t + k - 1)^2}  + \frac{98L_A^2 \diam^2}{\beta_0^2 K_t(K_t + k - 1)} \nn \\
        &\leq \frac{\diam^2 \left(8L_f^2 + \frac{98L_A^2}{\beta_0^2}\right) }{K_t(K_t + k - 1)} 
\end{align}

\vspace{5mm}

Now we telescope the sum in Equation~\eqref{eq: unbiasedness-grad-F1} and get 
\begin{align}
 \mathbb{E}_{t, 1}\left[ \normsqr{\gradF{\beta_{t, k}}{x_{t, k}} - v_{t, k}} \right] &= \mathbb{E}_{t, 1}\left[ \mathbb{E}_{t, 2}\left[ \ldots \mathbb{E}_{t, k}\left[  \normsqr{\gradF{\beta_{t, k}}{x_{t, k}} - v_{t, k}} \right]\right] \right] \nn \\[3mm]
 &\leq \frac{\diam^2}{K_t} \left(8L_f^2 + \frac{98L_A^2}{\beta_0^2}\right) \sum_{i = 2}^k   \frac{1}{K_t + i - 1} +  \mathbb{E}_{t, 1}\left[ \normsqr{\gradF{\beta_{t,1}}{x_{t,1}} - v_{t, 1}} \right] \nn \\
        &\leq \frac{\diam^2}{K_t} \left(8L_f^2 + \frac{98L_A^2}{\beta_0^2}\right)  \sum_{i = 2}^k   \frac{1}{\frac{K_t + k}{2}} +  \mathbb{E}_{t, 1}\left[ \normsqr{\gradF{\beta_{t,1}}{x_{t,1}} - v_{t, 1}}\right]  \label{eq:upper-bd-k-term} \\
        &= \frac{2\diam^2}{K_t} \left(8L_f^2 + \frac{98L_A^2}{\beta_0^2}\right)  \frac{k - 1}{K_t + k} +  \mathbb{E}_{t, 1}\left[ \normsqr{\gradF{\beta_{t,1}}{x_{t,1}} - v_{t, 1}} \right]  \nn \\
        &\leq \frac{2\diam^2}{K_t + k} \left(8L_f^2 + \frac{98L_A^2}{\beta_0^2}\right) +  \mathbb{E}_{t, 1}\left[ \normsqr{\gradF{\beta_{t,1}}{x_{t,1}} - v_{t, 1}}\right] \label{eq:pre-var-bound}
\end{align}
where line~\eqref{eq:upper-bd-k-term} comes from the fact that 
\begin{align*}
    2 \leq k \leq 2^{t - 1} = K_t  &\implies 2^{t - 2} + 1 \leq \frac{K_t +k}{2}\leq 2^{t-1}\textbf{ and }\\
    2 \leq i \leq k \leq 2^{t - 1} &\implies 2^{t - 1} + 1 \leq K_t + i - 1 \leq 2^t - 1
\end{align*}
and line~\eqref{eq:pre-var-bound} comes from $k-1 \leq K_t$. \qed
\end{proof}

\vspace{10mm}

\begin{replemma}{lem:spiderfw-finite-sum-var}[Estimator variance for finite-sum problems]
Consider Algorithm~\ref{alg:homotopy-spider-fw}, and let $\xi$ be finitely sampled from set $[n]$, $\xi_{\mathcalorigin{Q}_t} = [n]$ and $\xi_{\mathcalorigin{S}_{t, k}}, \text{ such that } |\mathcalorigin{S}_{t, k}| = K_t= 2^{t-1}$. Also, let $\gamma_{t, k} = \frac{2}{K_t + k}, ~\beta_{t, k} = \frac{\beta_0}{\sqrt{K_t + k}}$. Then, for a fixed $t$ and for all $k \leq K_t$,
    \begin{equation}
        \Exp{\normsqr{\gradF{\beta_{t, k}}{x_{t, k}} - v_{t, k}}}{} \leq \frac{C_1}{K_t + k}, 
    \end{equation}
where $C_1 = 2\diam^2 \left(8L_f^2 + \frac{98L_A^2}{\beta_0^2}\right)$.
\end{replemma}

\begin{proof}   

The result directly follows from the fact that we take a full gradient in the outer loop ($\xi_{\mathcalorigin{Q}_t} = [n]$), thus zeroing out the term  $\mathbb{E}_{t, 1}\left[ \normsqr{\gradF{\beta_{t,1}}{x_{t,1}} - v_{t,1}} \right]$ of Lemma~\ref{lemma: variance-bound}. Taking the full expectation on both sides gives us the stated result. \qed
\end{proof}

\vspace{5mm}
\begin{replemma}{lem:spiderfw-expectation-var}[Estimator variance for generic problems]

Consider Algorithm~\ref{alg:homotopy-spider-fw} and let $\xi \sim P(\xi)$ and $\xi_{\mathcalorigin{Q}_t} \text{ such that } \lvert  \mathcalorigin{Q}_t \rvert = \lceil \frac{2K_t}{\beta_{t, 1}^2}\rceil$. Also, let $\xi_{\mathcalorigin{S}_{t, k}}, \text{ such that } |\mathcalorigin{S}_{t, k}| = K_t= 2^{t-1}$, $\gamma_{t, k} = \frac{2}{K_t + k}, ~\beta_{t, k} = \frac{\beta_0}{\sqrt{K_t + k}}$. Then, for a fixed $t$ and for all $k \leq K_t$,
    \begin{equation}
        \Exp{\normsqr{\gradF{\beta_{t, k}}{x_{t, k}} - v_{t, k}}}{} \leq \frac{C_2}{K_t + k}, 
    \end{equation}
where $C_2 =  16L_f^2\diam^2 + 2L_A^2\diam^2 \left( \frac{98}{\beta_0^2} + 1\right) + 2\beta_0^2\sigma_f^2$.
\end{replemma}

\begin{proof}

    From the use of averaged gradient and Technical~observation~\ref{list:technical-obs-g-variance}:
    \begin{align*}    
        \mathbb{E}_{t, 1}\left[ \normsqr{\gradF{\beta_{t,1}}{x_{t,1}} - v_{t,1}} \right] &\leq \frac{1}{\lvert  \mathcalorigin{Q}_t \rvert } \mathbb{E}_{t, 1}\left[ \normsqr{\nabla f(x_{t, 1}) - \nabla f(x_{t, 1}, \xi) + \gradG{\beta_{t, 1}}{Ax_{t,1}} - \gradg[\beta_{t, 1}]{\Astoc x_{t, 1}}} \right] \\[2mm]
            &\leq \frac{1}{\lvert  \mathcalorigin{Q}_t \rvert } \left( 2\mathbb{E}_{t, 1} \left[ \normsqr{\nabla f(x_{t, 1}) - \nabla f(x_{t, 1}, \xi)} \right] + 2 \mathbb{E}_{t, 1} \left[ \normsqr{\gradG{\beta_{t, 1}}{Ax_{t,1}} - \gradg[\beta_{t, 1}]{\Astoc x_{t, 1}}}\right) \right]\\[2mm]
            &\leq \frac{\beta_{t, 1}^2}{2K_t} \left( 2\sigma_f^2 + \frac{2L_A^2 \diam^2}{\beta_{t, 1}^2} \right) \\[2mm]
            &\leq \frac{\beta_0^2}{2K_t(K_t + 1)} \left( 2\sigma_f^2 + \frac{2L_A^2 \diam^2(K_t + 1)}{\beta_0^2 } \right) \\[2mm]
            &\leq \frac{\beta_0^2\sigma_f^2}{K_t^2} + \frac{L_A^2 \diam^2}{K_t } \\[2mm]
            &\leq \frac{1}{K_t + k} \left(2\beta_0^2\sigma_f^2 + 2L_A^2 \diam^2\right)
\end{align*}
Where we have used that $2K_t \geq K_t + k$ and $K_t^2 \geq K_t = \frac{2K_t}{2} \geq \frac{K_t + k}{2}, \; \forall K_t \in \mathbb{N}, K_t \geq 1, \forall k \leq K_t$. Replacing in~\eqref{eq:pre-var-bound}, we obtain the desired result. \qed

\end{proof}

\vspace{5mm}

\begin{reptheorem}{thm:spiderfw}

Consider Algorithm~\ref{alg:homotopy-spider-fw} with parameters $\gamma_{t, k} = \frac{2}{K_t + k}$, $\beta_{t, k} = \frac{\beta0}{\sqrt{K_t+k}}$ and $\xi_{\mathcalorigin{S}_{t, k}}, \text{ such that } |\mathcalorigin{S}_{t, k}| = K_t= 2^{t-1}$. Then,
\begin{itemize}[itemsep=15pt]
\item For $\xi$ be finitely sampled from set $[n]$ and $\xi_{\mathcalorigin{Q}_t} = [n]$,
                        \begin{equation*}
                                \Exp{S_{\beta_{t, k}}(x_{t, k + 1})}{}\leq \frac{C_3}{\sqrt{K_t + k + 1}}, \;\; \forall t \in \mathbb{N}, \, 1 \leq k \leq 2^{t-1}
                        \end{equation*}
where $C_3 = \max \left\{S_{\beta_{1, 0} }( x_{1, 1}), \; 2\diam^2 L_f + 2\diam^2 \sqrt{16L_f^2 + \frac{196L_A^2}{\beta_0^2}} +  \frac{2\diam^2 L_A}{\beta_0}  \right\} $;
\item For $\xi \sim P(\xi)$ and $\xi_{\mathcalorigin{Q}_t} \text{ such that } \lvert  \mathcalorigin{Q}_t \rvert = \lceil \frac{2K_t}{\beta_{t, 1}^2}\rceil$,
                        \begin{equation*}
                                \Exp{S_{\beta_{t, k}}(x_{t, k + 1})}{} \leq \frac{C_4}{\sqrt{K_t + k + 1}}, \;\; \forall t \in \mathbb{N}, \, 1 \leq k \leq 2^{t-1}
                        \end{equation*}
                        where $C_4 = \max \left\{ S_{\beta_{1,0}}(x_{1, 1})  , \; 2\diam^2 L_f +  \frac{2\diam^2 L_A}{\beta_0}  + 2\diam \sqrt{16L_f^2\diam^2 + 2L_A^2\diam^2 \left( \frac{98}{\beta_0^2} + 1\right) + 2\beta_0^2\sigma_f^2}\right\}$.
\end{itemize}
\end{reptheorem}

\begin{proof}

The proof has two steps, coming from the nested loop structure of Algorithm~\ref{alg:homotopy-spider-fw}. We first determine the recursion for $S_{\beta_{t,k}}(x_{t, k+1})$ for all the iterates of the inner loop (constant $t$) and then show that the recursion holds at the `edges' i.e., when going from $t-1$ to $t$. 

\medskip

{\large\textbf{1. Convergence recursion}}\\
{\large\textbf{1.1 Recursion of $S_{\beta_{t, k}}$ for constant $t$ (inner loop)}}

Using Technical~observation~\ref{list:technical-obs-g-smoothness}, the definition of $\diam$ and the optimality of $w_{t, k}$:
\begin{align}
			\F{ \beta_{t, k} }{ x_{k + 1} } &= \mathbb{E}_{t, k} \left[ F_{\beta_{t,k}} (x_{t, k}, \xi)\right] \nn \\[2mm]
			&\leq \mathbb{E}_{t, k}  \left[ \F{ \beta_{t, k} }{ x_{t, k}, \xi } + \dotprod{ \gradF{ \beta_{t, k} }{ x_{t, k}, \xi } }{ x_{t, k+1} - x_{t, k} } + \frac{L_f + \frac{L_A}{\beta_{t, k}}}{2} \normsqr{x_{t, k+1} - x_{t, k}} \right] \nn\\[2mm]
				&\leq \F{ \beta_{t, k} }{ x_{t, k}} + \gamma_{t, k} \dotprod{ \gradF{ \beta_{t, k} }{ x_{t, k}} }{ w_{t, k} - x_{t, k}} + \frac{\gamma_{t, k}^2(L_f + \frac{L_A}{\beta_{t, k}})}{2}\normsqr{w_{t, k} - x_{t, k}} \nn \\[2mm]
				&\leq \F{ \beta_{t, k} }{ x_{t, k}} + \gamma_{t, k} \dotprod{ \gradF{ \beta_{t, k} }{ x_{t, k} } }{ w_{t, k} - x_{t, k} } + \frac{\diam^2\gamma_{t, k}^2}{2}(L_f + \frac{L_A}{\beta_{t, k}}) \nn\\[2mm]
				&\leq \F{ \beta_{t, k} }{ x_{t, k}} + \gamma_{t, k} \left( \dotprod{ \gradF{ \beta_{t, k} }{ x_{t, k} } - v_{t, k} }{ w_{t, k} - x_{t, k} } +  \dotprod{ v_{t, k} }{ \xopt - x_{t, k} }\right) + \frac{\diam^2\gamma_{t, k}^2}{2}(L_f + \frac{L_A}{\beta_{t, k}})\label{eq:quad-upper-bd}
\end{align}

\vspace{5mm}
We process the second term above separately, using the convexity of $f$, Technical~observation~\ref{list:technical-obs-trandinh-relationto-g} and noting that $ v_{t, k-1} - v_{t, k} =  \gradFstoc{ \beta_{t, k-1} }{ x_{t, k-1} } - \gradFstoc{ \beta_{t, k} }{ x_{t, k} }$:
\begin{align}
			\dotprod{ \gradF{ \beta_{t, k} }{ x_{t, k} } - v_{t, k} }{ w_{t, k} - x_{t, k} }  + \dotprod{ v_{t, k} }{ \xopt - x_{t, k} } & \nn \\[3mm]
			&\hspace{-60mm}=\dotprod{ \gradF{ \beta_{t, k} }{ x_{t, k} } - v_{t, k} }{ w_{t, k} - \xopt } + \dotprod{ \gradF{ \beta_{t, k} }{ x_{t, k} } - v_{t, k} }{ \xopt - x_{t, k} } \nn \\
			&\hspace{-30mm}+ \dotprod{ v_{t, k-1} - \gradFstoc{ \beta_{t, k-1} }{ x_{t, k-1} } + \gradFstoc{ \beta_{k} }{ x_{t, k} } }{ \xopt - x_{t, k} } \nn \\[3mm]
			    &\hspace{-60mm}=  \dotprod{ \gradF{ \beta_{t, k} }{ x_{t, k} } - v_{t, k} }{ w_{t, k} - \xopt } +  \dotprod{ \gradF{ \beta_{t, k} }{ x_{t, k} } - v_{t, k} + v_{t, k-1} - \gradFstoc{ \beta_{t, k - 1} }{ x_{t, k-1} }}{ \xopt - x_{t, k} } \nn\\
			    &\hspace{-20mm}+ \dotprod{ \tilde{\nabla} f(x_{t, k}, \xi_{\mathcalorigin{S}_{t,k}} )  }{ \xopt - x_{t, k} } + \dotprod{A^T(\xi_{\mathcalorigin{S}_{t,k}}) \avggradg[\beta_{t, k}]{A(\xi_{\mathcalorigin{S}_{t,k}}) x_{t, k}} }{ \xopt - x_{t, k} }\nn \\[3mm]                				
			    &\hspace{-60mm}\leq  \dotprod{ \gradF{ \beta_{t, k} }{ x_{t, k} } - v_{t, k} }{ w_{t, k} - \xopt }  + \dotprod{\gradF{ \beta_{t, k} }{ x_{t, k} } - \gradFstoc{ \beta_{t, k} }{ x_{t, k} }   }{ x_{t, k} - \xopt} \nn \\
			    &\hspace{-30mm}+ \tilde{f}(\xopt, \xi_{\mathcalorigin{S}_{t,k}}) - \tilde{f} (x_{t, k}, \xi_{\mathcalorigin{S}_{t,k}}) + \dotprod{\avggradg[\beta_{t, k}]{A(\xi_{\mathcalorigin{S}_{t,k}}) x_{t, k}} }{ A(\xi_{\mathcalorigin{S}_{t,k}})\xopt - A^T(\xi_{\mathcalorigin{S}_{t,k}})x_{t, k} } \nn\\[3mm]               
				&\hspace{-60mm}\leq \dotprod{ \gradF{ \beta_{t, k} }{ x_{t, k} } - v_{t, k} }{ w_{t, k} - \xopt }  + \dotprod{\gradF{ \beta_{t, k} }{ x_{t, k} } - \gradFstoc{ \beta_{t, k} }{ x_{t, k} }   }{ x_{t, k} - \xopt} \nn \\
				&\hspace{-50mm} + \tilde{f}(\xopt, \xi_{\mathcalorigin{S}_{t,k}}) + \underbrace{\tilde{g}(A(\xi_{\mathcalorigin{S}_{t,k}}) \xopt)}_{ = 0 \text{ a.s.}} \underbrace{- \tilde{f} (x_{t, k}, \xi_{\mathcalorigin{S}_{t,k}}) - \tilde{g}_{\beta_{t, k}}(A(\xi_{\mathcalorigin{S}_{t,k}}) x_{t, k})}_{ = - \tilde{F}_{\beta_{t, k}}(x_{t, k}, \xi_{\mathcalorigin{S}_{t,k}}) } - \frac{\beta_{t, k}}{2} \widetilde{\normsqr{\lambda^*_{\beta_{t, k}}(A(\xi_{\mathcalorigin{S}_{t,k}}) x_{t, k})}} \label{eq:inner-prod-bound}
\end{align}

\vspace{5mm}
We can now resume Equation \eqref{eq:quad-upper-bd} by plugging in the inequality in \eqref{eq:inner-prod-bound}, subtracting $\fopt$ from both sides, and taking the conditional expectation $\Expk{ X } = \Exp{X}{\mathcalorigin{F}_{t,k}}$. 
\begin{align}
	\Expk{\F{ \beta_{t, k} }{ x_{k + 1} } - \fopt} &\nn\\[3mm]
	        &\hspace{-35mm}\leq \mathbb{E}_{t, k} \left[ \F{ \beta_{t, k} }{ x_{t, k}} + \gamma_{t, k} \left( \dotprod{ \gradF{ \beta_{t, k} }{ x_{t, k} } - v_{t, k} }{ w_{t, k} - x_{t, k} } +  \dotprod{ v_{t, k} }{ \xopt - x_{t, k} }\right) + \frac{\diam^2\gamma_{t, k}^2}{2}\left(L_f + \frac{L_A}{\beta_{t, k}}\right) \right] - \fopt \nn \\[5mm]
	        &\hspace{-35mm}\leq \F{ \beta_{t, k} }{ x_{t, k}}  + \gamma_{t, k} \Bigg( \dotprod{ \gradF{ \beta_{t, k} }{ x_{t, k} } - v_{t, k} }{ w_{t, k} - \xopt }  + \underbrace{\mathbb{E}_{t, k} \left[\dotprod{\gradF{ \beta_{t, k} }{ x_{t, k} } - \gradFstoc{ \beta_{t, k} }{ x_{t, k} }   }{ x_{t, k} - \xopt} \right]}_{ = 0, \text{ \tiny  unbiasedness}}    \nn \\
	        &\hspace{-30mm}+ \mathbb{E}_{t, k} \left[\tilde{f}(\xopt, \xi_{\mathcalorigin{S}_{t,k}}) - \tilde{F}_{\beta_{t, k}}(x_{t, k}, \xi_{\mathcalorigin{S}_{t,k}}) - \frac{\beta_{t, k}}{2} \widetilde{\normsqr{\lambda^*_{\beta_{t, k}}(A(\xi_{\mathcalorigin{S}_{t,k}}) x_{t, k})}}\right]       \Bigg)+ \frac{\diam^2\gamma_{t, k}^2}{2}\left(L_f + \frac{L_A}{\beta_{t, k}}\right) - \fopt \nn \\[5mm]
	        &\hspace{-35mm}\leq \F{ \beta_{t, k} }{ x_{t, k}}  + \gamma_{t, k} \Bigg( \dotprod{ \gradF{ \beta_{t, k} }{ x_{t, k} } - v_{t, k} }{ w_{t, k} - \xopt }  + \fopt - \F{ \beta_{t, k} }{ x_{t, k}} - \mathbb{E}_{t, k} \left[ \frac{\beta_{t, k}}{2} \widetilde{\normsqr{\lambda^*_{\beta_{t, k}}(A(\xi_{\mathcalorigin{S}_{t,k}}) x_{t, k})}}\right]  \Bigg)     \nn \\
	        &\hspace{80mm} + \frac{\diam^2\gamma_{t, k}^2}{2}\left(L_f + \frac{L_A}{\beta_{t, k}}\right) - \fopt \nn \\[5mm]
	        &\hspace{-35mm}= (1 - \gamma_{t, k}) (\F{ \beta_{t, k} }{ x_{t, k} } - \fopt) + \gamma_{t, k} \dotprod{ \gradF{ \beta_{t, k} }{ x_{t, k} } - v_{t, k} }{ w_{t, k} - \xopt }  - \frac{\gamma_{t, k} \beta_{t, k}}{2} \Expk{\widetilde{\normsqr{ \lambda^*_{\beta_{t, k}}(A(\xi_{\mathcalorigin{S}_{t,k}}) x_{t, k})}}}  \nn\\
			&\hspace{90mm} + \frac{\diam^2\gamma_{t, k}^2}{2}\left(L_f + \frac{L_A}{\beta_{t, k}}\right) \nn
\end{align}

\vspace{5mm}
Using Technical~observation~\ref{list:technical-obs-trandinh-relationto-prev-beta} we observe that 
\begin{align*}
    \F{\beta_{t, k}}{x_{t, k}} &=  \mathbb{E}_{t,k}\left[ \tilde{f}(x_{t, k}, \xi_{\mathcalorigin{S}_{t,k}}) + \tilde{g}_{\beta_{t, k}}(A(\xi_{\mathcalorigin{S}_{t,k}}) x_{t, k})\right] \\
        &\leq \mathbb{E}_{t,k}\left[ \tilde{f}(x_{t, k}, \xi_{\mathcalorigin{S}_{t,k}}) + \tilde{g}_{\beta_{t, k-1}}(A(\xi_{\mathcalorigin{S}_{t,k}}) x_{t, k}) + \frac{\beta_{t, k-1} - \beta_{t, k}}{2} \widetilde{\normsqr{\lambda^*_{\beta_{t, k}}(A(\xi_{\mathcalorigin{S}_{t,k}})  x_{t, k})}} \right] \\
        &= \F{\beta_{t, k-1}}{x_{t, k}} +  \mathbb{E}_{t,k}\left[ \frac{\beta_{t, k-1} - \beta_{t, k}}{2} \widetilde{\normsqr{\lambda^*_{\beta_{t, k}}(A(\xi_{\mathcalorigin{S}_{t,k}})  x_{t, k})}} \right]. \\
\end{align*}

\vspace{5mm}
Using the above and the definition of $\diam$, we continue the inequality as:
\begin{align}
	\Expk{\F{ \beta_{t, k} }{ x_{k + 1} } - \fopt} &\leq (1 - \gamma_{t, k}) (\F{\beta_{t, k - 1}}{x_{t, k}} - \fopt) + \gamma_{t, k} \diam\norm{\gradF{\beta_{t, k}}{x_{t, k}} - v_{t, k}}    \nn \\
					&  + \frac{(1 - \gamma_{t, k})(\beta_{t, k-1} - \beta_{t, k}) - \gamma_{t, k} \beta_{t, k}}{2} \Expk{\widetilde{\normsqr{ \lambda^*_{\beta_{t, k}}(A(\xi_{\mathcalorigin{S}_{t,k}}) x_{t, k})}}} + \frac{\diam^2\gamma_{t, k}^2}{2}\left(L_f + \frac{L_A}{\beta_{t, k}}\right)  \label{quad-upper-bd-cond-exp}
\end{align}

\vspace{5mm}
Using the stated parameter rates, we notice that $(1 - \gamma_{t, k})(\beta_{t, k-1} - \beta_{t, k}) - \gamma_{t, k} \beta_{t, k} < 0$, as follows:
\begin{align}
 \MoveEqLeft[7] \left(1 - \frac{2}{K_t + k}\right)\left(\frac{\beta_0}{\sqrt{K_t + k - 1}}  - \frac{\beta_0}{\sqrt{K_t + k}}\right) - \frac{2\beta_0}{(K_t + k) \sqrt{K_t + k}}  \nn\\
={}& \frac{\beta_0}{\sqrt{K_t + k - 1}} - \frac{\beta_0}{\sqrt{K_t + k}} - \frac{2\beta_0}{(K_t + k)\sqrt{K_t + k - 1}} \nn\\
={}&  \beta_0\frac{K_t + k - \sqrt{K_t + k} \sqrt{K_t + k - 1} - 2 }{(K_t + k)\sqrt{K_t + k - 1}}\nn\\
={}& \beta_0 \frac{(K_t + k - 1) - 2 \sqrt{\frac{K_t + k}{4}} \sqrt{K_t + k - 1} + \frac{K_t + k}{4} - \frac{K_t + k}{4} - 1}{(K_t + k)\sqrt{K_t + k - 1}} \nn\\
={}& \beta_0 \frac{(\sqrt{K_t + k - 1} - \frac{\sqrt{K_t + k}}{2})^2 - \frac{K_t + k}{4} - 1}{(K_t + k)\sqrt{K_t + k - 1}} \nn\\
={}& \beta_0 \frac{ (\sqrt{K_t + k - 1} - \frac{\sqrt{K_t + k}}{2} - \frac{\sqrt{K_t + k}}{2}) (\sqrt{K_t + k - 1} - \frac{\sqrt{K_t + k}}{2} + \frac{\sqrt{K_t + k}}{2}) - 1}{(K_t + k)\sqrt{K_t + k - 1}} \nn\\
={}& \beta_0 \frac{ \overbrace{(\sqrt{K_t + k - 1} - \sqrt{K_t + k})}^{< 0} \sqrt{K_t + k - 1} - 1}{(K_t + k)\sqrt{K_t + k - 1}} \nn\\
<{}& 0 \label{eq:beta-gamma-difference-smaller-than-zero}\\
\nn
\end{align}

\vspace{5mm}
Finally, noting the definition of $S_{\beta_{t, k} }( x_{t, k + 1})$ and taking full expectation non both sides, we arrive at:
\begin{align}
        \Exp{S_{\beta_{t, k} }( x_{t, k + 1}) }{} &\leq  (1 - \gamma_{t, k}) \Exp{S_{\beta_{t, k - 1} }( x_{t, k}) }{} + \gamma_{t, k} \diam \Exp{\norm{\gradF{\beta_{t, k}}{x_{t, k}} - v_{t, k}}}{}  + \frac{\diam^2\gamma_{t, k}^2}{2}\left(L_f + \frac{L_A}{\beta_{t, k}}\right) \label{eq:var-upper-bd2}
\end{align}

\medskip

\textbf{1.2 Recursion of $S_{\beta_{t, k}}$ at the `edges'}

We now want to show that the same recursion holds when going for $S_{\beta_{t, 1} }( x_{t, 2}) $ and $S_{\beta_{t - 1, K_{t - 1} }}(x_{t - 1, K_{t - 1} + 1})$. We follow similar steps as in the previous section (which we shorten this time for conciseness). Using smoothness and the fact that from Algorithm~\ref{alg:homotopy-spider-fw} we have $x_{t, 1} = x_{t - 1, K_{t - 1} + 1}$:
\begin{align}
			\F{ \beta_{t, 1} }{ x_{t, 2} } &\leq \F{ \beta_{t,  1} }{ x_{t, 1} } + \gamma_{t, 1} \dotprod{ \gradF{ \beta_{t, 1 }}{ x_{t, 1} } }{ w_{t, 1} - x_{t, 1} } + \frac{\diam^2\gamma_{t, 1}^2}{2}\left( L_f + \frac{L_A}{\beta_{t, 1}}\right)\label{eq:quad-upper-bd-edges}
\end{align}

\vspace{5mm}
Since $v_{t, 1} = \gradF{ \beta_{t, 1 }}{ x_{t, 1} } $ and $w_{t, 1} = \text{lmo}_{\X}(v_{t, 1})$, we have that $ \dotprod{ \gradF{ \beta_{t, 1 }}{ x_{t, 1} } }{ w_{t, 1} - x_{t, 1} } \leq  \dotprod{ \gradF{ \beta_{t, 1 }}{ x_{t, 1} } }{ x^* - x_{t, 1} }$. Further using the definition of $F_{\beta}$, the convexity of $f$ and Technical~observation~\ref{list:technical-obs-trandinh-relationto-g}, we have:
\begin{align}
        \dotprod{ \gradF{ \beta_{t, 1 }}{ x_{t, 1} } }{ w_{t, 1} - x_{t, 1} } &\leq  \dotprod{ \gradF{ \beta_{t, 1 }}{ x_{t, 1} } }{ x^* - x_{t, 1} } \nn \\[2mm]
                    &= \dotprod{ \nabla f(x_{t, 1}) + \nabla_x G_{\beta_{t, 1}}(Ax_{t, 1}) }{ x^* - x_{t, 1} } \nn \\[2mm]
                    &\leq \fopt - f(x_{t, 1}) + \mathbb{E}_{t, 1} \left[ \dotprod{ \tilde{\nabla}_x g_{\beta_{t, 1}}(A(\xi_{\mathcalorigin{Q}_t})x_{t, 1}) }{ x^* - x_{t, 1} } \right]\nn \\[2mm] 
                    &\leq \fopt - f(x_{t, 1}) + \mathbb{E}_{t, 1} \left[ \underbrace{\tilde{g}(A(\xi_{\mathcalorigin{Q}_t}) \xopt)}_{= 0 \text{ \tiny a.s.}} - \tilde{g}_{\beta_{t, 1}} (A(\xi_{\mathcalorigin{Q}_t}) x_{t, 1}) - \frac{\beta_{t, 1}}{2} \widetilde{\normsqr{\lambda_{\beta_{t, 1}}^*(A(\xi_{\mathcalorigin{Q}_t}) x_{t, 1})} } \right]\nn \\[2mm]
                    &\leq \fopt \underbrace{- f(x_{t, 1}) - G_{\beta_{t, 1}}(Ax_{t, 1})}_{= - F_{\beta_{t, 1}}(x_t, 1)}  - \frac{\beta_{t, 1}}{2} \mathbb{E}_{t, 1} \left[\widetilde{\normsqr{\lambda_{\beta_{t, 1}}^*(A(\xi_{\mathcalorigin{Q}_t}) x_{t, 1})} } \right] \label{eq:tran-dinh-G-inequality-application}
\end{align}


\vspace{5mm}
Another remark is that we can still make the transition from $\F{ \beta_{t,  1} }{ x_{t, 1} }$ to $\F{ \beta_{t -1,  K_{t-1}} }{ x_{t, 1} }$ using Technical~observation~\ref{list:technical-obs-trandinh-relationto-prev-beta}, since the $\beta$'s are `continuous' at the edge: $\beta_{t -1,  K_{t-1}} = \frac{\beta_0}{\sqrt{K_{t-1} + K_{t-1}}} = \frac{\beta_0}{\sqrt{K_{t}}}$ and $\beta_{t,  1} = \frac{\beta_0}{\sqrt{K_{t} + 1}}$. We thus have:
\begin{align}
    \F{\beta_{t, 1}}{x_{t, 1}} &=  \mathbb{E}_{t,1}\left[ \tilde{f}(x_{t, 1}, \xi_{\mathcalorigin{Q}_{t}}) + \tilde{g}_{\beta_{t, 1}}(A(\xi_{\mathcalorigin{Q}_{t}}) x_{t, 1})\right] \nn\\
        &\leq \mathbb{E}_{t,1}\left[ \tilde{f}(x_{t, 1}, \xi_{\mathcalorigin{Q}_{t}}) + \tilde{g}_{\beta_{t-1, K_{t-1}}}(A(\xi_{\mathcalorigin{Q}_{t}}) x_{t, 1}) + \frac{\beta_{t-1, K_{t-1}} - \beta_{t, 1}}{2} \widetilde{\normsqr{\lambda^*_{\beta_{t, 1}}(A(\xi_{\mathcalorigin{Q}_{t}})  x_{t, 1})}} \right] \nn\\
        &= \F{\beta_{t-1, K_{t-1}}}{x_{t, 1}} +  \mathbb{E}_{t,1}\left[ \frac{\beta_{t-1, K_{t-1}} - \beta_{t, 1}}{2} \widetilde{\normsqr{\lambda^*_{\beta_{t, 1}}(A(\xi_{\mathcalorigin{Q}_{t}})  x_{t, 1})}} \right] \label{eq:tran-at-the-edge}
\end{align}

\vspace{5mm}
Inserting ~\eqref{eq:tran-at-the-edge} and~\eqref{eq:tran-dinh-G-inequality-application} into~\eqref{eq:quad-upper-bd-edges}:
\begin{align}
\F{ \beta_{t, 1} }{ x_{t, 2} } &\leq (1 - \gamma_{t, 1})\F{ \beta_{t,  1} }{ x_{t, 1} }  + \gamma_{t, 1} \fopt - \frac{\gamma_{t, 1}\beta_{t, 1}}{2} \mathbb{E} \left[ \widetilde{\normsqr{\lambda^*_{\beta_{t, 1}}(A(\xi_{\mathcalorigin{Q}_{t}})  x_{t, 1})}}\right]   + \frac{\diam^2\gamma_{t, 1}^2}{2}\left( L_f + \frac{L_A}{\beta_{t, 1}}\right)\nn \\
                    &\leq (1 - \gamma_{t, 1}) \F{\beta_{t-1, K_{t-1}}}{x_{t, 1}}  + \gamma_{t, 1} \fopt + \underbrace{\frac{ (1 - \gamma_{t, 1})(\beta_{t-1, K_{t-1}} - \beta_{t, 1})- \gamma_{t, 1}\beta_{t, 1}}{2}}_{<0, \text{ \tiny as before}} \mathbb{E}_{t, 1} \left[ \widetilde{\normsqr{\lambda^*_{\beta_{t, 1}}(A(\xi_{\mathcalorigin{Q}_{t}})  x_{t, 1})}} \right]  \nn \\
                    &\hspace{120mm}+ \frac{\diam^2\gamma_{t, 1}^2}{2}\left( L_f + \frac{L_A}{\beta_{t, 1}}\right) \nn
\end{align}

Finally, subtracting $\fopt$ from both sides and taking the expectation, we have:
\begin{align}
    \mathbb{E} \left[ S_{\beta_{t, 1} }( x_{t, 2} ) \right] &\leq (1 - \gamma_{t, 1}) \mathbb{E} \left[ S_{\beta_{t - 1, K_{t - 1}}}(x_{t, 1} )\right] + \frac{\diam^2\gamma_{t, 1}^2}{2}\left( L_f + \frac{L_A}{\beta_{t, 1}}\right)\label{eq:rec-at-edges}
\end{align}

{\large\textbf{2. Convergence rates for the finite sum case}}

For ease, we first cast the index pairs $(t, k)$ to their corresponding global index counterparts (in a sense, we flatten the double loop structure). The variables indexed by $(t, k)$ can be seen as equivalently indexed by $\pmb{\kappa}(t, k) = K_t + k \defeq 2^{t - 1} + k, \, t \in \mathbb{N}, \, k \in \{1, \ldots 2^{t-1}\}$. 

\vspace{3mm}

The following properties hold for $\pmb{\kappa}$: 
\begin{itemize}
    \item $\pmb{\kappa}(t, k + 1) = \pmb{\kappa}(t, k) + 1$
    \item $\pmb{\kappa}(t - 1, K_{t-1} + 1) = \pmb{\kappa}(t - 1, K_{t-1}) + 1 = \pmb{\kappa}(t, 1)$ (the `increment-by-one'  rule holds between the last iteration of epoch $t-1$ and the first iteration of epoch $t$)
\end{itemize}
In other words, $\pmb{\kappa}(t, k)$ returns for iteration $(t, k)$ its global index since the beginning of Algorithm~\ref{alg:homotopy-spider-fw}.

\vspace{5mm}
We use this new indexing scheme and its properties to rewrite relations~\ref{eq:var-upper-bd2}~and~\ref{eq:rec-at-edges} into a single, global inequality. Note that here $\pmb{\kappa}$ should be read as $\pmb{\kappa}(t, k)$, for some given, arbitrary $t, k$.
\begin{align} \Exp{S_{\beta_{\pmb{\kappa}} }( x_{\pmb{\kappa}+ 1}) }{} &\leq (1 - \gamma_{\pmb{\kappa}}) \Exp{S_{\beta_{\pmb{\kappa}- 1} }( x_{\pmb{\kappa}}) }{} + \gamma_{\pmb{\kappa}} \diam \Exp{\norm{\gradF{\beta_{\pmb{\kappa}}}{x_{\pmb{\kappa}}} - v_{\pmb{\kappa}}}}{} + \frac{\diam^2\gamma_{\pmb{\kappa}}^2}{2} \left( L_f + \frac{L_A}{\beta_{\pmb{\kappa}}} \right) \label{eq:var-upper-bd1}
\end{align}

\vspace{5mm}
Further replacing the parameter rates and the variance bound of Lemma~\ref{lem:spiderfw-finite-sum-var} (subject to Jensen's inequality):                      
\begin{align*}
\Exp{S_{\beta_{\pmb{\kappa}} }( x_{\pmb{\kappa}+ 1}) }{} &= \left(1 - \frac{2}{\pmb{\kappa}}\right) \Exp{S_{\beta_{\pmb{\kappa} - 1} }( x_{\pmb{\kappa}}) }{} +  \frac{2\diam^2 \sqrt{16L_f^2 + \frac{196L_A^2}{\beta_0^2}}}{\pmb{\kappa}\sqrt{\pmb{\kappa}}} + \frac{2\diam^2 L_f}{\pmb{\kappa}^2} +  \frac{2\diam^2 L_A}{\beta_0\pmb{\kappa}\sqrt{\pmb{\kappa}}} \\
        &\leq \left(1 - \frac{2}{\pmb{\kappa}}\right) \Exp{S_{\beta_{\pmb{\kappa} - 1} }( x_{\pmb{\kappa}}) }{} + \frac{1}{\pmb{\kappa}^{3/2}}\left(2\diam^2 L_f + 2\diam^2 \sqrt{16L_f^2 + \frac{196L_A^2}{\beta_0^2}} +  \frac{2\diam^2 L_A}{\beta_0}\right)
\end{align*}

\vspace{5mm}
We can now apply Lemma \ref{lemma: recursion}, with $\alpha = 1$, $\beta = 3/2$, $b = 2\diam^2 L_f + 2\diam^2 \sqrt{16L_f^2 + \frac{196L_A^2}{\beta_0^2}} +  \frac{2\diam^2 L_A}{\beta_0}$, $c = 2$, $k_0 = 0$ and \\\mbox{$C_3 = \max \left\{S_{\beta_{1, 0} }( x_{1, 1}), \; 2\diam^2 L_f + 2\diam^2 \sqrt{16L_f^2 + \frac{196L_A^2}{\beta_0^2}} +  \frac{2\diam^2 L_A}{\beta_0}  \right\}$} to get:
\begin{align*}
    &\Exp{\F{ \beta_{\pmb{\kappa}} }{ x_{\pmb{\kappa} + 1} } - \fopt}{} \leq \frac{C_3}{\sqrt{\pmb{\kappa} + 1}}\\[2mm]
    &\hspace{20mm}\scalebox{1.8}{$\Updownarrow$} \\[2mm]
    &\Exp{S_{ \beta_{t, k} }( x_{t, k + 1} )}{} \leq \frac{C_3}{\sqrt{K_t + k + 1}}
\end{align*}

\medskip

{\large\textbf{2. Convergence rates for the general expectation case}}

Following the same steps for the general expectation case, we get:

\begin{align*}
\Exp{S_{\beta_{\pmb{\kappa}} }( x_{\pmb{\kappa}+ 1}) }{} &= \left(1 - \frac{2}{\pmb{\kappa}}\right) \Exp{S_{\beta_{\pmb{\kappa} - 1} }( x_{\pmb{\kappa}}) }{} +  \frac{2\diam \sqrt{16L_f^2\diam^2 + 2L_A^2\diam^2 \left( \frac{98}{\beta_0^2} + 1\right) + 2\beta_0^2\sigma_f^2}}{\pmb{\kappa}\sqrt{\pmb{\kappa}} } + \frac{2\diam^2 L_f}{\pmb{\kappa}^2} +  \frac{2\diam^2 L_A}{\beta_0\pmb{\kappa}\sqrt{\pmb{\kappa}}} \\        
&\leq \left(1 - \frac{2}{\pmb{\kappa}}\right) \Exp{S_{\beta_{\pmb{\kappa} - 1} }( x_{\pmb{\kappa}}) }{} + \frac{1}{\pmb{\kappa}^{3/2}} \left( 2\diam^2 L_f +  \frac{2\diam^2 L_A}{\beta_0}  + 2\diam \sqrt{16L_f^2\diam^2 + 2L_A^2\diam^2 \left( \frac{98}{\beta_0^2} + 1\right) + 2\beta_0^2\sigma_f^2} \right)
\end{align*}

\vspace{5mm}
We can now apply Lemma \ref{lemma: recursion}, with $b =  2\diam^2 L_f +  \frac{2\diam^2 L_A}{\beta_0}  + 2\diam \sqrt{16L_f^2\diam^2 + 2L_A^2\diam^2 \left( \frac{98}{\beta_0^2} + 1\right) + 2\beta_0^2\sigma_f^2}$, $c = 2$, $\alpha = 1$, $\beta = 3/2$,  and \mbox{$C_4 = \max \left\{ S_{\beta_{1,0}}(x_{1, 1})  , \; 2\diam^2 L_f +  \frac{2\diam^2 L_A}{\beta_0}  + 2\diam \sqrt{16L_f^2\diam^2 + 2L_A^2\diam^2 \left( \frac{98}{\beta_0^2} + 1\right) + 2\beta_0^2\sigma_f^2}\right\}$} to get 
\begin{equation*}
    \Exp{S_{\beta_{t, k} }(x_{1, k + 1}) }{} \leq \frac{C_4}{\sqrt{K_t + k + 1}} \hfill \qed
\end{equation*}

\end{proof}

\vspace{5mm}

\begin{repcorollary}{cor:spiderfw}
The expected convergence in terms of objective suboptimality and feasibility of Algorithm~\ref{alg:homotopy-spider-fw} is, respectively, 
\begin{align*}
    &\mathbb{E}\left[ \lVert f(x_{t, k}) - \fopt \rVert \right] &&\hspace{-42mm}\in \bigO{(K_t + k)^{-1/2}}\\
    &\sqrt{\mathbb{E} \left[ \dist(\Astoc x_{t, k}, \bstoc)^2 \right] } &&\hspace{-42mm}\in \bigO{(K_t + k)^{-1/2}}
\end{align*} 
for both the finite-sum and the general expectation setting, up to constants. Consequently, the oracle complexity is given by $\ifo \in \bigO{n \log_2(\epsilon^-2)+ \epsilon^{-4}}$ and $\lmo \in \bigO{\epsilon^{-2}}$ for the finite-sum setting, and by $\sfo \in \bigO{\epsilon^{-4}}$ and $\lmo \in \bigO{\epsilon^{-2}}$ for the more general expectation setting.
\end{repcorollary}

\begin{proof}
    A simple application of Lemma~3.1 in \citep{fercoq2019almost} for the previously derived convergence bounds of the smoothed gap, along with our chosen decrease rate for $\beta$ yield the stated results.
    
    For the oracle complexities, we  choose a total number of outer loops $T_{\epsilon}$ in order to achieve a desired $\epsilon$-accuracy. 
\begin{align*}
 \frac{1}{\sqrt{K_t + k }} \leq \epsilon \implies \frac{1}{\epsilon^2} \leq K_t + k  \leq 2^{t} \implies T_{\epsilon} \geq \log_{2} \left( \epsilon^{-2}\right)\\
\end{align*}

We can now state the corresponding complexity in terms of $\#(ifo)$ and $\#(lmo)$ for the finite-sum case of Algorithm~\ref{alg:homotopy-spider-fw}:
\begin{align*}
    \#(ifo) &= \sum_{t = 1}^{T_{\epsilon}} \left( n + \sum_{k=2}^{K_{t}} K_t \right) \\
        &= \sum_{t = 1}^{T_{\epsilon}} \left( n + 2^{2(t-1)}\right)  \\
        &= nT_{\epsilon} + \bigO{2^{2T_{\epsilon}}} \in \bigO{\epsilon^{-4}} \\[3mm]
     \#(lmo) &= \sum_{t = 1}^{T_{\epsilon}} K_t \leq 2K_{T_{\epsilon}} = 2^{T_{\epsilon}} \in \bigO{\epsilon^{-2}}    \\ 
\end{align*}

\vspace{5mm}
For the general expectation case, following the same steps, we get:
\begin{align*}
    \#(sfo) &= \sum_{t = 1}^{T_{\epsilon}} \left( \lvert \mathcalorigin{Q}_t \rvert + \sum_{k=2}^{K_{t}} K_t \right) \\
        &= \sum_{t = 1}^{T_{\epsilon}} \left( \left\lceil \frac{2K_t}{\beta_{t, 1}^2}\right\rceil + 2^{2(t-1)}\right)  \\
        &\leq \sum_{t = 1}^{T_{\epsilon}} \left(  \frac{2K_t}{\beta_{t, 1}^2} + 1 + 2^{2(t-1)}\right)  \\
                &= \sum_{t = 1}^{T_{\epsilon}} \left(  \frac{2^{t} (2^{t-1} + 1)}{\beta_0^2} + 1 + 2^{2(t-1)}\right)  \\
                &= \underbrace{\frac{1}{\beta_0^2} \sum_{t = 1}^{T_{\epsilon}} 2^{2t-1}}_{ \substack{\in \bigO{2^{2T_{\epsilon}}} \\[2pt] \equiv \bigO{\epsilon^{-4}} } } + \underbrace{\frac{1}{\beta_0^2} \sum_{t = 1}^{T_{\epsilon}} 2^{t}}_{ \substack{\in \bigO{2^{T_{\epsilon}}}  \\[2pt] \equiv \bigO{\epsilon^{-2}} }} + \underbrace{T_{\epsilon}}_{\in \bigO{\log_2(\epsilon^{-2})}} + \underbrace{\sum_{t = 1}^{T_{\epsilon}} 2^{2(t-1)}}_{\substack{ \in \bigO{2^{2T_{\epsilon}}} \\[2pt] \equiv \bigO{\epsilon^{-4}}} } \\
         &\in \bigO{\epsilon^{-4}} \\[3mm]
     \#(lmo) &= \sum_{t = 1}^{T_{\epsilon}} K_t \leq 2K_{T_{\epsilon}} = 2^{T_{\epsilon}} \in \bigO{\epsilon^{-2}} \qed
\end{align*} 
\end{proof}

\end{document}